\newtheorem{lemma}{Lemma}
\newtheorem{proposition}{Proposition}
\newtheorem{corollary}{Corollary}
\newtheorem{theorem}{Theorem}
\newtheorem{definition}{Definition}
\def \yu#1{\todo[inline,color=blue!40]{#1}}
   \newcommand{\alltypes}{\ensuremath{T}}
   \newcommand{\classic}{\ensuremath{cla}}
   \newcommand{\supported}{\ensuremath{sup}}
   \newcommand{\stable}{\ensuremath{sta}}
   \newcommand{\atypeofmodel}{\ensuremath{w}}
   \newcommand{\false}{\mathit{false}}
   \newcommand{\true}{\mathit{true}}
\newcommand{\decisionsuperscript}{\Delta}
\newcommand{\decision}[1]{#1^{\decisionsuperscript}}
\newcommand{\opposite}[1]{\overline{#1}}
\newcommand{\terminalstate}[1]{Ok(#1)}
\newcommand{\failstate}{Failstate}
\newcommand{\transitionarrow}{\Longrightarrow}
\newcommand{\aprogram}{\mathit{\Pi}}
\newcommand{\anatom}{a}
\newcommand{\anotheratom}{b}
\newcommand{\athirdatom}{c}
\newcommand{\abody}{B}
\newcommand{\ahead}{A}
\newcommand{\aliteral}{l}
\newcommand{\anotherliteral}{r}
\newcommand{\astringofliterals}{L}
\newcommand{\anotherstringofliterals}{R}
\newcommand{\asetofatoms}{X}
\newcommand{\anothersetofatoms}{Y}
\newcommand{\asetofliterals}{M}
\newcommand{\amodel}{\asetofliterals}
\newcommand{\asetofmodels}{N}
\newcommand{\aclause}{C}
\newcommand{\aconjunction}{D}
\newcommand{\aformula}{F}
 \def\beq{\begin{equation}}
 \def\eeq#1{\label{#1}\end{equation}}
 \def\ba{\begin{array}}
 \def\ea{\end{array}}
\newcommand{\length}[1]{|#1|}
\newcommand{\logicaltrue}{\top}
\newcommand{\logicalfalse}{\bot}
\newcommand{\logicalor}{\vee}
\newcommand{\logicaland}{\wedge}
\newcommand{\logicalnot}{\neg}
\newcommand{\setintersection}{\cap}
\newcommand{\setunion}{\cup}
\newcommand{\aspnot}{not~}
\newcommand{\aspimplication}{\leftarrow}
\newcommand{\anindex}{i}
\newcommand{\anotherindex}{j}
\newcommand{\athirdindex}{k}
\newcommand{\afourthindex}{n}
\newcommand{\lexlower}{\leq_{lex}}
\newcommand{\npcomplexity}{\sc np}
\newcommand{\restriction}[2]{{#1}_{|{#2}}}
\newcommand{\disjunctionof}[1]{{#1}^{\logicalor}}
\newcommand{\bodies}[1]{Bodies(#1)}
\newcommand{\atoms}[1]{{atoms(#1)}}
\newcommand{\completion}[1]{\thecompletion(#1)}
\newcommand{\thecompletion}{comp}
\newcommand{\reduct}[2]{#1^{#2}}
\newcommand{\seenasamodel}[1]{#1}
\newcommand{\answersetof}[1]{#1^{+}}
\newcommand{\dependingprogram}[2]{\thedependingprogram(#1,#2)}
\newcommand{\thedependingprogram}{t}
\newcommand{\mainprogram}[1]{\themainprogram(#1)}
\newcommand{\themainprogram}{g}
\newcommand{\depth}[1]{v(#1)}
\newcommand{\auxvariable}[1]{\alpha_{#1}}
\newcommand{\agraph}{G}
\newcommand{\ordinarydp}[1]{DP_{#1}}
\newcommand{\ordinarydpt}[2]{DPT_{#1,#2}}
\newcommand{\firstgraph}[2]{DP^2_{#1,#2}}
\newcommand{\smgraph}[1]{SM^2_{#1}}
\newcommand{\dpgraph}[1]{DP^2_{#1}}
\newcommand{\unitdpll}{\hbox{$\mathit{Unit}$}}
\newcommand{\unitdpllshort}{\hbox{$\mathit{Unit}$}}
\newcommand{\decidedpll}{Decide}
\newcommand{\faildpll}{Conclude}
\newcommand{\backtrackdpll}{Backtrack}
\newcommand{\okruledpll}{Success}
\newcommand{\unfoundeddpll}{\mathit{Unfounded}}
\newcommand{\unitgen}{\hbox{$\mathit{Unit}$}}
\newcommand{\unitleft}{\unitgen_\leftstate}
\newcommand{\decideleft}{Decide_\leftstate}
\newcommand{\failleft}{Conclude_\leftstate}
\newcommand{\backtrackleft}{Backtrack_\leftstate}
\newcommand{\unitright}{\unitgen_\rightstate}
\newcommand{\decideright}{Decide_\rightstate}
\newcommand{\failright}{Conclude_\rightstate}
\newcommand{\backtrackright}{Backtrack_\rightstate}
\newcommand{\failcross}{Conclude_{\rightstate\leftstate}}
\newcommand{\backtrackcross}{Backtrack_{\rightstate\leftstate}}
\newcommand{\propagateleft}{\propagategen_\leftstate}
\newcommand{\propagateright}{\propagategen_\rightstate}
\newcommand{\propagategen}{Propagate}
\newcommand{\crossrule}{Cross_{\leftstate\rightstate}}
\newcommand{\propunitpropagate}[3]{\uniformpcondition{\theunit}{#3}{#1}{#2}}
\newcommand{\propallcancel}[3]{\uniformpcondition{\theallcancel}{#3}{#1}{#2}}
\newcommand{\propbacktrue}[3]{\uniformpcondition{\thebacktrue}{#3}{#1}{#2}}
\newcommand{\propunfounded}[3]{\uniformpcondition{\theunfounded}{#3}{#1}{#2}}
\newcommand{\theunit}{\hbox{$\mathit{UnitPropagate}$}}
\newcommand{\theallcancel}{AllRulesCancelled}
\newcommand{\thebacktrue}{BackchainTrue}
\newcommand{\theunfounded}{\mathit{Unfounded}}
\newcommand{\simpletemplate}[4]{STT^{#1,#3}_{#2,#4}}
\newcommand{\uniformpcondition}[4]{{#4}\in\setofuniformpcondition{#1}{#2}{#3}}
\newcommand{\setofuniformpcondition}[3]{#1(#2,#3)}
\newcommand{\acondition}[3]{\setofuniformpcondition{\thecondition}{#3}{#1}{#2}}
\newcommand{\outprop}[3]{{#1}(#2,#3)}
\newcommand{\thecondition}{p}
\newcommand{\asetofpropagators}{\mathcal{P}}
\newcommand{\leftstate}{\mathcal{L}}
\newcommand{\rightstate}{\mathcal{R}}
\newcommand{\aside}{s}
\newcommand{\smpropagators}{sm}
\newcommand{\smdisjpropagators}{sd}
\newcommand{\uppropagators}{up}
\newcommand{\markercmodels}{C}
\newcommand{\markergnt}{G}
\newcommand{\markerdlv}{D}
\newcommand{\genrighttest}{t}
\newcommand{\thetestcmodels}{\genrighttest^{\markercmodels}}
\newcommand{\testcmodels}[2]{\thetestcmodels(#1,#2)}
\newcommand{\thetestgnt}{\genrighttest^{\markergnt}}
\newcommand{\testgnt}[2]{\thetestgnt(#1,#2)}
\newcommand{\thetestdlv}{\genrighttest^{\markerdlv}}
\newcommand{\testdlv}[2]{\thetestdlv(#1,#2)}
\newcommand{\genleftgen}{g}
\newcommand{\gencmodels}[1]{\thegencmodels(#1)}
\newcommand{\thegencmodels}{\genleftgen^{\markercmodels}}
\newcommand{\gengnt}[1]{\thegengnt(#1)}
\newcommand{\thegengnt}{\genleftgen^{\markergnt}}
\newcommand{\thegendlv}{\genleftgen^{\markerdlv}}
\newcommand{\gendlv}[1]{\thegendlv(#1)}
\newcommand{\identity}{\thegendlv}
\newcommand{\thecnfcomp}{\mathit{cnfcomp}}
\newcommand{\smodels}{{\sc smodels}\xspace}
\newcommand{\cmodels}{{\sc cmodels}\xspace}
\newcommand{\clasp}{{\sc clasp}\xspace}
\newcommand{\wasp}{{\sc wasp}\xspace}
\newcommand{\dlv}{{\sc dlv}\xspace}
\newcommand{\gnt}{{\sc gnt}\xspace}
\newcommand{\dpll}{{\sc dpll}\xspace}
\begin{document}

 \submitted{9 April 2015}
 \revised{}
 \accepted{2 August 2015}

\title{Disjunctive
Answer Set Solvers via Templates
}
\author[R. Brochenin, Y. Lierler and M. Maratea]{
REMI BROCHENIN\\
University of Genova, Italy\\
\email{remi.brochenin@unige.it}\\
\and YULIYA LIERLER\\
University of Nebraska at Omaha\\
\email{ylierler@unomaha.edu}\\
\and MARCO MARATEA\\
University of Genova, Italy\\
\email{marco@dibris.unige.it} }
\maketitle

\begin{abstract}
Answer set programming  is a declarative programming paradigm oriented
towards difficult combinatorial search problems. A fundamental task in answer
set programming is to compute stable models, i.e., solutions of logic programs.
Answer set solvers are the programs that perform this task. The problem of
deciding whether a disjunctive program has a stable model is $\Sigma^P_2$-complete. 
The high complexity of reasoning within disjunctive logic programming   
is responsible for few solvers capable of dealing with such programs,
namely {\dlv}, {\gnt}, {\cmodels}, {\clasp} and {\wasp}. 
 In this paper we show that transition systems introduced by
Nieuwenhuis, Oliveras, and Tinelli to model and analyze satisfiability
solvers
can be adapted for disjunctive answer set solvers. Transition systems give a unifying  perspective and bring clarity
in the description and comparison of solvers. They can be effectively used for
analyzing, comparing and proving  correctness of search algorithms as
well as inspiring new ideas in the design of disjunctive answer set solvers. In this light, we introduce a general
template, which accounts for major techniques 
implemented in disjunctive
solvers. We then illustrate how this general template
captures solvers {\dlv}, {\gnt}, and {\cmodels}. 
We also show how this framework provides a convenient tool for designing new
solving algorithms by means of combinations of techniques employed in different
solvers. To appear in Theory and Practice of Logic
Programming (TPLP). 
\end{abstract}

\begin{keywords}
Answer Set Programming, Abstract Solvers
\end{keywords}

\section{Introduction}
Answer set programming~\cite{mar99,nie99,bar03,eite-etal-97f,gel88,gel91b} is
a declarative programming paradigm oriented towards difficult combinatorial search
  problems. The idea of answer set programming (ASP)
  is to represent a given problem with a logic program, whose answer sets
  correspond to solutions of the problem (see e.g.,~\citeNP{lif99a}). ASP has
  been applied to solve problems in various areas of science and technology including 
  graph-theoretic problems arising in zoology and
  linguistics~\cite{bro07}, team building problems in container
  terminal~\cite{ricca-etal-tplp-2012}, and product
  configuration tasks~\cite{soin-niem-99}.
  A fundamental task in ASP is to compute stable models of logic programs.
  Answer set solvers are the programs that perform this task.
  There were sixteen answer set solvers participating in the recent 
  Fifth Answer Set Programming
  Competition\footnote{
  \url{https://www.mat.unical.it/aspcomp2014/FrontPage#Participant_Teams}}.

Gelfond and Lifschitz  introduced
  logic programs with disjunctive rules~\cite{gel91b}.
  The problem of deciding whether a disjunctive program has a stable model is
  $\Sigma^P_2$-complete~\cite{eit93a}. 
  The problem of deciding  whether a non-disjunctive program has a stable model is
  NP-complete.
  The high complexity of reasoning within disjunctive logic programming  
  stems from two sources: first, there is a potentially exponential number of
  candidate models, and, second, the hardness of checking whether a
  candidate model is a stable model
  of a propositional disjunctive logic program is co-NP-complete. 
  Only five answer set systems 
  can solve disjunctive programs: {\dlv}~\cite{dlv03a}, {\gnt}~\cite{jan06}, {\cmodels}~\cite{lie05},
  {\clasp}~\cite{geb13a} and {\wasp}~\cite{dod13}.

Several formal approaches have been used to
  describe and compare search procedures implemented in answer set
  solvers. These approaches range from a pseudo-code
  representation of the procedures~\cite{giumar05,giu08}, to tableau
  calculi~\cite{gebsch06iclp,geb13}, to abstract frameworks via transition
  systems~\cite{lier08,lier10,lie11a}.
  The latter method originates from the work by
  \citeN{nie06}, where authors  propose to use transition systems to describe the 
  Davis-Putnam-Logemann-Loveland ({\dpll}) procedure~\cite{dav62}.   
  Nieuwenhuis et al. introduce an abstract framework called {\dpll} graph, that captures
  what states of computation are, and what transitions 
  between states are allowed.
  Every execution of the  {\dpll} procedure 
  corresponds to a path in the {\dpll} graph.
  Some edges may correspond to unit
  propagation steps, some to branching, some to backtracking.  

Such an abstract way of presenting  algorithms
  simplifies their analysis.
  This approach  has been adapted \cite{lier10,lie11a} to describing
  answer set solvers for {\em non-disjunctive} programs including
  {\smodels}, {\cmodels}, and {\clasp}.
  This type of graphs has been used to relate
  algorithms in precise mathematical terms.  Indeed, once we  represent
  algorithms via graphs, comparing the graphs translates into
  studying the relationships of underlying algorithms.
  More generally,
  the unifying perspective of transition systems
  brings clarity in the description and comparison of solvers.
  Practically, such graph representations may serve as 
  an effective tool for
  analyzing, comparing, proving correctness of, and reasoning formally about
  the underlying search algorithms. It may also inspire new ideas
  in the design of solvers.
  
In this paper we present transition
  systems that suit multiple {\em disjunctive} answer  set solvers. 
  We define a general framework,  a {\em graph template}, which accounts for
  major techniques implemented in disjunctive
  answer set solvers excluding backjumping and learning.
  We study  formal
  properties of this template and we use the template to describe {\gnt}, {\cmodels} and {\dlv} implementing plain
  backtracking.
We then show how a graph template facilitates a design of new solving algorithms by means of combinations of techniques employed in different solvers.
For instance, we present a new abstract solver 
that can be seen as a hybrid between {\cmodels} and {\gnt}.
  We also present how different solvers may be compared by means of transition systems. In particular, we illustrate a close 
  relationship between answer set solvers {\dlv} and {\cmodels} through
  the related graphs.
  The fact that proposed framework does not account for backjumping and learning is one of the reasons that prevents us from capturing such advanced disjunctive answer set solvers as {\clasp} and {\wasp}.
  It is a direction of future work to investigate how the proposed framework can be adjusted to accommodate these solvers in full generality.
The paper is structured as follows. Section~\ref{sec:prel} introduces
  required preliminaries. Section~\ref{sec:cm} presents a first
  abstract solver related to {\cmodels}.
  Section~\ref{sec:NewPresentation} defines our general template that  accounts
  for techniques implemented in disjunctive solvers, and
  Section~\ref{sec:applications} uses this template to define abstract frameworks for
  disjunctive solvers.
 Proofs are presented in Section~\ref{sec:proofs}.
  Section~\ref{sec:conclusion} discusses related work and concludes with the final remarks.


The current paper builds on the content presented by \citeN{blm14}. 
It enhances the earlier work by introducing notions of a graph template,
``propagator conditions'', and ``approximating pairs`` that allow to more uniformly account 
for major  techniques  implemented in disjunctive answer
        set solvers. Complete proofs of the formal results are also provided.

\section{Preliminaries}
\label{sec:prel}
\subsection{Formulas, Logic Programs, and Program's Completion}
\paragraph{Formulas.}
{\em Atoms} are Boolean
variables over $\{\true,\false\}$. The symbols $\logicalfalse$ and
$\logicaltrue$ are the $\false$ and the $\true$ constants, respectively.
The letter $\aliteral$ denotes a literal, that is an atom $\anatom$ or its
negation $\logicalnot \anatom$, and
$\opposite{\aliteral}$ is the complement of $\aliteral$, i.e.,
literal $\anatom$ for $\logicalnot \anatom$ and literal $\logicalnot \anatom$ for $\anatom$.
 Propositional formulas are logical expressions defined over
atoms and symbols $\logicalfalse$, $\logicaltrue$ in usual way.
A finite disjunction of literals
is a {\em clause}.
We identify an empty clause with the symbol $\logicalfalse$.
A conjunction (resp. a disjunction) of literals will sometimes be seen as a set,
containing each of its literals.
Since a clause is identified with a set of its literals, there are no
 repetition of literals in a clause.
A {\em CNF formula} is a finite conjunction (alternatively, a set) of
clauses.
Since a CNF formula is identified with a set of clauses, there are no
 repetition of clauses in a CNF formula.

For a conjunction (resp. a disjunction)
$\aconjunction$ of literals, by
$\opposite{\aconjunction}$ we denote the disjunction (resp. the conjunction) of
the complements of the elements of $\aconjunction$.
For example, $\opposite{\anatom\logicalor\logicalnot
 \anotheratom}$ denotes $\logicalnot \anatom \logicaland
\anotheratom$, while
$\opposite{\anatom\logicaland\logicalnot
 \anotheratom}$ denotes $\logicalnot \anatom \logicalor \anotheratom$.
For a set~$\astringofliterals$ of literals, by
$\astringofliterals^{\logicalor}$ we denote the disjunction of its elements and
$\astringofliterals^{\logicaland}$ the conjunction of its elements;
by
 $\atoms{\astringofliterals}$ we denote the set of atoms occurring in $\astringofliterals$.
For a set $\asetofmodels$ of sets of literals by
 $\atoms{\asetofmodels}$ we denote the set of atoms occurring in the elements of $\asetofmodels$.
 For example, $\atoms{\{\anatom,\logicalnot\anotheratom\}}=\{\anatom,\anotheratom\}$
 and $\atoms{\{\{\anatom\},\{\logicalnot\anotheratom\}\}}=\{\anatom,\anotheratom\}$.
 For a set~${\astringofliterals}$ of literals, by
$\answersetof{\astringofliterals}$ we denote atoms that occur positively
in ${\astringofliterals}$. For instance,
$\answersetof{\{\anatom,\logicalnot\anotheratom\}}=\{\anatom\}$.
For a set $\asetofatoms$ of atoms and a set~$\astringofliterals$ of literals, by
$\restriction{\astringofliterals}{\asetofatoms}$ we denote the maximal subset of
$\astringofliterals$ over $\asetofatoms$.
For example,
$\restriction{\{\anatom,\logicalnot\anotheratom,\athirdatom\}}{\{\anatom,\anotheratom\}}
=\{\anatom,\logicalnot\anotheratom\}$.

A {\em (truth) assignment} to a set $\asetofatoms$ of atoms is a function
 from $\asetofatoms$ to $\{\false,\true\}$.
 An assignment {\em satisfies} a formula $\aformula$ if $\aformula$
 evaluates to $\true$ under this assignment. We call an assignment that satisfies formula $F$
 a {\em satisfying assignment} or a
 {\em (classical) model} for
 $\aformula$.
If $\aformula$
 evaluates to $\false$ under an assignment, we say that this assignment {\em contradicts}
 $\aformula$. If $\aformula$ has no model we say that $\aformula$
is {\em unsatisfiable}.
For sets $\asetofatoms$ and~$\anothersetofatoms$ of atoms such that
$\asetofatoms\subseteq\anothersetofatoms$, we identify $\asetofatoms$
 with an
assignment to $\anothersetofatoms$ as follows:
if $\anatom\in \asetofatoms$ then $\anatom$ maps to $\true$, while
if $\anatom\in\anothersetofatoms\setminus \asetofatoms$ then $\anatom$ maps to
$\false$.
We also identify a consistent set $\astringofliterals$ of
literals (i.e., a set that does not contain both a literal and its complement)
with an assignment to $\atoms{\astringofliterals}$ as follows:
if $\anatom\in \astringofliterals$ then $\anatom$ maps to $\true$, while
if $\logicalnot\anatom\in \astringofliterals$ then $\anatom$ maps to
$\false$. The set $\asetofliterals$ is a complete set of literals over the set
of atoms $\asetofatoms$ if $\atoms{\asetofliterals}=\asetofatoms$;
hence a consistent and complete set of literals over $\asetofatoms$
represents an assignment to $\asetofatoms$.

%

\paragraph{Logic Programs.}
A {\em head} is a (possibly empty) disjunction of atoms.
A {\em body} is an expression of the form
\beq
\anatom_{1},\dots, \anatom_\anotherindex,
 \aspnot \anatom_{\anotherindex+1},\dots,\aspnot \anatom_\athirdindex
 \eeq{e:body}
where $\anatom_{1},\dots,\anatom_\athirdindex$ are atoms, and $\aspnot$ is the negation-as-failure operator.
We identify body~\eqref{e:body} with the following conjunction of literals
\[
\anatom_{1}\logicaland \dots\logicaland \anatom_\anotherindex\logicaland
 \logicalnot \anatom_{\anotherindex+1}\logicaland \dots\logicaland \logicalnot
 \anatom_\athirdindex.
\]
Expressions $\anatom_{1},\dots, \anatom_\anotherindex$ and $\aspnot \anatom_{\anotherindex+1},\dots,\aspnot \anatom_\athirdindex$
are called {\em positive} and {\em negative} parts of the body, respectively.
Recall that we sometimes view a conjunction of literals as a set containing all
of its literals.
Thus, given body $B$ we may write an expression $b\in B$, which
means that atom $b$ occurs in the positive part of the body.
Similarly, an expression $\neg b\in B$
 means that the atom $b$ (or, in other words, expression $\aspnot b$) occurs in the negative part of the body.

A {\em disjunctive rule} is an expression of the form
$\ahead\aspimplication\abody$, where $\ahead$ is a head and $\abody$ is a body.
If $\ahead$ is empty we drop it from the expression.
A \emph{disjunctive logic program} is a finite set of
 {\em disjunctive rules}.
We call a rule {\em non-disjunctive} if its head contains no
more than one atom. A program is
{\em non-disjunctive} if it consists of
non-disjunctive rules. By $\atoms{\aprogram}$ we denote the set of atoms occurring in a logic program $\aprogram$.
If we understand $\ahead\aspimplication \abody$ as a
classical logic implication, we can see any rule
$\ahead\aspimplication\abody$ as logically equivalent to clause
$\ahead\logicalor\opposite{\abody}$~(if $A$ is an empty clause then we view the rule as the clause
$\opposite{\abody}$).
This allows us to view a program $\aprogram$ as a CNF formula when useful.
Conversely, we identify CNF formulas with logic programs: syntactically,
every clause $\aclause$ in a given formula is seen as a rule
$\aspimplication\aclause$. For instance
$\anatom_1\logicalor\logicalnot\anatom_2$
is seen as a rule $\aspimplication \aspnot \anatom_1, \anatom_2$.

The presented definition of a logic program accounts for propositional programs only. Indeed, all modern disjunctive answer set solvers  consider propositional programs only. In practice, answer set programmers devise programs with variables. 
Software systems called  grounders~\cite{syrj-2001,PerriSCL07} are used to take a logic
program with variables as its input and produce a propositional
program as its output so that the resulting propositional program has the same answer sets as the input program.

\paragraph{Reduct and Supporting Rules.}
In the following definition we write rules in the form
$\ahead\aspimplication \abody_1,\abody_2$
where $\abody_1$ denotes the positive part of the body, whereas
$\abody_2$ denotes the negative part of the body.
The {\em reduct} $\reduct{\aprogram}{\asetofatoms}$ of a disjunctive program
$\aprogram$ with respect to a set $\asetofatoms$ of atoms is obtained from~$\aprogram$ by
deleting each rule~$\ahead\aspimplication \abody_1,\abody_2$
 such that $\asetofatoms\setintersection\atoms{\abody_2}\neq\emptyset$
and replacing each remaining rule~$\ahead\aspimplication \abody_1,\abody_2$ with
$\ahead\aspimplication \abody_1$.
A set~$X$ of atoms is an {\em answer set} of a program~$\Pi$ if~$X$ is minimal among the sets of atoms that satisfy $\Pi^X$.

For a program $\aprogram$, an atom $\anatom$, and a set $\astringofliterals$ of literals,
we call any rule $\ahead\logicalor\anatom\aspimplication\abody$ in $\aprogram$
a {\em supporting} rule for $\anatom$ with respect to $\astringofliterals$ when
$\astringofliterals\cap(\opposite{\abody}\cup{\ahead})=\emptyset$.

A consistent and complete set $\astringofliterals$ of literals over
$\atoms{\aprogram}$ is
\begin{enumerate}\itemsep0pt\parskip0pt\parsep0pt
\item a {\em classical model} of $\aprogram$ if $\astringofliterals$ satisfies every rule
      in $\aprogram$;
\item a {\em supported model} of $\aprogram$ if $\astringofliterals$
      is a classical model of $\aprogram$ and for every atom
      $\anatom\in\answersetof{\astringofliterals}$
      there is a supporting rule for $\anatom$ with respect to $\astringofliterals$;
\item
      a {\em stable model} of program $\aprogram$ if
$\answersetof{{\astringofliterals}}$
      is an answer set of $\Pi$.
\end{enumerate}

\paragraph{Completion.}
The {\em completion} $\completion{\aprogram}$ of a program $\aprogram$ is
the formula that consists of $\aprogram$ and the formulas 
\beq
\{ \logicalnot \anatom\logicalor
\bigvee_{\ahead\logicalor\anatom \aspimplication\abody\in\aprogram}
(\abody\logicaland\opposite{\ahead})\mid ~ \anatom\in\atoms\aprogram\}.
\eeq{eq:comp-second}
This formula has the property that any stable model of $\aprogram$
is a classical model of $\completion{\aprogram}$.
The converse
does not hold in general.


For a program $\aprogram$ and
a consistent set $\astringofliterals$ of literals
over $\atoms{\aprogram}$,
a set $\asetofatoms$ of atoms over $\atoms{\aprogram}$ is said to be \emph{unfounded}~\cite{leo97} on
$\astringofliterals$ with respect to the program $\aprogram$ when for each atom $\anatom\in \asetofatoms$ and
 each rule
$\ahead\aspimplication\abody\in\aprogram$ such that
$\anatom\in\ahead$, either of the following conditions hold
\begin{enumerate}\itemsep0pt\parskip0pt\parsep0pt
\item $\astringofliterals\cap\opposite{\abody}\neq\emptyset$,
\item $\asetofatoms\cap\abody\neq\emptyset$, or
\item $(\ahead\setminus\asetofatoms) \setintersection \astringofliterals\neq\emptyset$.
\end{enumerate}

We restate Theorem~4.6 from \citeN{leo97} that relates the notions of unfounded
set and stable model.
\begin{theorem}\label{thm:leone}
For a program~$\aprogram$ and
 a consistent and complete set $\astringofliterals$ of literals over
$\atoms{\aprogram}$, $\astringofliterals$ is a stable model of $\aprogram$ if
and only if $\astringofliterals$ is a classical model of $\aprogram$ and no
non-empty subset of $\answersetof{\astringofliterals}$ is an unfounded set
on~$\astringofliterals$ with respect to $\aprogram$.
\end{theorem}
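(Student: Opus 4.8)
The plan is to reduce the statement to the standard characterization of answer sets via minimal models of the reduct, and then match the three unfounded-set conditions to the three ways a strictly smaller model of the reduct can arise. Throughout, write $\asetofatoms = \answersetof{\astringofliterals}$ for the set of atoms true under the complete consistent assignment $\astringofliterals$, and recall that, by definition, $\astringofliterals$ is stable exactly when $\asetofatoms$ is minimal among the sets of atoms satisfying the reduct $\reduct{\aprogram}{\asetofatoms}$.

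First I would prove an auxiliary equivalence: $\astringofliterals$ is a classical model of $\aprogram$ if and only if $\asetofatoms$ is a (not necessarily minimal) model of $\reduct{\aprogram}{\asetofatoms}$. The point is that a rule $\ahead\aspimplication\abody_1,\abody_2$ survives into the reduct precisely when $\asetofatoms\setintersection\atoms{\abody_2}=\emptyset$, i.e. when its negative part is satisfied by $\astringofliterals$; for such a rule the body $\abody$ is true under $\astringofliterals$ iff its positive part $\abody_1$ is contained in $\asetofatoms$, and its head is true under $\astringofliterals$ iff $\ahead\setintersection\asetofatoms\neq\emptyset$. A rule whose negative part is falsified by $\astringofliterals$ is automatically satisfied and contributes nothing to the reduct. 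A rule-by-rule check then gives the equivalence in both directions, letting me replace ``classical model of $\aprogram$'' by ``model of $\reduct{\aprogram}{\asetofatoms}$'' in the remainder.

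For the forward direction I assume $\astringofliterals$ is stable, so $\asetofatoms$ is a minimal model of $\reduct{\aprogram}{\asetofatoms}$, and the auxiliary equivalence already yields that $\astringofliterals$ is a classical model. It remains to show no nonempty $U\subseteq\asetofatoms$ is unfounded on $\astringofliterals$, which I do contrapositively: from a nonempty unfounded $U$ I form the strictly smaller set $\anothersetofatoms=\asetofatoms\setminus U$ and show it still satisfies $\reduct{\aprogram}{\asetofatoms}$, contradicting minimality. Concretely, take a reduct rule with positive part inside $\anothersetofatoms$ and suppose its head misses $\anothersetofatoms$; since $\asetofatoms$ is a model, some head atom $\anatom$ lies in $\asetofatoms$, hence in $U$. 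Applying unfoundedness of $U$ to $\anatom$ and this rule, the body being true under $\astringofliterals$ rules out condition~1, the positive part lying in $\anothersetofatoms$ rules out condition~2, and a head atom justifying outside $U$ would lie in $\anothersetofatoms$ and so rules out condition~3 --- all three fail, the desired contradiction. The backward direction is the mirror image: assuming $\astringofliterals$ is a classical model with no nonempty unfounded subset of $\asetofatoms$, the auxiliary equivalence gives that $\asetofatoms$ is a model of $\reduct{\aprogram}{\asetofatoms}$, and I show minimality contrapositively, turning a proper submodel $\anothersetofatoms\subsetneq\asetofatoms$ into the set $U=\asetofatoms\setminus\anothersetofatoms$ and verifying directly, by a case split on whether each rule survives the reduct and whether its positive part sits in $\anothersetofatoms$, that $U$ meets condition~1, ~2, or ~3 for every $\anatom\in U$ and every rule with $\anatom$ in the head.

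The main obstacle I anticipate is not any single step but the precise bookkeeping that makes the two case analyses exact inverses of each other: keeping the reduct fixed with respect to $\asetofatoms$ while varying the candidate model, correctly translating between the literal-level conditions (via $\opposite{\abody}$ and the split of $\abody$ into positive and negative parts) and the atom-level membership tests, and matching each of the three unfounded-set conditions to exactly one way a head atom can be ``justified elsewhere.'' Verifying that conditions~1--3 are jointly exhaustive and individually forced in each branch, with no gap, is where the argument must be handled most carefully.
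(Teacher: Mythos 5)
Your proof is correct, but there is nothing in the paper to compare it against: the paper does not prove this statement at all, it imports it verbatim as Theorem~4.6 of \citeN{leo97} (``We restate Theorem~4.6 from \ldots''). What you have produced is therefore a genuinely independent, self-contained derivation from the paper's own definitions, and it checks out. The auxiliary equivalence ($\astringofliterals$ is a classical model of $\aprogram$ iff $\answersetof{\astringofliterals}$ satisfies the reduct $\reduct{\aprogram}{\answersetof{\astringofliterals}}$, since deleted rules have false bodies under $\astringofliterals$ and surviving rules behave identically on $\answersetof{\astringofliterals}$) is sound, and the two contrapositive case analyses are exact: in the forward direction, for a surviving rule with positive body inside $Y=\answersetof{\astringofliterals}\setminus U$ and head missing $Y$, the chosen head atom lies in $U$, the true body kills condition~1, the positive part avoiding $U$ kills condition~2, and any witness to condition~3 would place a head atom back in $Y$; in the backward direction, with $U=\answersetof{\astringofliterals}\setminus Y$ for a proper submodel $Y$ of the reduct, the cases (rule deleted by the reduct, giving condition~1; surviving rule whose positive part escapes $Y$, giving condition~1 or condition~2 according to whether the escaping atom leaves $\answersetof{\astringofliterals}$ or lands in $U$; surviving rule whose positive part sits in $Y$, giving condition~3 because $Y$ models the reduct rule) are exhaustive. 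One small presentational caveat: your backward case split as written (``survives the reduct'' and ``positive part sits in $Y$'') silently merges the two sub-cases needed when the positive part escapes $Y$, so spell that out in a full write-up. Compared with the original proof in \citeN{leo97}, which is carried out in a richer framework of partial interpretations and fixpoint/greatest-unfounded-set machinery, your argument buys an elementary proof entirely within the two-valued, reduct-based setting this paper uses, at the harmless cost of covering only total (consistent and complete) $\astringofliterals$, which is all the statement here requires.
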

This theorem is crucial for understanding key
computational ideas behind modern answer set solvers.


\subsection{Abstract \dpll}
The Davis--Putnam--Logemann--Loveland ({\dpll}) algorithm from~\citeN{dav62} is
a well-known method that exhaustively explores sets of literals to generate classical
models of a propositional formula.
Most satisfiability and non-disjunctive answer set solvers are based on variations of
the {\dpll} procedure that is a classical backtrack search-based algorithm.
We now review the abstract transition system for {\dpll} proposed by
\citeN{nie06}, which is an
alternative to common pseudo-code descriptions of backtrack search-based algorithms.
For our purposes it is convenient to state {\dpll} as the procedure applied to a logic program
in order to find its classical models.



For a set $\asetofatoms$ of atoms,
a \emph{record} relative to~$\asetofatoms$ is
a string $\astringofliterals$ composed of literals over $\asetofatoms$ or the
symbol~$\logicalfalse$ so that there are no repetitions, and some
literals $\aliteral$ may be annotated as $\aliteral^{\decisionsuperscript}$.
The annotated literals are called \emph{decision} literals.
Figure~\ref{figure:basicstates} presents the set of all records relative
to the singleton set~$\{\anatom\}$.
We say that a record $\astringofliterals$ is {\em inconsistent} if it
contains both a literal $\aliteral$ and its complement $\opposite{\aliteral}$, or
if it contains~$\logicalfalse$, and {\em consistent} otherwise.
For instance,
only five records in Figure~\ref{figure:basicstates}, namely $\emptyset$,
$\anatom$, $\logicalnot \anatom$, $\decision{\anatom}$ and
$\decision{\logicalnot\anatom}$, are consistent.
We will sometime
view a record as the set containing
all its elements disregarding their annotations. For example,
a record $\decision{\anotheratom}~\logicalnot \anatom$ is identified with the set
$\{\logicalnot \anatom,\anotheratom\}$.
A \emph{basic state} relative to~$\asetofatoms$
is either \begin{enumerate}\itemsep0pt\parskip0pt\parsep0pt
\item a record relative to~$\asetofatoms$,
\item $\terminalstate{\astringofliterals}$ where $\astringofliterals$ is a
      record relative to~$\asetofatoms$, or
\item the distinguished
      state ${\failstate}$.
\end{enumerate}

\begin{figure}
{\footnotesize
$$
\begin{array}{c}
\emptyset,\ \ \bot,\ \
\anatom,\ \ \logicalnot \anatom, \ \ \decision{\anatom},\ \
  \decision{\logicalnot\anatom},\ \
\anatom~\bot,\ \ \bot~\anatom,\ \ \decision{\anatom}~\bot,\ \
  \bot~\decision{\anatom},\ \
\logicalnot\anatom~\bot,\ \ \bot~\logicalnot\anatom,\ \
  \decision{\logicalnot\anatom}~\bot,\ \ \bot~\decision{\logicalnot\anatom},\ \
\\
\anatom~\logicalnot\anatom,\ \
  \decision{\anatom}~\logicalnot\anatom,\ \
  \anatom~\decision{\logicalnot\anatom},\ \
  \decision{\anatom}~\decision{\logicalnot\anatom},\ \
\bot~\anatom~\logicalnot\anatom,\ \
  \bot~\decision{\anatom}~\logicalnot\anatom,\ \
  \bot~\anatom~\decision{\logicalnot\anatom},\ \
  \bot~\decision{\anatom}~\decision{\logicalnot\anatom},\ \
  \\
\anatom~\logicalnot\anatom~\bot,\ \
  \decision{\anatom}~\logicalnot\anatom~\bot,\ \
  \anatom~\decision{\logicalnot\anatom}~\bot,\ \
  \decision{\anatom}~\decision{\logicalnot\anatom}~\bot,\ \
\anatom~\bot~\logicalnot\anatom,\ \
  \decision{\anatom}~\bot~\logicalnot\anatom,\ \
  \anatom~\bot~\decision{\logicalnot\anatom},\ \
  \decision{\anatom}~\bot~\decision{\logicalnot\anatom},\ \
\\
\logicalnot\anatom~\anatom,\ \
  \decision{\logicalnot\anatom}~\anatom,\ \
  \logicalnot\anatom~\decision{\anatom},\ \
  \decision{\logicalnot\anatom}~\decision{\anatom},\ \
\bot~\logicalnot\anatom~\anatom,\ \
  \bot~\decision{\logicalnot\anatom}~\anatom,\ \
  \bot~\logicalnot\anatom~\decision{\anatom},\ \
  \bot~\decision{\logicalnot\anatom}~\decision{\anatom},\ \
  \\
\logicalnot\anatom~\anatom~\bot,\ \
  \decision{\logicalnot\anatom}~\anatom~\bot,\ \
  \logicalnot\anatom~\decision{\anatom}~\bot,\ \
  \decision{\logicalnot\anatom}~\decision{\anatom}~\bot,\ \
\logicalnot\anatom~\bot~\anatom,\ \
  \decision{\logicalnot\anatom}~\bot~\anatom,\ \
  \logicalnot\anatom~\bot~\decision{\anatom},\ \
  \decision{\logicalnot\anatom}~\bot~\decision{\anatom}.\ \
\end{array}
$$
}\normalsize
\caption{Records relative to $\{\anatom\}$.}\label{figure:basicstates}
\end{figure}


Each program $\aprogram$ determines its
\emph{\dpll graph} $\ordinarydp{\aprogram}$.
The set of nodes of $\ordinarydp{\aprogram}$ consists of the basic states
relative to~$\atoms{\aprogram}$.
A node in the graph is \emph{terminal} if no edge originates from
it. The state $\emptyset$ is called {\em initial}.
The edges of the graph $\ordinarydp{\aprogram}$ are
specified by the transition rules presented in Figure~\ref{figure:ordinarydp}.

Intuitively, every state of the {\dpll graph} represents some hypothetical
state of the {\dpll} computation whereas a path in the graph
is a description of a process of search for a classical model of a given
program.
The rule $\unitdpll$ asserts that we can add a literal that is a logical
consequence of our previous decisions and the given program.
The rule $\decidedpll$ asserts that we make an arbitrary decision to add a
literal or, in other words, to assign a value to an atom.
Since this decision is arbitrary, we are allowed to backtrack at a later point.
The rule $\backtrackdpll$ asserts that the present state of computation is inconsistent but can
be fixed: at some point in the past we added a decision literal
whose value we can now reverse.
The rule $\faildpll$ asserts that the current state of computation has failed and cannot
be fixed.
The rule $\okruledpll$ asserts that the current state of computation
corresponds to a successful outcome.

We say that a graph $\agraph$ {\em checks} a set $\asetofmodels$ of sets of literals
when all the following conditions hold:
\begin{enumerate}\itemsep0pt\parskip0pt\parsep0pt
  \item $\agraph$ is finite and acyclic;
  \item Any terminal state in $\agraph$ is either $\failstate$ or of the form
        $\terminalstate{\astringofliterals}$;
  \item If a state $\terminalstate{\astringofliterals}$ is reachable from
        the initial state in $\agraph$ then
        $\restriction{\astringofliterals}{\atoms{\asetofmodels}}\in\asetofmodels$;
  \item $\failstate$ is reachable from the initial state in $\agraph$
        if and only if $\asetofmodels$ is empty.

\end{enumerate}

\begin{figure}[t]
\footnotesize{
$$
\begin{array}{llll}
\faildpll:&
~~~~~~~  \astringofliterals
    \transitionarrow \failstate &
      \hbox{if~}
        \left\{
          \begin{array}{l}
            \astringofliterals \textrm{ is inconsistent }
            \textrm{ and }\\
            \astringofliterals \textrm{ contains no decision literals }
          \end{array}
        \right. \\ \\
\backtrackdpll:&
~~~~~~~  \astringofliterals\decision{\aliteral}\astringofliterals'
    \transitionarrow\astringofliterals\opposite{\aliteral} &
      \hbox{if~}
        \left\{
          \begin{array}{l}
            \astringofliterals\decision{\aliteral}\astringofliterals'
            \textrm{ is inconsistent }
            \textrm{ and }\\
            \astringofliterals'
            \textrm{ contains no  decision literals }
          \end{array}
        \right.\\ \\
\unitdpll:\hspace{-5pt} &
~~~~~~~  \astringofliterals
    \transitionarrow \astringofliterals \aliteral &
      \hbox{if~}
         \left\{
           \begin{array}{l}
             \aliteral\textrm{ does not occur in }\astringofliterals
             \textrm{ and }\\
             \textrm{a rule in }\aprogram
                \textrm{ is equivalent to }
                \aclause \logicalor \aliteral
             \textrm{ and}\\
             \textrm{all the literals of } \opposite{\aclause}
               \textrm{ occur in } \astringofliterals
           \end{array}
         \right.\\ \\
\decidedpll: &
~~~~~~~  \astringofliterals
    \transitionarrow\astringofliterals\decision{\aliteral} &
      \hbox{if~}
        \left\{
          \begin{array}{l}
            \astringofliterals \textrm{ is consistent and }\\
            \textrm{neither } \aliteral \textrm{ nor } \opposite{\aliteral}
              \textrm{ occur in } \astringofliterals
          \end{array}
        \right.\\ \\
\okruledpll:&
~~~~~~~  \astringofliterals
    \transitionarrow\terminalstate{\astringofliterals} &
      \hbox{if no other rule applies }\\
\end{array}
$$
}
\normalsize
\caption{Transitions of the graph
$\ordinarydp{\aprogram}$.}\label{figure:ordinarydp}
\end{figure}

\begin{proposition}~\label{prop:dp}
For any program $\aprogram$, the graph $\ordinarydp{\aprogram}$ checks
the classical models of $\aprogram$.
\end{proposition}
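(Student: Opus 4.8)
\noindent
Write $\asetofmodels$ for the set of classical models of $\aprogram$, so that $\atoms{\asetofmodels}=\atoms{\aprogram}$ whenever $\asetofmodels$ is nonempty. The plan is to verify the four defining conditions of ``$\ordinarydp{\aprogram}$ checks $\asetofmodels$'' one by one, isolating a single soundness invariant that does the real work for the last one.

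First I would dispatch finiteness and the shape of the terminal states. Finiteness is immediate: a record is a repetition-free string over the finitely many literals of $\atoms{\aprogram}$ together with at most one $\logicalfalse$, so there are finitely many nodes. For acyclicity, observe that $\failstate$ and every $\terminalstate{\astringofliterals}$ are sinks (no transition rule fires on them), so any cycle would consist only of records joined by $\unitdpll$, $\decidedpll$ and $\backtrackdpll$ edges. I would rule this out with a lexicographic measure: with $n=\length{\atoms{\aprogram}}$, assign to each position $p\in\{1,\dots,2n{+}1\}$ of a record the value $2$ if the record is too short to have a $p$-th symbol, $1$ if that symbol is a decision literal, and $0$ otherwise, and compare the resulting tuples lexicographically. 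Then $\unitdpll$ and $\decidedpll$ strictly lower the first previously-empty coordinate (from $2$ to $0$ or $1$), while $\backtrackdpll$, which shares the prefix before the flipped decision literal, strictly lowers that literal's coordinate from $1$ to $0$; in each case this is the first coordinate at which source and target differ, so the measure drops. Since the codomain is finite and totally ordered, no cycle exists. Condition~2 is then easy: $\okruledpll$ is a catch-all, so every record has an outgoing edge and cannot be terminal, leaving $\failstate$ and the $\terminalstate{\astringofliterals}$ as the only sinks.

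Next I would prove Condition~3. If $\terminalstate{\astringofliterals}$ is reachable, its incoming edge is an $\okruledpll$ step, so \emph{no} other rule applies to $\astringofliterals$. From this I read off three facts in turn: $\astringofliterals$ is consistent (otherwise $\faildpll$ or, in the presence of a decision literal, $\backtrackdpll$ applies, since the last decision literal has no decision literals after it); $\astringofliterals$ is complete over $\atoms{\aprogram}$ (otherwise $\decidedpll$ applies to an unassigned atom); and $\astringofliterals$ satisfies every rule (otherwise some rule, viewed as a clause $\aclause$, is falsified, and picking any literal $\aliteral\in\aclause$ yields a $\unitdpll$ step, since all of $\opposite{\aclause\setminus\{\aliteral\}}$ lie in $\astringofliterals$ while $\aliteral$ does not). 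Hence $\astringofliterals$ is a consistent, complete, rule-satisfying set of literals over $\atoms{\aprogram}$, i.e.\ a classical model, so $\restriction{\astringofliterals}{\atoms{\asetofmodels}}=\astringofliterals\in\asetofmodels$.

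The heart of the argument is Condition~4, for which I would establish the following invariant by induction on the length of a path of records from $\emptyset$: for every reachable record $\astringofliterals$ and every non-decision literal $\aliteral$ of $\astringofliterals$, $\aliteral$ is entailed by $\aprogram$ together with the decision literals preceding $\aliteral$ in $\astringofliterals$. The $\decidedpll$ case is vacuous and the $\unitdpll$ case is direct from the defining clause $\aclause\logicalor\aliteral$; the delicate case---and the step I expect to be the main obstacle---is $\backtrackdpll$, where the flipped literal $\opposite{\aliteral}$ must be shown entailed by $\aprogram$ and the decisions surviving the backtrack. For this I would argue by contradiction: a classical model containing those surviving decisions together with $\aliteral$ would, by the induction hypothesis applied to the inconsistent pre-backtrack record, have to contain all of that record's literals, which is impossible. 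Granting the invariant, both directions of Condition~4 follow: if $\failstate$ is reachable, the preceding $\faildpll$ step acts on an inconsistent, decision-free record, all of whose literals are then outright consequences of $\aprogram$, forcing $\aprogram$ to be unsatisfiable, i.e.\ $\asetofmodels=\emptyset$; conversely, if $\asetofmodels=\emptyset$ then by Conditions~1 and~2 any maximal path from $\emptyset$ ends at a sink, which cannot be any $\terminalstate{\astringofliterals}$ by Condition~3, hence is $\failstate$.
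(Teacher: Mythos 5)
Your proposal is correct and takes essentially the same route as the paper: the paper itself only says Proposition~\ref{prop:dp} follows the proof of Theorem~2.13 of \citeN{nie06}, and the proofs it does spell out for the generalizations (Lemma~\ref{lemma:finite}, Lemma~\ref{lemma:general}, Lemma~\ref{lemma:structure}, assembled in Theorem~\ref{STS-correctness}) use exactly your three ingredients --- a lexicographic well-founded measure for finiteness and acyclicity, a case analysis on which rules are inapplicable at a terminal state, and the invariant that every non-decision literal is entailed by the program together with the preceding decision literals, proved by induction on path length with the backtracking case handled by contraposition. The only cosmetic difference is your positional encoding of the acyclicity measure versus the paper's sequence of segment lengths between decision literals, which are interchangeable.
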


Thus, to decide the satisfiability of a program~$\aprogram$ it is enough
to find a path leading from node $\emptyset$ to a terminal node.
 If it is
$\failstate$, then $\aprogram$ has no classical models. Otherwise, $\aprogram$ has classical models.
For instance, let $\aprogram_1$ be
\[
\ba{l}
\aspimplication\aspnot\anatom,~
\aspnot\anotheratom\\ \aspimplication \anatom,~ \aspnot \athirdatom.
\ea
\]
Figure~\ref{figure:transitionexample} presents two
paths in $\ordinarydp{\aprogram_1}$ from the node $\emptyset$ to the node
$\terminalstate{\decision{\anatom}~\athirdatom~\decision{\anotheratom}}$.
Every edge is annotated on the left by
the name of the transition rule that gives rise to this edge in $\ordinarydp{\aprogram_1}$.
The node
$\terminalstate{\decision{\anatom}~\athirdatom~\decision{\anotheratom}}$ is
terminal. Thus, Proposition~\ref{prop:dp} asserts that $\aprogram_1$ is
satisfiable and $\seenasamodel{\{\anatom,\athirdatom,\anotheratom\}}$ is a
classical model of~$\aprogram_1$.

\begin{figure}
\footnotesize{
$$
\begin{array}{c|c}
\begin{array}{lll}
\textrm{Initial state :}
   &
   & \emptyset
   \\
\decidedpll
   & \transitionarrow
   & \decision{\anatom}
   \\
\unitdpllshort
   & \transitionarrow
   & \decision{\anatom}~\athirdatom
   \\
\decidedpll
   & \transitionarrow
   & \decision{\anatom}~\athirdatom~\decision{\anotheratom}
   \\
\okruledpll
   & \transitionarrow
   & \terminalstate{\decision{\anatom}~\athirdatom~\decision{\anotheratom}}

\end{array}\label{eq:path1}
&
\begin{array}{lll}
\textrm{Initial state :}
   &
   & \emptyset
   \\
\decidedpll
   & \transitionarrow
   & \decision{\anatom}
   \\
\decidedpll
   & \transitionarrow
   & \decision{\anatom}~\decision{\logicalnot\athirdatom}
   \\
\unitdpllshort
   & \transitionarrow
   & \decision{\anatom}~\decision{\logicalnot\athirdatom}~\athirdatom
   \\
\backtrackdpll
   & \transitionarrow
   & \decision{\anatom}~\athirdatom
   \\
\decidedpll
   & \transitionarrow
   & \decision{\anatom}~\athirdatom~\decision{\anotheratom}
   \\
\okruledpll
   & \transitionarrow
   & \terminalstate{\decision{\anatom}~\athirdatom~\decision{\anotheratom}}

\end{array}\label{eq:path2}
\end{array}
$$
}
\normalsize
\caption{Examples of paths in
$\ordinarydp{{\{\aspimplication\aspnot\anatom,~\aspnot\anotheratom;~~
\aspimplication\anatom,~\aspnot\athirdatom\}}}$.}
\label{figure:transitionexample}
\end{figure}

A path in the graph $\ordinarydp{\aprogram}$
is a description of a process of search for a classical model
of a program $\aprogram$. The process is captured via applications of
transition rules.
Therefore, we can characterize the algorithm
of a solver that utilizes the transition rules of
$\ordinarydp{\aprogram}$
by describing a strategy for choosing a path.
A strategy can be based on assigning priorities to
transition rules of $\ordinarydp{\aprogram}$
so that a solver never applies a rule in a node
if a rule with higher priority is applicable
to the same node. The {\dpll} procedure is captured by the
priorities ordered as we stated rules in Figure~\ref{figure:ordinarydp}.
For instance, transition rule~$\faildpll$ has the highest priority.
In Figure~\ref{figure:transitionexample}, the path on the left complies with the
{\dpll} priorities: Thus, it corresponds to an execution of the {\dpll}
procedure. The path on the right does not: it uses $\decidedpll$ when $\unitdpll$ is applicable.
The proof of Proposition~\ref{prop:dp} follows the lines of the proof of
Theorem 2.13 in \citeN{nie06}\footnote{This
work defines a different \dpll graph, avoiding the reference to the transition
rule $\okruledpll$.
The presence of this rule in this presentation is important for the
generalizations of the \dpll graph we introduce in the sequel.}.

\paragraph{Abstract Answer Set Solver for Non-disjunctive Programs.}
\citeN{lier10} illustrated that extending~$\ordinarydp{\aprogram}$ by
a transition rule \footnotesize{
$$
\begin{array}{llll}
\unfoundeddpll:\hspace{-5pt} &
~~~~~~~  \astringofliterals
    \transitionarrow \astringofliterals~ \logicalnot\anatom &
      \hbox{if~}
         \left\{
         \begin{array}{l}
          \logicalnot\anatom\textrm{ does not occur in }\astringofliterals
                        \textrm{ and}\\

          \astringofliterals \textrm{ is consistent}
              \textrm{ and }\\
            \textrm{there is a set } \asetofatoms \textrm{ of atoms
               containing } \anatom \textrm{ such that }\\
            \asetofatoms \textrm{ is unfounded on }
              \astringofliterals \textrm{ w.r.t. } \aprogram
              \end{array}
         \right.\\ \\
\end{array}
$$
}
\normalsize

\noindent
captures a backtrack-search procedure for finding answer sets of
non-disjunctive programs. Many answer set solvers for such programs can
be seen as extensions of this procedure~\cite{lie11a}.

\section{A Two-Layer Abstract Solver}
\label{sec:cm}
%
%



The problem of
deciding whether a disjunctive program has a stable model is
$\Sigma^P_2$-complete~\cite{eit93a}.
This translates into the following:
~(i)~there is an exponential number of
possible candidate models, and~(ii)~the problem of
deciding whether a
candidate model is an answer set
 of a disjunctive logic program is co-NP-complete.
The latter condition differentiates algorithms of answer set solvers for
disjunctive programs from the procedures for non-disjunctive programs.
Indeed, the problem of
deciding whether a
candidate model is an answer set
 of a non-disjunctive program is tractable.

A common architecture of a disjunctive answer set solver is composed of two layers corresponding to the two above conditions: a generate layer and a test
layer, each typically based on {\dpll}-like procedures. In particular:
\begin{itemize}\itemsep0pt\parskip0pt\parsep0pt
 \item 
  The generate layer is used to
  obtain a set of candidates that are potentially stable models.
 \item 
  The test layer is used to verify whether a candidate (produced by the generate layer) is a stable model of
  the given program.
\end{itemize}

We now proceed to present a graph $\firstgraph{\themainprogram}{\thedependingprogram}(\aprogram)$
that captures such
two-layer architecture.
It is based on instances of the {\dpll} procedure
for both its generating task and its testing task.
We then illustrate how the
$\firstgraph{\themainprogram}{\thedependingprogram}(\aprogram)$ transition system can
be used to capture the disjunctive answer set solver {\cmodels} in its basic
form.

\subsection{A Two-Layer Abstract Solver via \dpll}

\begin{figure}[t]
\footnotesize{
$$
\arraycolsep=2pt
\begin{array}{llll}
\multicolumn{4}{l}{\textrm{Left-rules}}\\
\failleft
  & (\astringofliterals,
    \emptyset)_\leftstate
     & \transitionarrow \failstate
        & \textrm{if}\left\{
          \begin{array}{l}
          \astringofliterals \textrm{ is inconsistent}
                       \textrm{ and}\\
          \astringofliterals\textrm{ contains no decision literal}
          \end{array}
          \right.\\ \\

\backtrackleft
  & (\astringofliterals\decision{\aliteral}\astringofliterals',
     \emptyset)_\leftstate
     & \transitionarrow
       (\astringofliterals\opposite{\aliteral},
       \emptyset)_\leftstate
        & \textrm{if}\left\{
          \begin{array}{l}
          \astringofliterals\decision{\aliteral}\astringofliterals'
                        \textrm{ is inconsistent}
                        \textrm{ and}\\
          \astringofliterals'\textrm{ contains no decision literal}
          \end{array}
          \right.\\ \\

\unitleft
  & (\astringofliterals,
    \emptyset)_\leftstate
     & \transitionarrow
       (\astringofliterals\aliteral,
       \emptyset)_\leftstate
        & \textrm{if}\left\{
          \begin{array}{l}
          \aliteral\textrm{ is a literal over }
                        \atoms{\mainprogram{\aprogram}}
                        \textrm{ and}\\
          \aliteral\textrm{ does not occur in }\astringofliterals
                        \textrm{ and}\\
          \textrm{a rule in }\mainprogram{\aprogram}
                        \textrm{ is equivalent to }
                        \aclause \logicalor \aliteral
                        \textrm{ and}\\
          \textrm{all the literals of } \opposite{\aclause}
                        \textrm{ occur in } \astringofliterals
          \end{array}
          \right.\\ \\

\decideleft
  & (\astringofliterals,
    \emptyset)_\leftstate
     & \transitionarrow
       (\astringofliterals\decision{\aliteral},
       \emptyset)_\leftstate
        & \textrm{if}\left\{
          \begin{array}{l}
          \astringofliterals \textrm{ is consistent}
                       \textrm{ and}\\
          \aliteral\textrm{ is a literal over }\atoms{\mainprogram{\aprogram}}
                       \textrm{ and}\\
          \textrm{neither }\aliteral\textrm{ nor }
                       \opposite{\aliteral}
                       \textrm{ occur in }\astringofliterals
          \end{array}
          \right.\\
\\

\multicolumn{4}{l}{\textrm{Crossing-rule } \leftstate\rightstate}\\
\crossrule
  & (\astringofliterals,
     \emptyset)_\leftstate
     & \transitionarrow
       (\astringofliterals,
       \emptyset)_\rightstate
        & \textrm{if}\left\{
          \begin{array}{l}
          \textrm{no left-rule applies}
          \end{array}
          \right.\\ \\
\\

\multicolumn{4}{l}{\textrm{Right-rules}}\\
\failright
  & (\astringofliterals,
     \anotherstringofliterals)_\rightstate
     & \transitionarrow \terminalstate{\astringofliterals}
        & \textrm{if}\left\{
          \begin{array}{l}
          \anotherstringofliterals \textrm{ is inconsistent}
                       \textrm{ and}\\
          \anotherstringofliterals
                       \textrm{ contains no decision literal}
          \end{array}
          \right.\\ \\

\backtrackright
  & (\astringofliterals,
     \anotherstringofliterals\decision{\aliteral}\anotherstringofliterals')_\rightstate
     & \transitionarrow
       (\astringofliterals,
       \anotherstringofliterals\opposite{\aliteral})_\rightstate
        & \textrm{if}\left\{
          \begin{array}{l}
          \anotherstringofliterals\decision{\aliteral}\anotherstringofliterals'
                        \textrm{ is inconsistent}
                        \textrm{ and}\\
          \anotherstringofliterals'\textrm{ contains no decision literal}
          \end{array}
          \right.\\ \\

\unitright
  & (\astringofliterals,
     \anotherstringofliterals)_\rightstate
     & \transitionarrow
       (\astringofliterals,
       \anotherstringofliterals\aliteral)_\rightstate
        & \textrm{if}\left\{
          \begin{array}{l}
          \aliteral\textrm{ is a literal over }
                        \atoms{\dependingprogram{\aprogram}{\astringofliterals}}
                        \textrm{ and}\\
          \aliteral\textrm{ does not occur in }\anotherstringofliterals
                        \textrm{ and}\\
          \textrm{a rule in }\dependingprogram{\aprogram}{\astringofliterals}
                        \textrm{ is equivalent to }
                        \aclause \logicalor \aliteral
                        \textrm{ and}\\
          \textrm{all the literals of } \opposite{\aclause}
                        \textrm{ occur in } \astringofliterals
          \end{array}
          \right.\\ \\

\decideright
  & (\astringofliterals,
     \anotherstringofliterals)_\rightstate
     & \transitionarrow
       (\astringofliterals,
       \anotherstringofliterals\decision{\aliteral})_\rightstate
        & \textrm{if}\left\{
          \begin{array}{l}
          \anotherstringofliterals \textrm{ is consistent}
                       \textrm{ and}\\
          \aliteral\textrm{ is a literal over }\atoms{\dependingprogram{\aprogram}{\astringofliterals}}
                       \textrm{ and}\\
          \textrm{neither }\aliteral\textrm{ nor }
                       \opposite{\aliteral}
                       \textrm{ occur in }\anotherstringofliterals
          \end{array}
          \right.\\
\\

\multicolumn{4}{l}{\textrm{Crossing-rules } \rightstate\leftstate }\\
\failcross
  & (\astringofliterals,
     \anotherstringofliterals)_\rightstate
     & \transitionarrow \failstate
        & \textrm{if}\left\{
          \begin{array}{l}
          \textrm{no right-rule applies}
             \textrm{ and }\\
          \astringofliterals \textrm{ contains no decision literal}
          \end{array}
          \right.\\ \\

\backtrackcross
  & (\astringofliterals\decision{\aliteral}\astringofliterals',
     \anotherstringofliterals)_\rightstate
     & \transitionarrow
       (\astringofliterals \opposite{\aliteral},
       \emptyset)_\leftstate
        & \textrm{if}\left\{
          \begin{array}{l}
          \textrm{no right-rule applies}
             \textrm{ and }\\
          \astringofliterals'\textrm{ contains no decision literal}
          \end{array}
          \right.\\
\end{array}
$$
}
\caption{The transition rules of the graph
$\firstgraph{\themainprogram}{\thedependingprogram}(\aprogram)$.}
\label{fig:trfirst}
\end{figure}
\normalsize

We start by extending the notion of a basic state to accommodate for generate and test layers.
We call symbols $\leftstate$ and $\rightstate$ {\em labels}.
A {\em state} relative to sets~$\asetofatoms$ and~$\asetofatoms'$ of atoms
 is either
\begin{enumerate}[noitemsep]
 \item a pair $(\astringofliterals,\anotherstringofliterals)_\aside$,
       where $\astringofliterals$ and~$\anotherstringofliterals$ are records
       relative to~$\asetofatoms$ and $\asetofatoms'$, respectively,
       and $\aside$ is a label (either symbol $\leftstate$ or $\rightstate$),
 \item $\terminalstate{\astringofliterals}$, where $\astringofliterals$ is a
       record relative to~$\asetofatoms$, or
 \item the distinguished state $\failstate$.
\end{enumerate}

We say that a set $\asetofliterals$ of literals {\em covers} a program $\aprogram$ if
$\atoms{\aprogram}\subseteq\atoms{\asetofliterals}$.
We say that a function $\themainprogram$
from a program to another program is a {\em generating (program)} function if 
 for any program $\aprogram$,
$\atoms{\aprogram}\subseteq\atoms{\themainprogram(\aprogram)}$.
We call a function
from a program~$\aprogram$ and a consistent set $\asetofliterals$ of
literals covering $\aprogram$ to a non-disjunctive
program~$\aprogram'$ a {\em witness (program)} function.
Intuitively, a program $\aprogram'$ resulting from a
witness function is a {\em witness (program)} with respect to~$\aprogram$ and $\amodel$.
For a program $\aprogram$ and a witness function $\thedependingprogram$,
by $\atoms{\thedependingprogram,\aprogram,\asetofatoms}$ we denote the
union of $\atoms{\dependingprogram{\aprogram}{\astringofliterals}}$ for all
possible consistent and complete sets $\astringofliterals$ of literals over $\asetofatoms$.

We are now ready to define a graph
 $\firstgraph{\themainprogram}{\thedependingprogram}(\aprogram)$ for a
 generating function $\themainprogram$, a witness function
 $\thedependingprogram$ and a program $\aprogram$.
The set of nodes of
$\firstgraph{\themainprogram}{\thedependingprogram}(\aprogram)$ consists of the
states relative to sets $\atoms{\mainprogram{\aprogram}}$ and
$\atoms{\thedependingprogram,\aprogram,\atoms{\mainprogram{\aprogram}}}$.
The state $(\emptyset,\emptyset)_\leftstate$ is called {\em initial}.
The edges of the graph $\firstgraph{\themainprogram}{\thedependingprogram}(\aprogram)$
 are specified by the transition rules presented in Figure~\ref{fig:trfirst}.
The graph $\firstgraph{\themainprogram}{\thedependingprogram}(\aprogram)$
can be used for deciding whether a program $\mainprogram{\aprogram}$
has a classical model $\amodel$ such that the witness
$\dependingprogram{\aprogram}{\amodel}$ is unsatisfiable.

\begin{proposition}~\label{thm:firstgraph}
For any generating function $\themainprogram$, any witness function
$\thedependingprogram$ and any program $\aprogram$, the graph
$\firstgraph{\themainprogram}{\thedependingprogram}(\aprogram)$ checks the classical models
$\amodel$ of $\mainprogram{\aprogram}$ such that
$\dependingprogram{\aprogram}{\amodel}$ is unsatisfiable.
\end{proposition}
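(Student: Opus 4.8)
The plan is to verify, one by one, the four defining conditions of ``$\agraph$ checks $\asetofmodels$'' for $\agraph=\firstgraph{\themainprogram}{\thedependingprogram}(\aprogram)$ and $\asetofmodels$ the set of classical models $\amodel$ of $\mainprogram{\aprogram}$ with $\dependingprogram{\aprogram}{\amodel}$ unsatisfiable. The guiding observation, used repeatedly, is that the left-rules act exactly as the transition rules of $\ordinarydp{\mainprogram{\aprogram}}$ on the first record, while for a \emph{fixed} first record $\astringofliterals$ the right-rules act exactly as the transition rules of $\ordinarydp{\dependingprogram{\aprogram}{\astringofliterals}}$ on the second record; this lets me import Proposition~\ref{prop:dp} on both components.

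For finiteness and acyclicity, finiteness is immediate since $\atoms{\mainprogram{\aprogram}}$ and $\atoms{\thedependingprogram,\aprogram,\atoms{\mainprogram{\aprogram}}}$ are finite. For acyclicity I exhibit a measure that strictly increases along every edge into a non-terminal state. Recall the standard {\dpll} termination measure: for a record over a universe of $n$ atoms, decomposed by its decision literals as $\astringofliterals_0\decision{\aliteral_1}\astringofliterals_1\cdots\decision{\aliteral_p}\astringofliterals_p$, set $f(\astringofliterals)=(\length{\astringofliterals_0},\dots,\length{\astringofliterals_p},-1,\dots,-1)$, an $(n+1)$-tuple padded with $-1$'s and ordered lexicographically; a routine check shows $\unitdpll$, $\decidedpll$ and $\backtrackdpll$ each strictly increase $f$. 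I then assign to a state the triple $(f(\astringofliterals),\,\epsilon,\,f(\anotherstringofliterals))$ under the lexicographic order, where $\epsilon=0$ for label $\leftstate$ and $\epsilon=1$ for label $\rightstate$ and $f$ is taken over the respective universes. Left-rules raise the dominant first component; $\crossrule$ fixes it and raises $\epsilon$; right-rules fix the first two and raise the third; and $\backtrackcross$ is a {\dpll} backtrack on the first record, so it raises the first component irrespective of the drop in $\epsilon$. As the range is finite, no cycle exists.

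For the terminal-state condition and the soundness of $\terminalstate{\cdot}$, note a state $(\astringofliterals,\emptyset)_\leftstate$ is never terminal (if no left-rule applies, $\crossrule$ does) and a state $(\astringofliterals,\anotherstringofliterals)_\rightstate$ is never terminal (if no right-rule applies, then $\failcross$ fires when $\astringofliterals$ has no decision literal and $\backtrackcross$ otherwise), so the only terminal states are $\failstate$ and $\terminalstate{\astringofliterals}$. Reaching $\terminalstate{\astringofliterals}$ requires $\failright$, which is preceded by a crossing via $\crossrule$ from some $(\astringofliterals,\emptyset)_\leftstate$ where no left-rule applied, and the first record is unchanged until $\failright$. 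A short case analysis (were $\astringofliterals$ inconsistent, incomplete, or to falsify a rule of $\mainprogram{\aprogram}$, then $\failleft$/$\backtrackleft$, $\decideleft$, or $\unitleft$ would apply) shows $\astringofliterals$ is a consistent and complete classical model of $\mainprogram{\aprogram}$. At $\failright$ the second record is inconsistent with no decision literal, so the right segment projects to a path in $\ordinarydp{\dependingprogram{\aprogram}{\astringofliterals}}$ that reaches $\failstate$; by Proposition~\ref{prop:dp}, $\dependingprogram{\aprogram}{\astringofliterals}$ is unsatisfiable. Hence $\astringofliterals\in\asetofmodels$, and since every element of a nonempty $\asetofmodels$ is complete over $\atoms{\mainprogram{\aprogram}}$, we have $\atoms{\asetofmodels}=\atoms{\mainprogram{\aprogram}}$ and $\restriction{\astringofliterals}{\atoms{\asetofmodels}}=\astringofliterals\in\asetofmodels$.

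For reachability of $\failstate$, the direction ``$\asetofmodels=\emptyset\Rightarrow\failstate$ reachable'' follows from the first three conditions: a maximal path from the initial state ends in a terminal state, which cannot be $\terminalstate{\astringofliterals}$ (that would populate $\asetofmodels$), hence is $\failstate$. The converse is the crux and needs an invariant tailored to the two layers, namely: for every reachable state whose first record $\astringofliterals$ has decision literals $\aliteral_1,\dots,\aliteral_p$, every $\amodel\in\asetofmodels$ with $\{\aliteral_1,\dots,\aliteral_p\}\subseteq\amodel$ satisfies $\astringofliterals\subseteq\amodel$. The $\unitleft$ step is preserved because each $\amodel\in\asetofmodels$ is a classical model of $\mainprogram{\aprogram}$, and $\decideleft$, $\backtrackleft$ are the usual {\dpll} cases. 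The delicate case, and the main obstacle, is $\backtrackcross$: it flips the last decision of a first record $\astringofliterals$ that is \emph{already} a classical model of $\mainprogram{\aprogram}$, so its soundness cannot rest on inconsistency with $\mainprogram{\aprogram}$; instead ``no right-rule applies'' forces $\dependingprogram{\aprogram}{\astringofliterals}$ to be satisfiable, whence $\astringofliterals\notin\asetofmodels$, so no $\amodel\in\asetofmodels$ can extend all decisions of $\astringofliterals$ together with the flipped literal, and the invariant is re-established by invoking it at the earlier state where that decision was introduced. Granting the invariant, both ways of producing $\failstate$ (via $\failleft$ from an inconsistent, decision-free first record, or via $\failcross$ from a decision-free first record that is a model of $\mainprogram{\aprogram}$ with satisfiable witness) force any hypothetical $\amodel\in\asetofmodels$ either to contain an inconsistent set of literals or to equal a model known to lie outside $\asetofmodels$, a contradiction; hence $\asetofmodels=\emptyset$.
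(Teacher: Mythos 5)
Your proof is correct, but it proceeds quite differently from the paper's. The paper does not argue about $\firstgraph{\themainprogram}{\thedependingprogram}(\aprogram)$ directly: it identifies this graph with the template instantiation $\simpletemplate{\uppropagators}{\uppropagators}{\themainprogram}{\thedependingprogram}(\aprogram)$, invokes Theorem~\ref{thm:typecomplete} to conclude that $\uppropagators$ is {\classic}-enforcing, and then cites the general template machinery: Lemma~\ref{lemma:finite} for finiteness and acyclicity, and Lemma~\ref{lemma:structure} for the terminal-state conditions, whose proofs rest on Propositions~\ref{lemma:statement-e} and~\ref{lemma:statement-d} and on the graded invariant of Lemma~\ref{lemma:general} (item~(a) governing the test record, item~(b) the generate record). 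You instead verify the four conditions of ``checks'' directly on the two-layer graph: your termination measure is essentially the well-founded order used in Lemma~\ref{lemma:finite}; for the $\terminalstate{\astringofliterals}$ condition you project the right-hand segment of the path onto $\ordinarydp{\dependingprogram{\aprogram}{\astringofliterals}}$ and reuse Proposition~\ref{prop:dp} as a black box, where the paper re-derives unsatisfiability of the witness from its invariant; and for the $\failstate$ condition you run an invariant over the generate record alone. What each approach buys: the paper's detour through the template is reusable---the same lemmas, stated for arbitrary enforcing sets of p-conditions, also yield Theorems~\ref{thm:minitemplate} and~\ref{STS-correctness}---while your argument is self-contained, more elementary, and exploits the fact that both layers here are plain {\dpll}, at the price of being tied to the $\theunit$ instantiation. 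One caveat: your invariant is the non-graded one (``if $\amodel\in\asetofmodels$ contains every decision literal of $\astringofliterals$ then $\astringofliterals\subseteq\amodel$''), and such an invariant is not preserved by backtracking via a purely local inductive step. You handle this correctly for $\backtrackcross$ by additionally invoking the hypothesis at the earlier state where the flipped decision was introduced (legitimate, since that prefix occurs earlier on the same path), but $\backtrackleft$ requires exactly the same device, which your phrase ``the usual {\dpll} cases'' glosses over; the paper sidesteps this by working with the graded form---$\amodel$ satisfies $\aliteral_\anindex$ whenever it satisfies all decision literals $\decision{\aliteral_\anotherindex}$ with $\anotherindex\leq\anindex$---which keeps every inductive step local to the preceding state.
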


\paragraph{Informal Account of the Two-Layer Abstract Solver.}
Each of the rules of the graph
$\firstgraph{\themainprogram}{\thedependingprogram}(\aprogram)$ is placed into
one of the three groups {\em Left}, {\em Right}, and {\em Crossing}.
The left-rules of $\firstgraph{\themainprogram}{\thedependingprogram}(\aprogram)$
capture the generate layer that applies the {\dpll} procedure to the program
$\mainprogram{\aprogram}$ produced by the generating function.
The right-rules of
$\firstgraph{\themainprogram}{\thedependingprogram}(\aprogram)$ capture the
test layer that applies the {\dpll} procedure to the computed witness program.
The label $\leftstate$ (resp. $\rightstate$) suggests that currently the
computation is within the generate (resp. test) layer.
The left-hand-side $\astringofliterals$ (resp. right-hand-side
$\anotherstringofliterals$) of the state
$(\astringofliterals,\anotherstringofliterals)_\leftstate$ records the 
 computation state due to the generate (resp. test) layer.
The crossing rules form a bridge between the two layers.

It turns out that the left-rules no
longer apply to a state of the form $(\astringofliterals,\anotherstringofliterals)_\leftstate$
only when $\astringofliterals$ is a classical model of
$\mainprogram{\aprogram}$.
Thus, when a classical model $\astringofliterals$ of $\mainprogram{\aprogram}$
is found, then the $\crossrule$ is used and a witness program with respect to
$\astringofliterals$ is computed.
If no classical model is found for the witness program, then
$\failright$ rule applies, which brings us to a terminal state
$\terminalstate{\astringofliterals}$, suggesting that
$\astringofliterals$
represents a solution to a given search problem.
It turns out that no right-rules applies in a state of the form
$(\astringofliterals,\anotherstringofliterals)_\rightstate$ only when
$\anotherstringofliterals$ is a classical model for the witness program. Thus,
the set $\astringofliterals$ of literals is not such that
$\dependingprogram{\aprogram}{\amodel}$ is unsatisfiable
and the {\dpll} procedure of the generate layer, embodied by the left-rules,
proceeds with the search, after backtracking through $\backtrackcross$. In the
case when $\backtrackcross$ cannot be applied, it follows that
no other candidate can be found by the generate layer,
so the transition $\failcross$ leading to
 $\failstate$ is the only available one from such a state.


\subsection{Abstract basic {\cmodels}}
\label{sec:cmodelswithoutlearning}
We now relate the  graph
$\firstgraph{\themainprogram}{\thedependingprogram}(\aprogram)$ to the
procedure {\sc dp-assat-proc} from~\citeN{lie05}. This procedure forms the basis
of the answer set solver {\cmodels}. Yet, it does not account for backjumping and
learning techniques, implemented in {\cmodels}.

Given a disjunctive program $\aprogram$, the answer set solver {\sc cmodels}
starts its computation by computing a CNF formula
$\gencmodels{\aprogram}$ that corresponds to the clausified program completion 
of $\aprogram$.
The {\dpll} procedure is then applied to $\gencmodels{\aprogram}$.
The test layer of the {\cmodels} computation relies on the programs produced
by a witness program function called $\thetestcmodels$ that intuitively tests
minimality of found models of completion.

To be complete in our presentation, we now review the details of
$\thegencmodels$ and $\thetestcmodels$ functions~\cite{lierphd}.
To construct $\gencmodels{\aprogram}$, \cmodels introduces an
auxiliary atom $\auxvariable{\abody}$ for every body~$\abody$ occurring in
$\aprogram$.
The atom $\auxvariable{\abody}$ is an explicit definition for
$\abody$, it is true if and only if $\abody$ is true.
Also every disjunctive rule gives rise to as many auxiliary variables as there are atoms in the head of the rule:
for a disjunctive rule $\ahead\aspimplication\abody$
and every atom $\anatom\in \ahead$, an auxiliary atom
$\auxvariable{\anatom,\abody}$ is equivalent to a conjunction
$\abody\wedge\overline{\ahead'}$, where $\ahead'$ is $(\ahead\setminus\{a\})^{\vee}$.
Formulas~\eqref{CmodelsGenEquation} and~\eqref{CmodelsTestEquation}
present the definitions of $\thegencmodels$ and $\thetestcmodels$ for a program
$\Pi$. The first four lines of the definition of the CNF formula
$\gencmodels{\aprogram}$ concern clausification of the introduced explicit
definitions, namely $\auxvariable{\abody}$ and $\auxvariable{\anatom,\abody}$.
The last two lines encode clausified completion with the use of
$\auxvariable{\abody}$ and $\auxvariable{\anatom,\abody}$.
\begin{equation}
\label{CmodelsGenEquation}
\begin{array}{lll}
\gencmodels{\aprogram} =
	  &\{\auxvariable{\abody}\logicalor
             {\opposite{\abody}}\mid
                 \abody\in \bodies{\aprogram}\}\\
          & \{\neg \auxvariable{\abody}\logicalor \anatom
             \mid
                 \abody\in \bodies{\aprogram}, \anatom\in\abody\}\\
          &\{\auxvariable{\anatom,\abody}\logicalor
             \neg{\auxvariable{\abody}\vee
             \ahead}
             \mid
                 \ahead\logicalor\anatom\aspimplication\abody\in\aprogram\}\\
          &\{\neg \auxvariable{\anatom,\abody} \logicalor \anotheratom 
             \mid
                 \ahead\logicalor\anatom\aspimplication\abody\in\aprogram, \anotheratom\in\opposite{\ahead}\cup\{\auxvariable{\abody}\}\}\\
          &\{\neg\auxvariable{\abody}\logicalor \ahead
             \mid
                 \ahead\aspimplication\abody\in\aprogram\}\\
          & \displaystyle{\{\neg\anatom
                 \bigvee_{\anatom \aspimplication\abody\in\aprogram}\auxvariable{\abody}
                 \bigvee_{\ahead\logicalor\anatom \aspimplication\abody\in\aprogram} \auxvariable{\anatom,\abody}\}} \\

\end{array}
\end{equation}
\begin{equation}
\begin{array}{lll}
\label{CmodelsTestEquation}
\testcmodels{\aprogram}{\amodel} =
             & \{\disjunctionof{\opposite{\answersetof{\restriction{\amodel}{\atoms{\aprogram}}}}}\}
               \setunion\\
             & \{\logicalnot\anatom
                 \mid\logicalnot\anatom\in\restriction{\amodel}{\atoms{\aprogram}}\}
               \setunion\\
             & \{
             {\opposite{\abody}}\logicalor\ahead\mid
                 \ahead\aspimplication\abody\in\reduct{\aprogram}{\answersetof{\amodel}},
                 \abody\subseteq\amodel\},
\end{array}
\end{equation}

 Intuitively, {\cmodels} uses the program $\gencmodels{\aprogram}$ as an
approximation of $\aprogram$ during the generate-layer computation.
Indeed, any stable model of $\aprogram$ is also a classical model of $\gencmodels{\aprogram}$.
The converse does not always hold.
Thus, classical models of $\gencmodels{\aprogram}$ must be checked.
For a classical model $\amodel$  of $\gencmodels{\aprogram}$,
a program produced by $\testcmodels{\aprogram}{\amodel}$ has no classical models iff $\amodel$ is a
 stable model of $\aprogram$. In fact, any model~$N$ of
 $\testcmodels{\aprogram}{\amodel}$ is such that it satisfies the reduct
 $\reduct{\aprogram}{\answersetof{\amodel}}$, while
 $\answersetof{N}\subset\answersetof{\restriction{\amodel}{\atoms{\aprogram}}}$.
 In such case, $\answersetof{\restriction{\amodel}{\atoms{\aprogram}}}$ is not
 an answer set of $\Pi$ by definition and, consequently, $\amodel$ is not a
 stable model of $\Pi$.


By $\dpgraph{\aprogram}$ we denote the graph
$\firstgraph{\thegencmodels}{\thetestcmodels}(\aprogram)$.
Proposition~\ref{correctnessCM} below illustrates that the graph
$\dpgraph{\aprogram}$ can be used for deciding whether a given  program $\aprogram$
has a stable model, similarly as the graph $\ordinarydp{\aprogram}$ can be used
for deciding whether $\aprogram$ has a classical model.

\begin{proposition}~\label{correctnessCM}
For any program $\aprogram$, the graph
$\dpgraph{\aprogram}$
checks the stable models of $\aprogram$.
\end{proposition}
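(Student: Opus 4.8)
The plan is to obtain the result from Proposition~\ref{thm:firstgraph} by instantiating its generating and witness functions with the \cmodels generating and witness functions, and then converting ``classical model of $\gencmodels{\aprogram}$ with unsatisfiable witness'' into ``stable model of $\aprogram$''. First I would check that the two functions are admissible. Every clause in~\eqref{CmodelsGenEquation} keeps the original atoms of $\aprogram$, so $\atoms{\aprogram}\subseteq\atoms{\gencmodels{\aprogram}}$ and $\thegencmodels$ is a generating function. Each member of $\testcmodels{\aprogram}{\amodel}$ in~\eqref{CmodelsTestEquation} is a clause, hence, under the convention identifying a clause $\aclause$ with the constraint $\aspimplication\aclause$, a rule with empty head; thus $\testcmodels{\aprogram}{\amodel}$ is non-disjunctive and is well defined for every consistent $\amodel$ covering $\aprogram$, so $\thetestcmodels$ is a witness function. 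Proposition~\ref{thm:firstgraph} then tells us that $\dpgraph{\aprogram}=\firstgraph{\thegencmodels}{\thetestcmodels}(\aprogram)$ checks the set
\[
\asetofmodels=\{\amodel\mid \amodel\textrm{ is a classical model of }\gencmodels{\aprogram}\textrm{ and }\testcmodels{\aprogram}{\amodel}\textrm{ is unsatisfiable}\}.
\]

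Next I would prove a correspondence between $\asetofmodels$ and the set $\mathcal{S}$ of stable models of $\aprogram$, relying on the two properties of $\thegencmodels$ and $\thetestcmodels$ recorded before the statement. For one direction, if $\amodel\in\asetofmodels$ then $\amodel$ is a classical model of $\gencmodels{\aprogram}$ with $\testcmodels{\aprogram}{\amodel}$ unsatisfiable, so by the stated test property $\restriction{\amodel}{\atoms{\aprogram}}$ is a stable model of $\aprogram$. For the converse, given a stable model $\astringofliterals$ of $\aprogram$, the stated completion property guarantees that $\astringofliterals$ extends to a classical model of $\gencmodels{\aprogram}$; since the auxiliary atoms $\auxvariable{\abody}$ and $\auxvariable{\anatom,\abody}$ are defined by equivalences, this extension $\amodel$ is uniquely determined and satisfies $\restriction{\amodel}{\atoms{\aprogram}}=\astringofliterals$. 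Applying the test property once more, as $\restriction{\amodel}{\atoms{\aprogram}}$ is stable, gives that $\testcmodels{\aprogram}{\amodel}$ is unsatisfiable, so $\amodel\in\asetofmodels$. Thus $\amodel\mapsto\restriction{\amodel}{\atoms{\aprogram}}$ is a bijection from $\asetofmodels$ onto $\mathcal{S}$, and in particular $\asetofmodels=\emptyset$ if and only if $\mathcal{S}=\emptyset$.

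Finally I would transfer the four conditions defining ``checks'' from $\asetofmodels$ to $\mathcal{S}$. Finiteness, acyclicity, and the shape of terminal states concern only the graph $\dpgraph{\aprogram}$ and are therefore inherited directly from Proposition~\ref{thm:firstgraph}. For the terminal-state condition, suppose $\terminalstate{\astringofliterals}$ is reachable; then $\astringofliterals$ is a record over $\atoms{\gencmodels{\aprogram}}$, and Proposition~\ref{thm:firstgraph} gives $\restriction{\astringofliterals}{\atoms{\asetofmodels}}\in\asetofmodels$, which already forces $\asetofmodels\neq\emptyset$ and hence $\atoms{\asetofmodels}=\atoms{\gencmodels{\aprogram}}$, so in fact $\astringofliterals\in\asetofmodels$. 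The correspondence then makes $\restriction{\astringofliterals}{\atoms{\aprogram}}$ a stable model, and since every stable model is complete over $\atoms{\aprogram}$ we have $\atoms{\mathcal{S}}=\atoms{\aprogram}$, giving $\restriction{\astringofliterals}{\atoms{\mathcal{S}}}=\restriction{\astringofliterals}{\atoms{\aprogram}}\in\mathcal{S}$. For the last condition, $\failstate$ is reachable if and only if $\asetofmodels=\emptyset$ by Proposition~\ref{thm:firstgraph}, which by the correspondence holds if and only if $\mathcal{S}=\emptyset$.

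The step I expect to be the crux is the correspondence above, and within it the correctness of the witness encoding: that $\testcmodels{\aprogram}{\amodel}$ is unsatisfiable exactly when $\answersetof{\restriction{\amodel}{\atoms{\aprogram}}}$ is a minimal model of the reduct $\reduct{\aprogram}{\answersetof{\amodel}}$, i.e.\ an answer set of $\aprogram$. This amounts to verifying that the clauses of~\eqref{CmodelsTestEquation} are satisfied precisely by the proper submodels of the candidate that witness its non-minimality over the reduct, together with the analogous faithfulness of $\gencmodels{\aprogram}$ as a clausified completion. Since both of these encoding properties are already recorded in the text, the residual difficulty is purely the bookkeeping of atom sets in the restriction condition, resolved by the identifications $\atoms{\asetofmodels}=\atoms{\gencmodels{\aprogram}}$ and $\atoms{\mathcal{S}}=\atoms{\aprogram}$.
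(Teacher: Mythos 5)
Your proposal is correct, but it takes a genuinely different route from the paper's. The paper never derives Proposition~\ref{correctnessCM} from Proposition~\ref{thm:firstgraph}; instead it is subsumed, in Section~\ref{sec:NewPresentation}, by the later and more general Theorem~\ref{STS-correctness}: one notes that $\dpgraph{\aprogram}=\simpletemplate{\uppropagators}{\uppropagators}{\thegencmodels}{\thetestcmodels}(\aprogram)$, that $(\uppropagators,\thegencmodels)$ is an approximating-pair and $(\uppropagators,\thetestcmodels)$ an ensuring-pair with respect to $\classic$, and the template theorem does the rest, the passage from ``model of the generated program whose witness has no model'' to ``stable model'' being handled once and for all by Proposition~\ref{prop:approx-ensuring} inside its proof. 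You instead stay entirely within Section~\ref{sec:cm}: you invoke Proposition~\ref{thm:firstgraph} as a black box and then hand-roll precisely the \cmodels-specific instance of Proposition~\ref{prop:approx-ensuring} --- the restriction map is a bijection between classical models of $\gencmodels{\aprogram}$ with unsatisfiable witness and stable models of $\aprogram$ --- followed by a transfer of the four ``checks'' conditions across that correspondence, including the $\atoms{\asetofmodels}$ versus $\atoms{\aprogram}$ bookkeeping that the paper's generic treatment never needs to surface (your handling of the case where reachability of $\terminalstate{\astringofliterals}$ forces $\asetofmodels\neq\emptyset$ before the restriction sets can be identified is exactly right; for the transfer, injectivity of the map is not even needed, only the two implications you prove). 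Both routes rest on the same two encoding facts about $\thegencmodels$ and $\thetestcmodels$ that the paper records without proof, citing~\cite{lierphd}, so treating them as given is consistent with the paper's own level of rigor. What your route buys is a self-contained argument available at the point where the proposition is stated, using only earlier material; what the paper's route buys is uniformity --- the same template theorem immediately yields Corollaries~\ref{GNT-correctness} and~\ref{DLV-correctness} as well --- at the cost of deferring the justification until the approximating/ensuring-pair machinery is in place.
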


The graph
$\dpgraph{\aprogram}$ 
captures the
search procedure of {\sc dp-assat-proc} of {\cmodels}.
The {\sc dp-assat-proc} algorithm follows the
priorities on its transition rules as they are ordered in
Figure~\ref{fig:trfirst}.
We often use this convention when describing other procedures in the sequel.

\section{Graph Templates}
\label{sec:NewPresentation}
The differences in design choices of disjunctive answer set solvers
obscure the understanding of their similarities.
In~\citeN{blm14}, 
transition systems exemplified by the graph $\dpgraph{\aprogram}$ 
were used to capture
several disjunctive solvers, namely,
{\cmodels}, {\gnt} and {\dlv} implementing backtracking.
The transitions systems made the similarities that these solvers share explicit.
For example, all solvers are based on a two-layer approach in the spirit of the
{\sc dp-assat-proc} algorithm.
 In this work, we make an additional move towards a unifying framework for
 capturing two-layer methods.
We introduce a graph template that we then use to
encompass disjunctive solvers {\cmodels}, {\gnt} and {\dlv}. 

\subsection{A Single Layer Graph Template}\label{sec:singlelayertemplate}
In the next section we will define a graph template suitable for capturing
two-layer computation of disjunctive answer set solvers. As a step in this
direction, we describe here a simpler graph template that can be used to
capture the {\dpll} procedure by encapsulating the {\dpll} graph. We also show
that this template can encapsulate a graph capturing the
computation underlying the algorithm of answer set solver {\smodels} for
non-disjunctive programs.


\paragraph{Template.}
A function
from a program $\aprogram$ and a set of literals over $\atoms{\aprogram}$ to
a set of literals over $\atoms{\aprogram}$ is called a
{\em propagator condition} or, shortly, {\em p-condition}.
Figure~\ref{fig:prop0} presents four p-conditions, namely, $\theunit$,
$\theallcancel$,
$\thebacktrue$, and $\theunfounded$.
For a set
$\asetofpropagators$ of p-conditions, a program $\aprogram$ and a set~$\asetofliterals$ of literals, by $\outprop{\asetofpropagators}{\aprogram}{\asetofliterals}$ we
denote the set of literals
$\bigcup_{\thecondition\in\asetofpropagators}\acondition{\asetofliterals}{}{\aprogram}$.
Intuitively, if each image through a p-condition is a set of possible outcomes,
this set represents the union of the possible outcomes through
$\asetofpropagators$.

\begin{figure}[t]
\footnotesize{
$$
\begin{array}{ll}
\multicolumn{2}{l}{
\propunitpropagate{\astringofliterals}{\aliteral}{\aprogram}}\\
\qquad
& \textrm{ iff }
          \left\{
          \begin{array}{l}
          \aliteral\textrm{ does not occur in }\astringofliterals
                        \textrm{ and}\\

          \textrm{a rule in } \aprogram
                \textrm{ that is equivalent to }
                \aclause\logicalor\aliteral
                \textrm{ and}\\
             \textrm{all the literals of } \opposite{\aclause}
                 \textrm{ occur in } \astringofliterals
          \end{array}
          \right.\\

\\\multicolumn{2}{l}{
\propallcancel{\astringofliterals}{\logicalnot\anatom}{\aprogram}}\\
& \textrm{ iff }
          \left\{
          \begin{array}{l}
          \logicalnot\anatom\textrm{ does not occur in }\astringofliterals
                        \textrm{ and}\\
\textrm{there is no rule in $\Pi$ supporting $\anatom$ with respect to $\astringofliterals$}
          \end{array}
          \right.\\

\\\multicolumn{2}{l}{
\propbacktrue{\astringofliterals}{\aliteral}{\aprogram}}\\
& \textrm{ iff }
          \left\{
          \begin{array}{l}
          \aliteral\textrm{ does not occur in }\astringofliterals
                        \textrm{ and}\\

          \textrm{there is a rule } \ahead\logicalor\anatom
                \aspimplication \abody
                 \textrm{ in } \aprogram\\
            \textrm{so that (i) }
                        \anatom\in\astringofliterals,
\textrm{and (ii) either  $\opposite{\aliteral}\in\ahead$ or $\aliteral\in\abody$ and},\\
\textrm{(iii) no other rule in $\Pi$ is supporting $\anatom$ with respect to $\astringofliterals$}


          \end{array}
         \right.\\

\\\multicolumn{2}{l}{
\propunfounded{\astringofliterals}{\logicalnot\anatom}{\aprogram}}\\
& \textrm{ iff }
          \left\{
          \begin{array}{l}
          \logicalnot\anatom\textrm{ does not occur in }\astringofliterals
                        \textrm{ and}\\

          \astringofliterals \textrm{ is consistent}
              \textrm{ and }\\
            \textrm{there is a set } \asetofatoms
              \textrm{ of atoms containing } \anatom \textrm{ such that }\\
            \asetofatoms \textrm{ is unfounded on }
              \astringofliterals \textrm{ with respect to } \aprogram
          \end{array}
          \right.\\

\end{array}
$$
}
\caption{Propagator conditions.}
\label{fig:prop0}
\end{figure}

\begin{definition}
Given a a program $\aprogram$ and a
set~$\asetofpropagators$ of p-conditions,
a \emph{\dpll graph template}
$\ordinarydpt{\asetofpropagators}{\aprogram}$ is a graph of which
nodes are the basic states relative to $\atoms{\aprogram}$ and
edges are specified by the transition rules
$\faildpll$, $\backtrackdpll$, $\decidedpll$, $\okruledpll$
presented in Figure~\ref{figure:ordinarydp} and the transition rule
\begin{equation}
\begin{array}{llll}
\propagategen
  & \astringofliterals
     & \transitionarrow
       \astringofliterals\aliteral
        & \textrm{ if ~~~ }
      \aliteral\in \outprop{\asetofpropagators}{\aprogram}{\astringofliterals}.
\end{array}
\label{eq:propagategen}
\end{equation}
\end{definition}


For instance, the
instantiation $\ordinarydpt{\{\theunit\}}{\aprogram}$ of the \dpll graph template
results in the
\dpll graph $\ordinarydp{\aprogram}$.
Indeed, by definition these graphs share the same nodes as well as
their rules $\faildpll$, $\backtrackdpll$, $\decidedpll$, and $\okruledpll$
coincide.
Then, one can see that 
$\propunitpropagate{\astringofliterals}{\aliteral}{\aprogram}$ 
if and only if the
transition rule $\unitdpll$ in $\ordinarydp{\aprogram}$ is applicable in
$\astringofliterals$ and supports the transition to a state
$\astringofliterals\aliteral$, which shows that the $\unitdpll$ rule and the
$\propagategen$ rule coincide when $\asetofpropagators=\{\theunit\}$.

\paragraph{Instantiation.}
We call {\em types} the elements of the set $\alltypes=\{\classic,\supported,\stable\}$.
In the following, by $\classic$-model, $\supported$-model and $\stable$-model
we denote classical, supported, and stable models, respectively.
We also use letter $\atypeofmodel$ to denote a variable over set $\alltypes$ of types.
We say that a set $\asetofpropagators$
of p-conditions is
{\em $\atypeofmodel$-sound}
if for any program~$\aprogram$, for any set $\asetofliterals$ of literals,
and for any $\atypeofmodel$-model $\asetofliterals_1$ of~$\aprogram$ such that $\asetofliterals\subseteq \asetofliterals_1$, it also
holds that
$\outprop{\asetofpropagators}{\aprogram}{\asetofliterals}\subseteq\asetofliterals_1$.
Note that any $\classic$-sound set of p-conditions is $\supported$-sound,
and any $\supported$-sound set of p-conditions is $\stable$-sound.
We say that a set $\asetofpropagators$
of p-conditions is
{\em $\atypeofmodel$-complete}
when for any program~$\aprogram$ and any consistent and
complete set $\asetofliterals$ of literals over $\atoms{\aprogram}$, set
$\asetofliterals$ is a $\atypeofmodel$-model of $\aprogram$ if and only if
$\outprop{\asetofpropagators}{\aprogram}{\asetofliterals}=\emptyset$.
For a type $\atypeofmodel$, we say that a set $\asetofpropagators$ of p-conditions is
\emph{$\atypeofmodel$-enforcing} if $\asetofpropagators$
is both $\atypeofmodel$-sound and $\atypeofmodel$-complete.

Next theorem summarizes properties of several sets of p-conditions:
$$
\begin{array}{rll}
  \uppropagators&=&\{\theunit\}\\ 
  \smdisjpropagators&=&\{\theunit,\theallcancel,\thebacktrue\}\\
  \smpropagators&=&\{\theunit,\theallcancel,\thebacktrue,\theunfounded\}\\
\end{array}
$$

\begin{theorem}\label{thm:typecomplete}
The following statements hold:
\setlength\leftmargin{25pt}
\begin{enumerate}\itemsep0pt\parskip0pt\parsep0pt
 \item The set $\uppropagators$ is $\classic$-enforcing;
 \item All the subsets of $\smdisjpropagators$ that contain
       $\{\theunit,\theallcancel\}$ are
       $\supported$-enforcing; and
 \item All the subsets of $\smpropagators$ that contain
       $\{\theunit,\theunfounded\}$ are
       $\stable$--enforcing.
\end{enumerate}
\end{theorem}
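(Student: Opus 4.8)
The plan is to reduce the theorem to statements about the four individual p-conditions and then assemble the three parts from these building blocks. First I would record two easy observations. A set $\asetofpropagators$ is $\atypeofmodel$-sound if and only if each of its members is $\atypeofmodel$-sound, since $\outprop{\asetofpropagators}{\aprogram}{\asetofliterals}$ is the union of the images of the members and a union sits inside $\asetofliterals_1$ exactly when each summand does. Together with the downward propagation of soundness along the types already noted in the text ($\classic$-sound $\Rightarrow$ $\supported$-sound $\Rightarrow$ $\stable$-sound), this means the only soundness facts I really need are that $\theunit$ is $\classic$-sound, that $\theallcancel$ and $\thebacktrue$ are $\supported$-sound, and that $\theunfounded$ is $\stable$-sound.

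For soundness I would fix a program $\aprogram$, a set $\asetofliterals$, and a $\atypeofmodel$-model $\asetofliterals_1 \supseteq \asetofliterals$, and show every produced literal lies in $\asetofliterals_1$. Two monotonicity lemmas drive everything. First, since $\asetofliterals \subseteq \asetofliterals_1$, any rule supporting an atom $\anatom$ with respect to $\asetofliterals_1$ (i.e. with $\asetofliterals_1 \setintersection (\opposite{\abody} \setunion \ahead) = \emptyset$) also supports $\anatom$ with respect to $\asetofliterals$; hence if no rule supports $\anatom$ w.r.t. $\asetofliterals$ then none does w.r.t. $\asetofliterals_1$. For $\theallcancel$ this forces $\anatom \notin \asetofliterals_1$ (otherwise $\anatom \in \answersetof{\asetofliterals_1}$ would lack support in the supported model $\asetofliterals_1$), so $\logicalnot\anatom \in \asetofliterals_1$ by completeness. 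The same lemma shows that the unique rule allowed to support $\anatom$ by clause (iii) of $\thebacktrue$ must be the rule supporting $\anatom$ in $\asetofliterals_1$, and reading clause (ii) against $\asetofliterals_1 \setintersection (\opposite{\abody} \setunion \ahead) = \emptyset$ then places the produced literal in $\asetofliterals_1$. Second, for $\theunfounded$ I would prove that if $\asetofatoms$ is unfounded on $\asetofliterals$ then $\asetofatoms \setintersection \answersetof{\asetofliterals_1}$ is unfounded on $\asetofliterals_1$; the delicate point is condition (2) of unfoundedness, where a positive body atom $\anotheratom \in \asetofatoms$ is split according to whether $\anotheratom \in \asetofliterals_1$ (keeping condition (2)) or $\anotheratom \notin \asetofliterals_1$ (falling back to condition (1), as then $\logicalnot\anotheratom \in \asetofliterals_1$). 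Assuming $\anatom \in \asetofliterals_1$ exhibits a non-empty unfounded subset of $\answersetof{\asetofliterals_1}$, contradicting Theorem~\ref{thm:leone}; hence $\logicalnot\anatom \in \asetofliterals_1$.

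For completeness I would treat the two directions separately. The forward direction is essentially free: if $\asetofpropagators$ is $\atypeofmodel$-sound and $\asetofliterals$ is a complete consistent $\atypeofmodel$-model, then taking $\asetofliterals_1 = \asetofliterals$ gives $\outprop{\asetofpropagators}{\aprogram}{\asetofliterals} \subseteq \asetofliterals$; but every p-condition only ever outputs literals that do not occur in $\asetofliterals$, so the image must be empty. The backward direction is where the membership hypotheses are used, each argued by contraposition. If $\theunit$ produces nothing then $\asetofliterals$ satisfies every rule (a falsified clause would licence a unit propagation), giving a classical model, which is Part 1. If in addition $\theallcancel$ produces nothing, then every $\anatom \in \answersetof{\asetofliterals}$ has a supporting rule w.r.t. $\asetofliterals$ (else $\logicalnot\anatom$, which does not occur, would be produced), so $\asetofliterals$ is supported; this yields Part 2, where $\thebacktrue$ is optional because, being $\supported$-sound, it cannot spoil the forward direction. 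Finally, if $\theunit$ and $\theunfounded$ produce nothing, then $\asetofliterals$ is a classical model with no non-empty unfounded subset of $\answersetof{\asetofliterals}$ (such a subset would trigger $\theunfounded$), so $\asetofliterals$ is stable by Theorem~\ref{thm:leone}; this yields Part 3, with $\theallcancel$ and $\thebacktrue$ again optional by the same soundness argument.

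I expect the soundness of $\theunfounded$ to be the main obstacle. Unlike support, unfoundedness is not transparently preserved when passing from $\asetofliterals$ to the larger $\asetofliterals_1$, nor when shrinking the witness set to $\asetofatoms \setintersection \answersetof{\asetofliterals_1}$, and getting the interplay of conditions (1) and (2) right under that restriction is the one place where a careful case analysis is unavoidable. Everything else reduces to the two monotonicity observations and bookkeeping with the ``does not occur'' clauses.
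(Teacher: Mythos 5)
Your proposal is correct and rests on the same core machinery as the paper's proof: monotonicity of non-support (the paper's Lemma~\ref{lem:supporting}) drives the soundness of $\theallcancel$ and $\thebacktrue$ exactly as in the paper, and your ``unfoundedness is inherited by $\asetofatoms\setintersection\answersetof{\asetofliterals_1}$ on $\asetofliterals_1$'' lemma is the positive, direct form of the paper's Lemma~\ref{lem:unf} (which states the contrapositive --- that a larger complete set meeting an unfounded set cannot be stable --- and proves it by contradiction through Theorem~\ref{thm:leone}); your case split on condition (2) of unfoundedness is precisely the case analysis hidden in that proof. Where you genuinely improve on the paper's organization is in handling the quantification over subsets: you reduce set-soundness to member-wise soundness via the union, and you obtain the forward direction of completeness for free from soundness plus the ``does not occur'' clauses (instantiating $\asetofliterals_1=\asetofliterals$). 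The paper instead proves completeness by separate contradiction arguments only for $\{\theunit,\theallcancel\}$ and $\{\theunit,\theunfounded\}$, and for the remaining subsets of $\smdisjpropagators$ and $\smpropagators$ merely remarks that the proof ``follows the ideas presented'' earlier; your uniform argument actually discharges those cases, so it is both slightly more general and closer to a complete account of the theorem as stated. Both routes gloss over the same degenerate case (a rule equivalent to the empty clause, where no literal is available for $\theunit$ to fire on), so this is not a gap relative to the paper.
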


We are now ready to state the main result of this section.
\begin{theorem}\label{thm:minitemplate}
For any program $\aprogram$, any type $\atypeofmodel$, and any
$\atypeofmodel$-enforcing set of p-conditions $\asetofpropagators$,
the graph $\ordinarydpt{\asetofpropagators}{\aprogram}$ checks the
$\atypeofmodel$-models of $\aprogram$.
\end{theorem}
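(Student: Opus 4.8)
The plan is to show that the graph $\ordinarydpt{\asetofpropagators}{\aprogram}$ satisfies all four conditions in the definition of ``checks a set.'' The key observation that makes this clean is that the template differs from the plain $\dpll$ graph $\ordinarydp{\aprogram}$ only in the propagation rule: $\propagategen$ replaces $\unitdpll$, while $\faildpll$, $\backtrackdpll$, $\decidedpll$, and $\okruledpll$ are literally the same transition rules. So I would first set up the target set of models as $\asetofmodels = \{\astringofliterals \mid \astringofliterals \text{ is a consistent and complete } \atypeofmodel\text{-model of } \aprogram\}$, noting $\atoms{\asetofmodels} = \atoms{\aprogram}$, and then verify the four conditions, leaning on the $\atypeofmodel$-enforcing hypothesis (both $\atypeofmodel$-soundness and $\atypeofmodel$-completeness) at exactly the points where the behavior departs from the classical case covered by Proposition~\ref{prop:dp}.

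The first condition, that the graph is finite and acyclic, I expect to go through essentially as in the proof of Proposition~\ref{prop:dp} / Theorem~2.13 of \citeN{nie06}, since finiteness follows from there being finitely many basic states over a finite atom set, and acyclicity follows from a well-founded ordering on records (counting decision literals and then literals after the last decision), which is unaffected by replacing $\unitdpll$ with $\propagategen$ because $\propagategen$ still only \emph{appends} a literal not already occurring in $\astringofliterals$. The second condition, that terminal states have the required form, is structural: a state $\astringofliterals$ is non-terminal whenever $\faildpll$, $\backtrackdpll$, $\decidedpll$, $\propagategen$, or $\okruledpll$ applies, and $\okruledpll$ guarantees that any record to which no other rule applies transitions to some $\terminalstate{\astringofliterals}$, so no plain record is terminal and the only terminal states are $\failstate$ and states of the form $\terminalstate{\astringofliterals}$.

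The heart of the argument is conditions (3) and (4), and here is where $\atypeofmodel$-enforcingness does the real work. For condition (3), I would argue that if $\terminalstate{\astringofliterals}$ is reachable, then $\astringofliterals$ must be consistent (otherwise $\faildpll$ or $\backtrackdpll$ would apply, since a terminal-producing state reached by $\okruledpll$ has no applicable rule) and complete over $\atoms{\aprogram}$ (otherwise $\decidedpll$ would apply). Crucially, since $\okruledpll$ fired, $\propagategen$ does not apply, which by definition means $\outprop{\asetofpropagators}{\aprogram}{\astringofliterals} \subseteq \astringofliterals$; combined with consistency and completeness this forces $\outprop{\asetofpropagators}{\aprogram}{\astringofliterals} = \emptyset$, so by $\atypeofmodel$-completeness $\astringofliterals$ is a $\atypeofmodel$-model, giving $\restriction{\astringofliterals}{\atoms{\asetofmodels}} = \astringofliterals \in \asetofmodels$. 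For condition (4), the forward direction uses the same reasoning to conclude that reaching $\failstate$ forces the absence of any $\atypeofmodel$-model: $\atypeofmodel$-soundness guarantees that along any path every literal added by $\unitdpll$-style $\propagategen$ steps is entailed by every $\atypeofmodel$-model extending the current decisions, so a state with no decision literals that is inconsistent (the precondition of $\faildpll$) rules out all $\atypeofmodel$-models. The backward direction—if no $\atypeofmodel$-model exists then $\failstate$ is reachable—follows by choosing a path that exhausts the search via priorities and invoking finiteness plus condition (3) to exclude any $\terminalstate{\astringofliterals}$ outcome.

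I expect the main obstacle to be the careful invariant for condition (4), namely formulating and proving the claim that for any reachable state $\astringofliterals$, every $\atypeofmodel$-model $\asetofliterals_1$ that agrees with the decision literals of $\astringofliterals$ in fact extends $\astringofliterals$; this invariant is what lets $\atypeofmodel$-soundness propagate correctly through $\propagategen$ and $\backtrackdpll$ edges and ultimately licenses the conclusion that $\failstate$-reachability is equivalent to nonexistence of a $\atypeofmodel$-model. Establishing this invariant by induction on the length of the path—handling the $\propagategen$ case via $\atypeofmodel$-soundness and the backtrack case via the standard argument that flipping a decision literal is sound once its current extension is exhausted—is the technically delicate part, while everything else is a direct transcription of the classical $\dpll$ correctness proof with $\atypeofmodel$-completeness substituted at the terminal-state check.
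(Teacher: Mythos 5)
Your plan is correct and takes essentially the same route as the paper: the paper omits a standalone proof of this theorem, stating that it relies on the same techniques as its proof of Theorem~\ref{STS-correctness}, and your four-condition verification --- a well-founded order for finiteness and acyclicity, the structural argument that only $\failstate$ and $\terminalstate{\astringofliterals}$ are terminal, $\atypeofmodel$-completeness applied at states where $\okruledpll$ fires, and the decision-literal invariant (handled via $\atypeofmodel$-soundness for $\propagategen$ and contraposition for $\backtrackdpll$) giving both directions of the $\failstate$ condition --- is precisely the single-layer specialization of Lemmas~\ref{lemma:finite},~\ref{lemma:general},~\ref{lemma:structure} and Proposition~\ref{lemma:statement-e}. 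The invariant you identify as the delicate step is exactly the paper's Lemma~\ref{lemma:general} restricted to one layer, so your proposal matches the intended argument.
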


Theorems~\ref{thm:typecomplete} and~\ref{thm:minitemplate} give rise to
families of valid solvers for deciding where classical, supported, or stable
models exist for a program.
For instance, for a non-disjunctive program $\aprogram$, the graph
$\ordinarydpt{\smpropagators}{\aprogram}$ coincides with the graph
{\sc sm}$_\aprogram$~\cite{lier10} that captures computation of answer set
solver {\smodels}~\cite{sim02}.
The graph $\ordinarydpt{\smdisjpropagators}{\aprogram}$ coincides with the
graph {\sc atleast}$_\aprogram$~\cite{lier10} that provides a procedure for
deciding whether a non-disjunctive program has supported models.
For a disjunctive program~$\aprogram$
the same single layer graph $\ordinarydpt{\smpropagators}{\aprogram}$ forms a
 procedure for deciding whether $\aprogram$ has a stable model.
Note, however, that generally
the problem of deciding whether
$\propunfounded{\astringofliterals}{\aliteral}{\aprogram}$ is
{\npcomplexity}-complete for the case when ${\aprogram}$ is disjunctive.

\subsection{A Two-Layer Graph Template}
We extend here the approach of Section \ref{sec:singlelayertemplate} to capture
two-layer methodology of disjunctive solvers.

\begin{figure}[t]
$$
\begin{array}{llll}
\propagateleft
  & (\astringofliterals,
    \emptyset)_\leftstate
     & \transitionarrow
       (\astringofliterals\aliteral,
       \emptyset)_\leftstate
        & \textrm{if }
          \aliteral\in\outprop{\asetofpropagators_\leftstate}{\mainprogram{\aprogram}}{\astringofliterals}
\\
\\

\propagateright
  & (\astringofliterals,
     \anotherstringofliterals)_\rightstate
     & \transitionarrow
       (\astringofliterals,
       \anotherstringofliterals\aliteral)_\rightstate
        & \textrm{if }
          \aliteral\in\outprop{\asetofpropagators_\rightstate}
                              {\dependingprogram{\aprogram}{\astringofliterals}}
                              {\anotherstringofliterals}

\end{array}
$$
\caption{Transition rules of the graph template
$\simpletemplate{\asetofpropagators_\leftstate}
{\asetofpropagators_\rightstate}
{\themainprogram}{\thedependingprogram}
(\aprogram)$.}
\label{fig:stl0}
\end{figure}

\begin{definition}
Given a program~$\aprogram$,
     sets $\asetofpropagators_\leftstate$ and ${\asetofpropagators_\rightstate}$ of p-conditions,
     a generating function $\themainprogram$,
and a witness function $\thedependingprogram$,
a {\em two-layer template graph}
$\simpletemplate{\asetofpropagators_\leftstate}
     {\asetofpropagators_\rightstate}
     {\themainprogram}{\thedependingprogram}
     (\aprogram)$
is a graph defined
as follows:
\begin{itemize}\itemsep0pt\parskip0pt\parsep0pt
 \item The set of nodes is, as in the previous two-layer graphs, the set of
       states relative to $\atoms{\themainprogram(\aprogram)}$ and
       $\atoms{\thedependingprogram,\aprogram,\atoms{\mainprogram{\aprogram}}}$; and
 \item The transition rules are the rules presented in Figure~\ref{fig:trfirst}
       except the rules $\unitleft$ and $\unitright$, that are replaced by the
       rules $\propagateleft$ and $\propagateright$ presented in
       Figure~\ref{fig:stl0}.
\end{itemize}
\end{definition}


\paragraph{Description of the Template.}
We call the state
$(\emptyset,\emptyset)_{\leftstate}$ {\em initial}.
Note how the rules $\propagateleft$ and $\propagateright$ in
$\simpletemplate{\asetofpropagators_\leftstate}
     {\asetofpropagators_\rightstate}
     {\themainprogram}{\thedependingprogram}
     (\aprogram)$
refer to the parameters ${\asetofpropagators_\leftstate}$,
${\asetofpropagators_\rightstate}$, ${\themainprogram}$ and ${\thedependingprogram}$
of the graph template.
Varying these parameters will allow us to specify transition systems that
capture different disjunctive answer set solvers.
Intuitively, the parameters ${\asetofpropagators_\leftstate}$ and
${\asetofpropagators_\rightstate}$ are sets of p-conditions defining a
propagation rule on generate and test side of computation, respectively.

The instantiation
$\simpletemplate{\uppropagators}{\uppropagators}{\thegencmodels}{\thetestcmodels}(\aprogram)$
of the two-layer graph template results in $\dpgraph{\aprogram}$.
Indeed, the graphs share the same nodes.
Also their rules $\failleft$, $\failright$, $\decideleft$, $\decideright$, $\backtrackleft$,
$\backtrackright$ and $\failcross$ coincide.
It is easy to see that a literal $\aliteral$ is in
$\outprop{\uppropagators}{\thegencmodels(\aprogram)}{\astringofliterals}$ if
and only if the transition rule $\unitleft$ in $\dpgraph{\aprogram}$ is
applicable in $(\astringofliterals,\emptyset)_\leftstate$
and supports the transition to a state
$(\astringofliterals\aliteral,\emptyset)_\leftstate$.
Thus, the transition rule $\propagateleft$ supports the transition from
$(\astringofliterals,\emptyset)_\leftstate$
to $(\astringofliterals\aliteral,\emptyset)_\leftstate$ if and only if the
transition rule $\unitleft$ supports the same transition.
A similar statement holds for the case of $\propagateright$ and $\unitright$.

Recall that in Section~\ref{sec:cmodelswithoutlearning} we showed that  {\cmodels} implementing
backtracking can be defined using the graph $\dpgraph{\aprogram}$.
The fact that instantiation $\simpletemplate{\uppropagators}{\uppropagators}{\thegencmodels}{\thetestcmodels}(\aprogram)$ coincides with $\dpgraph{\aprogram}$ illustrates that the introduced template is sufficient for capturing existing solvers. Next section demonstrates that the proposed template is  suitable for capturing  {\gnt} and {\dlv}.

\paragraph{Instantiation: Approximating and Ensuring Pairs.}
In the definition of the two-layer template graph
$\simpletemplate{\asetofpropagators_\leftstate}
     {\asetofpropagators_\rightstate}
     {\themainprogram}{\thedependingprogram}
     (\aprogram)$
     we pose no restrictions on its four key parameters:
     sets $\asetofpropagators_\leftstate$, ${\asetofpropagators_\rightstate}$ of p-conditions, and  generating and witness functions $\themainprogram$, $\thedependingprogram$.
In practice, when this template is utilized to model, characterize, and elicit disjunctive solvers these four parameters  exhibit specific properties. We now introduce terminology that allows us to specify essential properties of these parameters that will translate into correctness of  solvers captured by properly instantiated template. On the one hand, we introduce the conditions 
on  generating and witness functions under which we call these functions ''approximating`` and ''ensuring``, respectively. On the other hand, we couple these conditions with restrictions on sets of p-conditions so that we can speak of (i) approximating-pair  $(\asetofpropagators_\genleftgen,\genleftgen)$  for a set $\asetofpropagators_\genleftgen$ of p-conditions  and a generating function $\genleftgen$, and (ii)
ensuring-pair  $(\asetofpropagators_\genrighttest,\genrighttest)$ 
 for a set $\asetofpropagators_\genrighttest$ of p-conditions  and a witness function $\genrighttest$. For such pairs, the  template instantiation
$\simpletemplate{\asetofpropagators_\genleftgen}{\asetofpropagators_\genrighttest}
                {\genleftgen}{\genrighttest}(\aprogram)$ results in a graph that checks  
                stable models of $\aprogram$. 
As a result, when we characterize such solvers as {\gnt} and {\dlv} by means of the two-layer template we focus on (i)~specifying their generating and witness function as well as their sets of p-conditions, and (ii)~illustrating that they form proper approximating and ensuring pairs. This also brings us to the realization that an inception of a novel solver can be triggered by a creation of a novel approximation and ensuring pairs or their combinations. 
                We now make these ideas precise.

For types $\atypeofmodel$ and
$\atypeofmodel_1$, we say that a generating function
$\themainprogram$ is $\atypeofmodel_1$-approximating with respect to
type $\atypeofmodel$ if
for any program~$\aprogram$:
\begin{enumerate}\itemsep0pt\parskip0pt\parsep0pt
  \item For any stable model $\astringofliterals$ of $\aprogram$ there is
        a {\atypeofmodel$_1$}-model $\astringofliterals_1$ of
        $\themainprogram(\aprogram)$ such that
        $\astringofliterals=\restriction{\astringofliterals_1}{\atoms{\aprogram}}$; and
  \item For any {\atypeofmodel$_1$}-model $\asetofliterals$ of
        $\themainprogram(\aprogram)$, $\asetofliterals_{|\atoms{\aprogram}}$ is
        a $\atypeofmodel$-model of $\aprogram$.
\end{enumerate}

Consider the generating function $\thecnfcomp(\aprogram)$
that returns a CNF formula, which stands for
the completion $\completion{\aprogram}$ converted to CNF using straightforward
equivalent transformations.
In other words, $\thecnfcomp(\aprogram)$ consists of clauses of two kinds
\begin{enumerate}
  \item the rules $\ahead\aspimplication\abody$ of the program written as
        clauses $\ahead\vee\opposite{\abody}$, and
  \item formulas of $\thecnfcomp(\aprogram)$ from~\eqref{eq:comp-second}
        converted to CNF using the distributivity of disjunction over
        conjunction.\footnote{It is essential that repetitions are not removed
        in the process of clausification. For instance,
        $\thecnfcomp(a\aspimplication\ not\ a)=(a\vee a)\wedge (\neg a\vee \neg a).$}
\end{enumerate}
The function $\thecnfcomp$
is {\classic}-approximating with respect to {\supported}.
Indeed,
\begin{enumerate}\itemsep0pt\parskip0pt\parsep0pt
  \item any stable model of a program $\aprogram$ is also a {\classic}-model of
        $\thecnfcomp(\aprogram)$, and
  \item any {\classic}-model of $\thecnfcomp(\aprogram)$ is a {\supported}-model
        of $\aprogram$.
\end{enumerate}
Since any supported model is also a classical model,
the $\thecnfcomp$ function is also {\classic}-approximating with respect to {\classic}.
Note that when a generating function $\themainprogram$ is
$\atypeofmodel_1$-approximating with respect to $\atypeofmodel$, then
enumerating all $\atypeofmodel_1$-models of $\themainprogram(\aprogram)$ results
in enumerating some  $\atypeofmodel$-models of $\aprogram$ modulo a restriction
to $\atoms{\aprogram}$.

For types $\atypeofmodel$ and
$\atypeofmodel_1$, and a witness function
$\thedependingprogram$, we say that $\thedependingprogram$ is {\em
{\atypeofmodel$_1$}-ensuring} with respect to $\atypeofmodel$ when
for any set~$\asetofliterals$ of
literals covering $\aprogram$ such that $\asetofliterals_{|\atoms{\aprogram}}$
is ${\atypeofmodel}$-model of $\aprogram$, $\asetofliterals_{|\atoms{\aprogram}}$ is a
stable model of $\aprogram$ if and only if
$\thedependingprogram(\aprogram,\asetofliterals)$ results in a program that
has no {\atypeofmodel$_1$}-model.

For instance, the witness function~$\thetestcmodels$ is {\classic}-ensuring
with respect to $\classic$.
Since any {\supported}-model is also a {\classic}-model, 
the function $\thetestcmodels$ is also {\classic}-ensuring with respect to
{\supported}.
It is easy to see that when a witness function $\thedependingprogram$ is
$\atypeofmodel_1$-ensuring with respect to $\atypeofmodel$, then
given any $\atypeofmodel$-model~$\astringofliterals$ of a program $\aprogram$
we may use the function $\thedependingprogram$ to test that
$\astringofliterals$ is also a stable model of $\aprogram$. Indeed, an application of
$\thedependingprogram$ resulting in a program that has no
$\atypeofmodel_1$-models translates into the statement that $\astringofliterals$
is a stable model of $\aprogram$.

These newly defined concepts of approximating and
ensuring functions provide the following characterization for
the set of stable models of a program $\aprogram$.

\begin{proposition}\label{prop:approx-ensuring}
For any types $\atypeofmodel$, $\atypeofmodel_1$ and $\atypeofmodel_2$,
generating function $\themainprogram$
that is $\atypeofmodel_1$-approximating with respect to~$\atypeofmodel$, witness function
$\thedependingprogram$ that is $\atypeofmodel_2$-ensuring
with respect to $\atypeofmodel$, and program~$\aprogram$,
the set of all stable models of $\aprogram$ is
$$\{\restriction{\astringofliterals}{\atoms{\aprogram}}\mid\astringofliterals\textrm{ is a }\atypeofmodel_1\textrm{-model of }
\themainprogram(\aprogram)\textrm{ and }
\thedependingprogram(\aprogram,\astringofliterals)\textrm{ has no }
\atypeofmodel_2\textrm{-models}\}.$$
\end{proposition}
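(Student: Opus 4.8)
The plan is to establish the set equality by double inclusion, unwinding the definitions of an $\atypeofmodel_1$-approximating generating function and an $\atypeofmodel_2$-ensuring witness function. Write $S$ for the set of stable models of $\aprogram$ and $T$ for the set on the right-hand side of the claimed equality. Both inclusions will rest on two observations that I would record once at the outset. First, since $\themainprogram$ is a generating function we have $\atoms{\aprogram}\subseteq\atoms{\themainprogram(\aprogram)}$, and any $\atypeofmodel_1$-model $\astringofliterals_1$ of $\themainprogram(\aprogram)$ is by definition a consistent and complete set of literals over $\atoms{\themainprogram(\aprogram)}$; hence $\atoms{\aprogram}\subseteq\atoms{\astringofliterals_1}$, i.e.\ $\astringofliterals_1$ covers $\aprogram$. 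Second, by clause~2 of the approximating definition, $\restriction{\astringofliterals_1}{\atoms{\aprogram}}$ is a $\atypeofmodel$-model of $\aprogram$ for any such $\astringofliterals_1$. These two facts are precisely the hypotheses needed to invoke the ensuring definition at $\astringofliterals_1$.

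For $S\subseteq T$, I would take a stable model $\astringofliterals$ of $\aprogram$ and apply clause~1 of the approximating definition to obtain a $\atypeofmodel_1$-model $\astringofliterals_1$ of $\themainprogram(\aprogram)$ with $\astringofliterals=\restriction{\astringofliterals_1}{\atoms{\aprogram}}$. By the two shared observations, $\astringofliterals_1$ covers $\aprogram$ and $\restriction{\astringofliterals_1}{\atoms{\aprogram}}$ is a $\atypeofmodel$-model of $\aprogram$, so the ensuring definition applies at $\astringofliterals_1$; since moreover $\restriction{\astringofliterals_1}{\atoms{\aprogram}}=\astringofliterals$ is a stable model, the ``only if'' direction of the ensuring biconditional yields that $\thedependingprogram(\aprogram,\astringofliterals_1)$ has no $\atypeofmodel_2$-model, so $\astringofliterals\in T$ with witness $\astringofliterals_1$. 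For $T\subseteq S$, I would fix $X\in T$ together with a witnessing $\atypeofmodel_1$-model $\astringofliterals_1$ of $\themainprogram(\aprogram)$ satisfying $X=\restriction{\astringofliterals_1}{\atoms{\aprogram}}$ and such that $\thedependingprogram(\aprogram,\astringofliterals_1)$ has no $\atypeofmodel_2$-model. Again the shared observations give that $\astringofliterals_1$ covers $\aprogram$ and that $X$ is a $\atypeofmodel$-model of $\aprogram$, so the ensuring definition applies; its ``if'' direction then converts the absence of $\atypeofmodel_2$-models of $\thedependingprogram(\aprogram,\astringofliterals_1)$ into the conclusion that $X$ is a stable model of $\aprogram$, i.e.\ $X\in S$.

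I do not expect a deep obstacle here; the argument is essentially careful bookkeeping of which type parameter attaches to which function, mediated by the three definitions. The one step I would single out as requiring care is the verification that the witnessing model $\astringofliterals_1$ genuinely covers $\aprogram$, since the ensuring definition is stated only for covering sets of literals: this is exactly where the generating-function property $\atoms{\aprogram}\subseteq\atoms{\themainprogram(\aprogram)}$, combined with the completeness of $\atypeofmodel_1$-models over $\atoms{\themainprogram(\aprogram)}$, is indispensable. A secondary point worth stating explicitly is that the intermediate type $\atypeofmodel$ is the common bridge: the $\atypeofmodel$-model conclusion of clause~2 of the approximating definition is precisely the $\atypeofmodel$-model hypothesis of the ensuring definition, so the two functions compose correctly even though $\atypeofmodel_1$ and $\atypeofmodel_2$ may differ.
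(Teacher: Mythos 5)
Your proof is correct and follows essentially the same route as the paper's: a double inclusion in which clause~1 of the approximating definition together with the ``only if'' direction of the ensuring biconditional gives $S\subseteq T$, and clause~2 together with the ``if'' direction gives $T\subseteq S$. The only difference is that you make explicit the verification that a $\atypeofmodel_1$-model of $\themainprogram(\aprogram)$ covers $\aprogram$ (so that the ensuring definition is applicable), a point the paper's proof leaves implicit.
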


We now introduce the notion of
ensuring and approximating pairs that permit an operational use of generating and witness functions, by
matching them with a relevant set of propagators.
We call a pair $(\asetofpropagators,\themainprogram)$ of a set of p-conditions
and a generating function an {\em approximating-pair} with respect to
$\atypeofmodel$ if for some type $\atypeofmodel_1$, the set $\asetofpropagators$
is $\atypeofmodel_1$-enforcing and the function $\themainprogram$ is
$\atypeofmodel_1$-approximating with respect to $\atypeofmodel$.
For example, the pair $(\uppropagators,\thecnfcomp)$ is an approximating-pair
with respect to {\supported} as well as to {\classic}.
The $(\uppropagators,\thegencmodels)$ is also an approximating-pair
with respect to {\supported} as well as to {\classic}.


We call a pair $(\asetofpropagators,\thedependingprogram)$ of a set of
p-conditions and a witness function an {\em ensuring-pair} with respect to
$\atypeofmodel$ if for some type $\atypeofmodel_1$, the set $\asetofpropagators$
is $\atypeofmodel_1$-enforcing and the function $\thedependingprogram$ is
$\atypeofmodel_1$-ensuring with respect to $\atypeofmodel$.
For example, the pair $(\uppropagators,\thetestcmodels)$ is an ensuring-pair
with respect to any defined type.

We are now ready to state the main result of this section.

\begin{theorem}~\label{STS-correctness}
For any program $\aprogram$, any type $\atypeofmodel$,
any $(\asetofpropagators_\genleftgen,\genleftgen)$ approximating-pair with
respect to~$\atypeofmodel$, and
any $(\asetofpropagators_\genrighttest,\genrighttest)$ ensuring-pair with
respect to~$\atypeofmodel$,
the graph
$\simpletemplate{\asetofpropagators_\genleftgen}{\asetofpropagators_\genrighttest}
                {\genleftgen}{\genrighttest}(\aprogram)$
checks the stable models of $\aprogram$.
\end{theorem}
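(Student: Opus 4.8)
The plan is to verify directly the four conditions in the definition of ``checks'' for the graph $\simpletemplate{\asetofpropagators_\genleftgen}{\asetofpropagators_\genrighttest}{\genleftgen}{\genrighttest}(\aprogram)$, taking $\asetofmodels$ to be the set of stable models of $\aprogram$. Since the approximating-pair supplies a type $\atypeofmodel_1$ for which $\asetofpropagators_\genleftgen$ is $\atypeofmodel_1$-enforcing and $\genleftgen$ is $\atypeofmodel_1$-approximating with respect to $\atypeofmodel$, and the ensuring-pair supplies a type $\atypeofmodel_2$ for which $\asetofpropagators_\genrighttest$ is $\atypeofmodel_2$-enforcing and $\genrighttest$ is $\atypeofmodel_2$-ensuring with respect to $\atypeofmodel$, I would fix these $\atypeofmodel_1,\atypeofmodel_2$ at the outset. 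The whole argument rests on the observation that $\simpletemplate{\asetofpropagators_\genleftgen}{\asetofpropagators_\genrighttest}{\genleftgen}{\genrighttest}(\aprogram)$ differs from the plain two-layer graph $\firstgraph{\genleftgen}{\genrighttest}(\aprogram)$ of Proposition~\ref{thm:firstgraph} only by replacing $\unitleft$ and $\unitright$ with $\propagateleft$ and $\propagateright$. Hence the work is to show that, under the enforcing hypotheses, these generalized propagation rules play exactly the role the unit rules played in the proof of Proposition~\ref{thm:firstgraph}, but with ``$\atypeofmodel_1$-model of $\genleftgen(\aprogram)$'' in place of ``classical model of $\mainprogram{\aprogram}$'' and ``$\atypeofmodel_2$-model of $\genrighttest(\aprogram,\astringofliterals)$'' in place of ``classical model of $\dependingprogram{\aprogram}{\astringofliterals}$''.

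I would first prove two \emph{layer lemmas} isolating the completeness half of the enforcing assumption. Using $\atypeofmodel_1$-completeness of $\asetofpropagators_\genleftgen$, no left-rule applies at a reachable state $(\astringofliterals,\emptyset)_\leftstate$ --- equivalently, $\crossrule$ is applicable there --- precisely when $\astringofliterals$ is a consistent and complete $\atypeofmodel_1$-model of $\genleftgen(\aprogram)$: consistency excludes $\failleft$ and $\backtrackleft$, completeness excludes $\decideleft$, and $\outprop{\asetofpropagators_\genleftgen}{\genleftgen(\aprogram)}{\astringofliterals}=\emptyset$ (no $\propagateleft$) is equivalent, by $\atypeofmodel_1$-completeness, to $\astringofliterals$ being a $\atypeofmodel_1$-model. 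The symmetric statement for the right layer, using $\atypeofmodel_2$-completeness, says that no right-rule applies at $(\astringofliterals,\anotherstringofliterals)_\rightstate$ iff $\anotherstringofliterals$ is a $\atypeofmodel_2$-model of $\genrighttest(\aprogram,\astringofliterals)$. This is where Theorem~\ref{thm:minitemplate} enters: projecting onto the left component, the reachable left states evolve as in $\ordinarydpt{\asetofpropagators_\genleftgen}{\genleftgen(\aprogram)}$ (with $\crossrule$ in the role of $\okruledpll$ and $\failleft$ in the role of $\faildpll$), and for each frozen left $\atypeofmodel_1$-model $\astringofliterals$ the right component evolves as in $\ordinarydpt{\asetofpropagators_\genrighttest}{\genrighttest(\aprogram,\astringofliterals)}$.

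With the layer lemmas in hand, I would run the argument of Proposition~\ref{thm:firstgraph} essentially verbatim. Conditions~1 and~2 carry over unchanged: finiteness is immediate, acyclicity uses the same lexicographic measure, since $\propagateleft$ and $\propagateright$, like the unit rules, only append a literal that does not already occur, and the terminal-node analysis (every $(\astringofliterals,\emptyset)_\leftstate$ admits a left-rule or $\crossrule$, and every $(\astringofliterals,\anotherstringofliterals)_\rightstate$ admits a right-rule or one of $\failcross,\backtrackcross$) is structural. For condition~3, a reachable $\terminalstate{\astringofliterals}$ can only be produced by $\failright$, which by the layer lemmas forces $\astringofliterals$ to be a $\atypeofmodel_1$-model of $\genleftgen(\aprogram)$ for which $\genrighttest(\aprogram,\astringofliterals)$ has no $\atypeofmodel_2$-model; Proposition~\ref{prop:approx-ensuring}, applied with these $\atypeofmodel_1,\atypeofmodel_2$, then yields that $\restriction{\astringofliterals}{\atoms{\aprogram}}$ is a stable model, and since this set is nonempty and every stable model is complete over $\atoms{\aprogram}$ we have $\atoms{\asetofmodels}=\atoms{\aprogram}$ and hence $\restriction{\astringofliterals}{\atoms{\asetofmodels}}\in\asetofmodels$. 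For the ``if'' half of condition~4, Proposition~\ref{prop:approx-ensuring} identifies absence of a stable model with absence of any $\atypeofmodel_1$-model $\astringofliterals$ of $\genleftgen(\aprogram)$ whose witness $\genrighttest(\aprogram,\astringofliterals)$ lacks a $\atypeofmodel_2$-model; by condition~3 no $\terminalstate{\cdot}$ is then reachable, and since the graph is finite, acyclic, and has only $\failstate$ and states $\terminalstate{\cdot}$ as terminals, some maximal path from the initial state must reach $\failstate$.

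The main obstacle is the converse half of condition~4: that reachability of $\failstate$ certifies the absence of a stable model. The structural argument alone does not suffice, and here I expect to lean on $\atypeofmodel_1$- and $\atypeofmodel_2$-soundness. Both $\failleft$ and $\failcross$ fire only when the left record $\astringofliterals$ carries no decision literal, so $\astringofliterals$ was obtained from $\emptyset$ without branching. By the soundness invariant of the single-layer template (from Theorem~\ref{thm:minitemplate}), the literals accumulated without decisions belong to every $\atypeofmodel_1$-model of $\genleftgen(\aprogram)$; when $\astringofliterals$ is in addition complete (the $\failcross$ case, where it is a $\atypeofmodel_1$-model) or inconsistent (the $\failleft$ case), this forces $\genleftgen(\aprogram)$ to have at most that one $\atypeofmodel_1$-model, which the firing of $\failcross$ shows not to be a witness. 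The delicate point is justifying that backtracking on the left is sound across the two layers --- i.e.\ that each $\backtrackcross$ genuinely records the refutation of the flipped candidate, certified by the right layer having found an actual $\atypeofmodel_2$-model --- which is precisely the two-layer soundness reasoning of Proposition~\ref{thm:firstgraph} combined with the within-layer invariant above. Concluding that no $\atypeofmodel_1$-model of $\genleftgen(\aprogram)$ is a witness, Proposition~\ref{prop:approx-ensuring} rules out any stable model, completing condition~4 and the proof.
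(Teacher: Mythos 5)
Your overall architecture coincides with the paper's: fix the types $\atypeofmodel_1,\atypeofmodel_2$ supplied by the two pairs; use completeness of the enforcing sets to prove your ``layer lemmas'' (these are the paper's Propositions~\ref{lemma:statement-e} and~\ref{lemma:statement-d}); establish finiteness and acyclicity by a lexicographic well-founded measure (the paper's Lemma~\ref{lemma:finite}); analyze which rule can produce each kind of terminal state (Lemma~\ref{lemma:structure}); and translate between ``$\atypeofmodel_1$-models of $\genleftgen(\aprogram)$ whose witness has no $\atypeofmodel_2$-model'' and stable models via Proposition~\ref{prop:approx-ensuring}. Note that the paper's logical order is the reverse of yours: it proves the general theorem from these lemmas and then obtains Proposition~\ref{thm:firstgraph} as the instantiation $\simpletemplate{\uppropagators}{\uppropagators}{\themainprogram}{\thedependingprogram}(\aprogram)$, so ``generalizing the proof of Proposition~\ref{thm:firstgraph}'' is the same mathematics in opposite packaging.

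There is, however, one step that fails as literally stated, and it sits at the heart of the hard direction. Your invariant ``the literals accumulated without decisions belong to every $\atypeofmodel_1$-model of $\genleftgen(\aprogram)$'' is false in the two-layer graph: after a $\backtrackcross$ step the left record contains a flipped literal $\opposite{\aliteral}$ that is \emph{not} entailed over all $\atypeofmodel_1$-models of $\genleftgen(\aprogram)$ --- it is entailed only over those $\atypeofmodel_1$-models $\asetofliterals$ for which $\genrighttest(\aprogram,\asetofliterals)$ has no $\atypeofmodel_2$-model, because the flip is certified by the right layer exhibiting a $\atypeofmodel_2$-model of the abandoned candidate's witness. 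The paper's Lemma~\ref{lemma:general}(b) states exactly this \emph{relativized} invariant (together with its right-layer analogue~(a)) and proves it by induction on path length, with $\backtrackcross$ as the critical case handled via Proposition~\ref{lemma:statement-d}, items~(c) and~(d). You do flag this as ``the delicate point'' and your final conclusion (``no $\atypeofmodel_1$-model of $\genleftgen(\aprogram)$ is a witness'') is the correct relativized one, but the unrelativized statement cannot even serve as an induction hypothesis; the relativization must be built into the invariant from the start, not patched on afterwards. A second, smaller instance of the same issue: you attribute condition~3 to the completeness-based layer lemmas, but when $\failright$ fires the right record is \emph{inconsistent}, so completeness says nothing there; concluding that $\genrighttest(\aprogram,\astringofliterals)$ has no $\atypeofmodel_2$-model requires the right-layer soundness invariant (Lemma~\ref{lemma:general}(a)) applied to a decision-free inconsistent record --- any $\atypeofmodel_2$-model would have to satisfy all of its literals, which is impossible.
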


Theorem~\ref{STS-correctness} illustrates
how the template
$\simpletemplate{\asetofpropagators_\genleftgen}{\asetofpropagators_\genrighttest}
                {\genleftgen}{\genrighttest}(\aprogram)$
can serve as a framework for
defining transitions systems that result in correct algorithms for deciding
whether a program $\aprogram$ has a stable model.
The facts that $(\uppropagators,\thegencmodels)$ is an approximating-pair with
respect to $\classic$ and that $(\uppropagators,\thetestcmodels)$ is an
ensuring-pair with respect to $\classic$, together with
Theorem~\ref{STS-correctness}, subsume the result of
Proposition~\ref{correctnessCM}.

We now state propositions that capture interesting properties about states of
the graph
$\simpletemplate{\asetofpropagators_\genleftgen}{\asetofpropagators_\genrighttest}
                {\genleftgen}{\genrighttest}(\aprogram)$.
The former proposition concerns states with the label $\leftstate$, the latter
concerns states with the label $\rightstate$.

\begin{proposition}~\label{lemma:statement-e}
For any type {\atypeofmodel},
generating function $\genleftgen$,
witness function $\genrighttest$,
{\atypeofmodel}-enforcing set of p-conditions $\asetofpropagators_\genleftgen$,
set of p-conditions  $\asetofpropagators_\genrighttest$, and
program $\aprogram$, if no left-rule is applicable in some state
$(\aliteral_1.\cdots.\aliteral_{\athirdindex_1},
\anotherliteral_1.\cdots.\anotherliteral_{\athirdindex_2})_\leftstate$
in
$\simpletemplate{\asetofpropagators_\genleftgen}{\asetofpropagators_\genrighttest}
                {\genleftgen}{\genrighttest}(\aprogram)$
reachable from the initial state, then
$\aliteral_1.\cdots.\aliteral_{\athirdindex_1}$ is a {\atypeofmodel}-model
of $\genleftgen(\aprogram)$.
\end{proposition}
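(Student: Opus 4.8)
The plan is to show that, for such a reachable $\leftstate$-state, its first component $\astringofliterals=\aliteral_1.\cdots.\aliteral_{\athirdindex_1}$ is a \emph{consistent} and \emph{complete} set of literals over $\atoms{\genleftgen(\aprogram)}$ on which the propagators of $\asetofpropagators_\genleftgen$ produce nothing new, and then to invoke the completeness half of the $\atypeofmodel$-enforcing hypothesis. Before analysing the left-rules, I would first record a structural fact about reachable $\leftstate$-states: in every such state the second component is empty. This follows by induction on the length of a path from the initial state $(\emptyset,\emptyset)_\leftstate$, since the only transitions whose target carries the label $\leftstate$ are $\backtrackleft$, $\propagateleft$, $\decideleft$ and $\backtrackcross$, and each of them yields a state of the form $(\cdot,\emptyset)_\leftstate$. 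Hence in the state under consideration $\anotherliteral_1.\cdots.\anotherliteral_{\athirdindex_2}=\emptyset$, and the four left-rules are exactly the rules that could fire there.

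Next I would establish consistency and completeness. For consistency, suppose $\astringofliterals$ were inconsistent. If it contained no decision literal then $\failleft$ would apply; otherwise, writing $\astringofliterals=\astringofliterals_1\decision{\aliteral}\astringofliterals_2$ with $\decision{\aliteral}$ the last decision literal (so $\astringofliterals_2$ has none), $\backtrackleft$ would apply. Either way this contradicts that no left-rule fires, so $\astringofliterals$ is consistent. For completeness over $\atoms{\genleftgen(\aprogram)}$, suppose some atom $\anatom\in\atoms{\genleftgen(\aprogram)}$ occurred neither positively nor negatively in $\astringofliterals$; then, since $\astringofliterals$ is consistent, $\decideleft$ would apply to $\anatom$, again a contradiction. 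Thus $\astringofliterals$ is a consistent and complete set of literals over $\atoms{\genleftgen(\aprogram)}$, i.e. an assignment to the atoms of $\genleftgen(\aprogram)$.

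Finally I would argue that $\outprop{\asetofpropagators_\genleftgen}{\genleftgen(\aprogram)}{\astringofliterals}=\emptyset$ and conclude. Consider any $\aliteral\in\outprop{\asetofpropagators_\genleftgen}{\genleftgen(\aprogram)}{\astringofliterals}$. Since each p-condition returns only literals that do not already occur in $\astringofliterals$ (cf. Figure~\ref{fig:prop0}), $\aliteral$ does not occur in $\astringofliterals$, so the transition $(\astringofliterals,\emptyset)_\leftstate\transitionarrow(\astringofliterals\aliteral,\emptyset)_\leftstate$ is a legitimate application of $\propagateleft$ --- contradicting that no left-rule fires. Hence the output set is empty. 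Because $\astringofliterals$ is consistent and complete over $\atoms{\genleftgen(\aprogram)}$ and $\asetofpropagators_\genleftgen$ is $\atypeofmodel$-enforcing (in particular $\atypeofmodel$-complete), emptiness of $\outprop{\asetofpropagators_\genleftgen}{\genleftgen(\aprogram)}{\astringofliterals}$ is equivalent to $\astringofliterals$ being a $\atypeofmodel$-model of $\genleftgen(\aprogram)$, which is the claim.

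The step I expect to require the most care is the last one: turning ``$\propagateleft$ is inapplicable'' into ``the propagation output is \emph{empty}'' rather than merely ``contained in $\astringofliterals$''. This hinges on the p-conditions never re-deriving a literal already present in the record (the ``does not occur in $\astringofliterals$'' guard shared by the p-conditions), together with the completeness of $\astringofliterals$, which guarantees that any genuinely new literal could in fact be appended as a valid record entry; only then does inapplicability of $\propagateleft$ force the output to vanish and let the $\atypeofmodel$-completeness characterisation apply. Note that $\atypeofmodel$-soundness is not needed for this proposition --- only the completeness half of the enforcing hypothesis is used.
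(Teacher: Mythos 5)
Your proof is correct and follows essentially the same route as the paper's: consistency of $\aliteral_1.\cdots.\aliteral_{\athirdindex_1}$ from the inapplicability of $\failleft$ and $\backtrackleft$, completeness over $\atoms{\genleftgen(\aprogram)}$ from the inapplicability of $\decideleft$, emptiness of $\outprop{\asetofpropagators_\genleftgen}{\genleftgen(\aprogram)}{\astringofliterals}$ from the inapplicability of $\propagateleft$, and finally the $\atypeofmodel$-completeness half of the enforcing hypothesis. The two points you take care to establish explicitly --- that reachable $\leftstate$-states have an empty second component (so the left-rules are the relevant ones), and that inapplicability of $\propagateleft$ forces the propagation output to be \emph{empty} rather than merely contained in the record, via the freshness guard of the p-conditions --- are left implicit in the paper's own proof, so your version is, if anything, slightly more careful.
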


\begin{proposition}~\label{lemma:statement-d}
For any types {\atypeofmodel$_1$} and {\atypeofmodel$_2$},
generating function $\genleftgen$
witness function $\genrighttest$,
{\atypeofmodel$_1$}-enforcing set of p-conditions
$\asetofpropagators_\genleftgen$,
{\atypeofmodel$_2$}-enforcing set of p-conditions
$\asetofpropagators_\genrighttest$,
program $\aprogram$, and a state
$(\aliteral_1.\cdots.\aliteral_{\athirdindex_1},
\anotherliteral_1.\cdots.\anotherliteral_{\athirdindex_2})_\rightstate$ in
$\simpletemplate{\asetofpropagators_\genleftgen}{\asetofpropagators_\genrighttest}
                {\genleftgen}{\genrighttest}(\aprogram)$
reachable from the initial state, the following conditions hold:
\setlength\leftmargin{25pt}
\begin{enumerate}\itemsep0pt\parskip0pt\parsep0pt
 \item[(a)] $\genrighttest(\aprogram,\aliteral_1.\cdots.\aliteral_{\athirdindex_1})$ is defined,
 \item[(b)] $\anotherliteral_1.\cdots.\anotherliteral_{\athirdindex_2}$ is a set of literals over
            $\genrighttest(\aprogram,\astringofliterals)$,
 \item[(c)] $\aliteral_1.\cdots.\aliteral_{\athirdindex_1}$ is a {\atypeofmodel$_1$}-model
            of $\genleftgen(\aprogram)$, and
 \item[(d)] If no right-rule is applicable to
            $(\aliteral_1.\cdots.\aliteral_{\athirdindex_1},
              \anotherliteral_1.\cdots.\anotherliteral_{\athirdindex_2})_\rightstate$
            then $\anotherliteral_1.\cdots.\anotherliteral_{\athirdindex_2}$ is a
            {\atypeofmodel$_2$}-model of
            $\genrighttest(\aprogram,\aliteral_1.\cdots.\aliteral_{\athirdindex_1})$.
\end{enumerate}
\end{proposition}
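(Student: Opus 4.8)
The plan is to establish claims (a)--(c) simultaneously by induction on the length of a path from the initial state $(\emptyset,\emptyset)_\leftstate$ to the state in question, and then to obtain (d) as a purely local consequence of (a), (b) and the {$\atypeofmodel_2$}-completeness of $\asetofpropagators_\genrighttest$. Throughout I abbreviate the left record $\aliteral_1.\cdots.\aliteral_{\athirdindex_1}$ by $\astringofliterals$ and the right record $\anotherliteral_1.\cdots.\anotherliteral_{\athirdindex_2}$ by $\anotherstringofliterals$. The structural fact driving the induction is that the only transition rules producing a target labelled $\rightstate$ are $\crossrule$, $\propagateright$, $\backtrackright$ and $\decideright$; among these only $\crossrule$ touches the left record, and it merely copies $\astringofliterals$ unchanged while resetting the right record to $\emptyset$. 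Consequently the left record of any reachable $\rightstate$-state coincides with the left record at the last application of $\crossrule$, and the right record is built up by the three right-rules alone.

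For the inductive step I would split on the last transition entering $(\astringofliterals,\anotherstringofliterals)_\rightstate$. Suppose it is $\crossrule$. Then $\anotherstringofliterals=\emptyset$, and the predecessor is a reachable state $(\astringofliterals,\emptyset)_\leftstate$ to which no left-rule applies. Since $\asetofpropagators_\genleftgen$ is {$\atypeofmodel_1$}-enforcing, Proposition~\ref{lemma:statement-e} gives that $\astringofliterals$ is a {$\atypeofmodel_1$}-model of $\genleftgen(\aprogram)$, which is exactly (c). Such a model is a consistent and complete set of literals over $\atoms{\genleftgen(\aprogram)}$; because $\genleftgen$ is a generating function, $\atoms{\aprogram}\subseteq\atoms{\genleftgen(\aprogram)}$, so $\astringofliterals$ is a consistent set of literals covering $\aprogram$, whence $\genrighttest(\aprogram,\astringofliterals)$ is defined, giving (a). Claim (b) is immediate since $\anotherstringofliterals=\emptyset$.

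If instead the last transition is $\propagateright$, $\backtrackright$ or $\decideright$, the left record is unchanged, so (a) and (c) transfer directly from the induction hypothesis applied to the predecessor. For (b) I would check that the new right record consists only of literals over $\atoms{\genrighttest(\aprogram,\astringofliterals)}$: by the induction hypothesis the predecessor's right record already is such a set, the literal added by $\propagateright$ belongs to a propagation image relative to $\genrighttest(\aprogram,\astringofliterals)$ and is hence over those atoms, the literal added by $\decideright$ is required by that rule to be over them, and $\backtrackright$ retains only a prefix of the predecessor's record together with the complement of one of its literals. In every case the new right record is again a set of literals over $\genrighttest(\aprogram,\astringofliterals)$.

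It remains to derive (d); assume no right-rule applies to $(\astringofliterals,\anotherstringofliterals)_\rightstate$. First, $\anotherstringofliterals$ must be consistent: if it were inconsistent, then $\failright$ would fire when $\anotherstringofliterals$ has no decision literal and $\backtrackright$ would fire when it does, so at least one right-rule would apply. Given consistency, the inapplicability of $\decideright$ forces $\anotherstringofliterals$ to be complete over $\atoms{\genrighttest(\aprogram,\astringofliterals)}$, and the inapplicability of $\propagateright$ forces $\outprop{\asetofpropagators_\genrighttest}{\genrighttest(\aprogram,\astringofliterals)}{\anotherstringofliterals}=\emptyset$, using that every p-condition emits only literals absent from its argument record. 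Thus $\anotherstringofliterals$ is a consistent and complete set of literals over $\atoms{\genrighttest(\aprogram,\astringofliterals)}$ whose propagation image is empty, and the {$\atypeofmodel_2$}-completeness of $\asetofpropagators_\genrighttest$ yields that $\anotherstringofliterals$ is a {$\atypeofmodel_2$}-model of $\genrighttest(\aprogram,\astringofliterals)$, establishing (d). I expect the delicate part to be precisely this last paragraph: one must squeeze out exactly the three hypotheses---consistency, completeness over the witness program's atoms, and empty propagation image---needed to invoke {$\atypeofmodel_2$}-completeness from the combined inapplicability of all four right-rules. The only other point requiring care is verifying, in the $\crossrule$ case, that the witness function is actually defined on $\astringofliterals$, which rests on the coverage guaranteed by $\atoms{\aprogram}\subseteq\atoms{\genleftgen(\aprogram)}$.
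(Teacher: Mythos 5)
Your proposal is correct and follows essentially the same route as the paper's proof: induction on path length for (a)--(c) with a case split on the last transition (invoking Proposition~\ref{lemma:statement-e} in the $\crossrule$ case and coverage of $\aprogram$ via $\atoms{\aprogram}\subseteq\atoms{\genleftgen(\aprogram)}$ to get definedness of the witness program), and then a local derivation of (d) from the inapplicability of $\failright$/$\backtrackright$ (consistency), $\decideright$ together with (b) (completeness over the witness program's atoms), and $\propagateright$ (empty propagation image), concluding by {$\atypeofmodel_2$}-completeness.
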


\section{Applications of the Template}\label{sec:applications}
Section~\ref{sec:cmodelswithoutlearning}
 illustrates how  {\cmodels} implementing
backtracking can be defined using the graph $\dpgraph{\aprogram}$, while
the previous section states that the instantiation
$\simpletemplate{\uppropagators}{\uppropagators}{\thegencmodels}{\thetestcmodels}(\aprogram)$ 
of the two-layer graph template results in $\dpgraph{\aprogram}$. Thus, this template is suitable for capturing computations of {\cmodels}.
 In this
section, we show how the template also captures the solvers {\gnt} and
{\dlv} without backjumping.
Then, we discuss how the framework facilitates
the design of new abstract solvers and their comparison, by means of inspecting the structures
of the related graphs.

\paragraph{Abstract {\gnt}.}
We now show how the procedure underlying disjunctive solver~{\gnt} can be
captured by the two-layer template.
Unlike solver {\cmodels} that uses the {\dpll} procedure for generating and
testing, system {\gnt} uses the {\smodels} procedure for respective tasks.
Recall that the {\smodels}
procedure finds stable models for non-disjunctive logic programs, while the
{\dpll} procedure finds classical models.
The graph~{\sc sm}$_\aprogram$ (Section~\ref{sec:singlelayertemplate})
captures the computation underlying {\smodels} just as
the graph $\ordinarydp{\aprogram}$ captures the computation
underlying {\dpll}.
It forms a basis for
devising the transition system suitable to describe {\gnt}.
The graph describing the general
structure of {\gnt} is obtained from the graph template
$\simpletemplate{\smpropagators}{\smpropagators}{\themainprogram}{\thedependingprogram}(\aprogram)$
that rely on the set $\smpropagators$ of p-contitions.\footnote{The
graph template
$\simpletemplate{\smpropagators}{\smpropagators}{\themainprogram}{\thedependingprogram}(\aprogram)$ corresponds to the graph
$\smgraph{\themainprogram(\aprogram),\thedependingprogram}$ defined in \cite{blm14}.}

\citeN{jan06} define
the generating function
$\thegengnt$ and the witness function~$\thetestgnt$ used in \gnt.
We present these
definitions in~\eqref{eqn:gengnt} and~\eqref{eqn:testgnt}.\footnote{The
presented functions $\thegengnt$ and $\thetestgnt$ capture the essence of functions
  $Gen$ and $Test$
  defined by Janhunen et al., but they are not identical. Our language of
  disjunctive programs includes rules with empty heads.
  This allows us a more concise description.}
For a disjunctive program $\aprogram$,
by $\aprogram_N$ we denote the set of non-disjunctive rules of
$\aprogram$, by $\aprogram_D$ we denote the set of disjunctive rules $\aprogram\setminus\aprogram_N$.
For each atom $\anatom$ in $\atoms{\aprogram}$ let $\anatom^r$ and $\anatom^s$
be new atoms.

\begin{equation}\label{eqn:gengnt}
\begin{array}{rll}
\gengnt{\aprogram}=
  & \{\anatom\aspimplication\abody,\aspnot\anatom^r
                   \mid\ahead\logicalor\anatom\aspimplication\abody\in\aprogram_D\}\setunion&\\
  & \{\anatom^r\aspimplication\aspnot\anatom
                   \mid\ahead\logicalor\anatom\aspimplication\abody\in\aprogram_D\}\setunion&\\
  & \{\aspimplication\opposite{\ahead},\abody
                   \mid\ahead\aspimplication\abody\in\aprogram_D\}\setunion&\\
  & \aprogram_N\setunion \\
  & \{\anatom^s\aspimplication\opposite{\ahead\setminus\{\anatom\}},\abody
                   \mid\ahead\logicalor\anatom\aspimplication\abody\in\aprogram_D\}
                       \setunion&\\
  &           \{\aspimplication\anatom,\aspnot\anatom^s
                   \mid\anatom\vee\ahead\aspimplication\abody\in\aprogram_D\}&\\
\end{array}
\end{equation}

\begin{equation}\label{eqn:testgnt}
\begin{array}{rll}
\testgnt{\aprogram}{\asetofliterals}=
             & \{\anatom\aspimplication\abody,\aspnot\anatom^r \mid
                 \ahead\logicalor\anatom\aspimplication\abody\in\reduct{\aprogram}{\asetofliterals}_D,
                 \anatom\in\asetofliterals,
                 \abody\subseteq\asetofliterals\}
                 \setunion\\
             & \{\anatom^r\aspimplication\aspnot\anatom \mid
                 \ahead\logicalor\anatom\aspimplication\abody\in{\aprogram}\}
                 \setunion\\
             & \{\aspimplication\opposite{\ahead},\abody\mid
                 \ahead\aspimplication\abody\in\reduct{\aprogram}{\asetofliterals}_D,
                 \abody\subseteq\asetofliterals\}
                 \setunion\\
             & \{\anatom\aspimplication\abody\mid
                 \anatom\aspimplication\abody\in\reduct{\aprogram}{\asetofliterals}_N,
                 \anatom\in\asetofliterals,
                 \abody\subseteq\asetofliterals\}
                 \setunion\\
             & \{\aspimplication\restriction{\asetofliterals}{\atoms{\aprogram}}\}
\end{array}
\end{equation}

\begin{figure}
\footnotesize{ \input{the.example.GT.tex} }\normalsize
\caption{Example of path through the graph
$\smgraph{\{\anatom \aspimplication \athirdatom;
\anotheratom \aspimplication \athirdatom;
\athirdatom \aspimplication \anatom, \anotheratom;
\anatom \logicalor \anotheratom \aspimplication\}}$.}
\label{figure:example.GT}
\end{figure}

By $\smgraph{\aprogram}$ we denote the graph
$\simpletemplate{\smpropagators}{\smpropagators}{\thegengnt}{\thetestgnt}(\aprogram)$.
The graph $\smgraph{\aprogram}$ captures the {\gnt} procedure by \citeN{jan06}
in a similar way as the graph $\dpgraph{\aprogram}$ captures the {\cmodels}
procedure of {\sc dp-assat-proc} in Section~\ref{sec:cmodelswithoutlearning}.
Figure~\ref{figure:example.GT} presents an example of a path in a
graph $\smgraph{\{\anatom \aspimplication \athirdatom;
\anotheratom \aspimplication \athirdatom;
\athirdatom \aspimplication \anatom, \anotheratom;
\anatom \logicalor \anotheratom \aspimplication\}}$.
From the formal results by \citeN{jan06} it immediately follows that
$\thegengnt$ is {\stable}-approximating with respect to {\classic} and $\thetestgnt$ is
{\stable}-ensuring with respect to {\classic}.
The pair $(\smpropagators,\thegengnt)$ is an approximating-pair with
respect to {\classic}, while $(\smpropagators,\thetestgnt)$ is an ensuring-pair
with respect to {\classic}.
The following result immediately follows from
Theorem~\ref{STS-correctness}.\footnote{Corollary~\ref{GNT-correctness}
corresponds to Theorem~5 in~\cite{blm14}.}

\begin{corollary}~\label{GNT-correctness}
For any $\aprogram$ the graph
$\smgraph{\aprogram}$
checks the stable models of $\aprogram$.
\end{corollary}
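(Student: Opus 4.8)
The plan is to derive Corollary~\ref{GNT-correctness} as a direct instantiation of Theorem~\ref{STS-correctness} with the type $\atypeofmodel=\classic$. By definition $\smgraph{\aprogram}$ is the graph $\simpletemplate{\smpropagators}{\smpropagators}{\thegengnt}{\thetestgnt}(\aprogram)$, so it suffices to verify the two hypotheses of Theorem~\ref{STS-correctness} for this instantiation: that $(\smpropagators,\thegengnt)$ is an approximating-pair with respect to $\classic$ and that $(\smpropagators,\thetestgnt)$ is an ensuring-pair with respect to $\classic$. Once both pair conditions are in place, Theorem~\ref{STS-correctness} yields that $\smgraph{\aprogram}$ checks the stable models of $\aprogram$, which is exactly the claim.

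For both pair conditions I would take the intermediate type to be $\atypeofmodel_1=\stable$. First I note that $\smpropagators=\{\theunit,\theallcancel,\thebacktrue,\theunfounded\}$ contains $\{\theunit,\theunfounded\}$, so by part~(3) of Theorem~\ref{thm:typecomplete} the set $\smpropagators$ is $\stable$-enforcing. To complete the approximating-pair I then need $\thegengnt$ to be $\stable$-approximating with respect to $\classic$, and to complete the ensuring-pair I need $\thetestgnt$ to be $\stable$-ensuring with respect to $\classic$. These two facts about the generating and witness functions of $\gnt$ are imported from the formal correctness results of \citeN{jan06}: their soundness and completeness statements for $Gen$ and $Test$ translate, after accounting for our slightly different handling of empty-headed rules, into precisely these approximating and ensuring properties. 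Plugging each function together with $\smpropagators$ into the definitions of approximating-pair and ensuring-pair (with $\atypeofmodel_1=\stable$ and $\atypeofmodel=\classic$) then delivers the two required pairs.

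The only genuinely nontrivial step is the last one, namely certifying that $\thegengnt$ and $\thetestgnt$ satisfy the $\stable$-approximating and $\stable$-ensuring conditions as we have phrased them. Janhunen et al. establish that the $\gnt$ generate program preserves the stable models of $\aprogram$ as stable models modulo the auxiliary atoms $\anatom^r$ and $\anatom^s$, and that the associated test program is unsatisfiable precisely when a candidate is stable. I would verify that the induced correspondence between stable models of $\thegengnt(\aprogram)$ and stable models of $\aprogram$, together with the unsatisfiability characterization provided by $\thetestgnt$, match conditions~(1)--(2) in the definition of $\atypeofmodel_1$-approximating and the biconditional in the definition of $\atypeofmodel_1$-ensuring, respectively. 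Since the remaining ingredients---the $\stable$-enforcement of $\smpropagators$ and the application of Theorem~\ref{STS-correctness}---are immediate, the corollary follows at once from these translated results.
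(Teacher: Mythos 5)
Your proposal is correct and follows exactly the paper's own route: the paper likewise notes that $\smpropagators$ is $\stable$-enforcing, imports from \citeN{jan06} that $\thegengnt$ is $\stable$-approximating and $\thetestgnt$ is $\stable$-ensuring with respect to $\classic$, forms the approximating- and ensuring-pairs, and invokes Theorem~\ref{STS-correctness} with $\atypeofmodel=\classic$. Your additional care in flagging the translation of Janhunen et al.'s results (empty-headed rules, matching their soundness/completeness statements to the approximating/ensuring definitions) is exactly the step the paper leaves implicit.
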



\paragraph{Abstract {\dlv} without Backjumping.}
This section introduces graphs that capture the answer set solver {\dlv}
without backjumping.
The generate layer, i.e., the left-rule layer, is reminiscent to
the {\smodels} algorithm except it does not use $\theunfounded$.
The test layer applies the {\dpll} procedure to a witness formula.

The graph templates
$\simpletemplate{\smdisjpropagators}{\uppropagators}{\themainprogram}{\thedependingprogram}(\aprogram)$
describes the general
structure of {\dlv}.
The generating function~$\thegendlv$ is the identity function as in
\eqref{eqn:dlvgen}, and the witness function $\thetestdlv$ follows in
\eqref{eqn:dlvtest}.

\begin{equation}\label{eqn:dlvgen}
\begin{array}{lll}
\gendlv{\aprogram}=
             & \aprogram\\
\end{array}
\end{equation}

\begin{equation}\label{eqn:dlvtest}
\begin{array}{lll}
\testdlv{\aprogram}{\asetofliterals}=
             & \{\disjunctionof{(\abody\setintersection\answersetof{\asetofliterals})}
                 \logicalor\disjunctionof{\opposite{\ahead'}} \mid
                 \ahead\aspimplication\abody\in\reduct{\aprogram}{\answersetof{\amodel}},
                 \abody\subseteq\amodel,\ahead'=\ahead\cap\answersetof{\amodel}\}
               \setunion\\
             & 
             
             \{\disjunctionof{(
             \restriction{\asetofliterals}{\atoms{\aprogram}}
             )}\}\\
\end{array}
\end{equation}

\begin{figure}
\footnotesize{ \input{the.example.DL.tex} }\normalsize
\caption{Example of path through the graph
$\simpletemplate{\smdisjpropagators}{\uppropagators}{\thegendlv}{\thetestdlv}(
\{\anatom \aspimplication \athirdatom;
\anotheratom \aspimplication \athirdatom;
\athirdatom \aspimplication \anatom, \anotheratom;
\anatom \logicalor \anotheratom \aspimplication\})$.}
\label{figure:example.DL}
\end{figure}

Following the results from \citeN{faber2002enhancing} and
\citeN{Koch:1999:SMC:1624218.1624229}, the generating function $\thegendlv$ is
{\supported}-approximating with respect to {\classic} while the witness function
$\thetestdlv$ is {\classic}-ensuring with respect to {\classic}.
The pair $(\smdisjpropagators,\thegendlv)$ is an approximating-pair
with respect to {\classic}, while $(\uppropagators,\thetestdlv)$ is an
ensuring-pair with respect to {\classic}.
The result below immediately follows from
Theorem~\ref{STS-correctness}.\footnote{Corollary~\ref{DLV-correctness} corresponds to
Theorem~6 in~\cite{blm14}.}

\begin{corollary}~\label{DLV-correctness}
For any $\aprogram$ the graph
$\simpletemplate{\smdisjpropagators}{\uppropagators}{\thegendlv}{\thetestdlv}(\aprogram)$
checks the stable models of $\aprogram$.
\end{corollary}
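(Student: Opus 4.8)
The plan is to obtain the statement as a direct instance of Theorem~\ref{STS-correctness}, taking the type $\atypeofmodel$ to be $\classic$. For this I must only check that $(\smdisjpropagators,\thegendlv)$ is an approximating-pair with respect to $\classic$ and that $(\uppropagators,\thetestdlv)$ is an ensuring-pair with respect to $\classic$; the conclusion that the graph $\simpletemplate{\smdisjpropagators}{\uppropagators}{\thegendlv}{\thetestdlv}(\aprogram)$ checks the stable models of $\aprogram$ then follows verbatim from the theorem.

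First I would establish the approximating-pair condition, using $\supported$ as the witnessing intermediate type. On the propagator side, since $\smdisjpropagators=\{\theunit,\theallcancel,\thebacktrue\}$ trivially contains $\{\theunit,\theallcancel\}$, Theorem~\ref{thm:typecomplete}(2) yields that $\smdisjpropagators$ is $\supported$-enforcing. On the function side I must show that the identity generating function $\thegendlv$ is $\supported$-approximating with respect to $\classic$; unwinding the definition (and recalling $\atoms{\thegendlv(\aprogram)}=\atoms{\aprogram}$) this reduces to two standard facts about logic programs: every stable model of $\aprogram$ is a supported model of $\aprogram$, so clause~1 of the approximating definition holds with $\astringofliterals_1=\astringofliterals$, and every supported model of $\aprogram$ is a classical model of $\aprogram$, giving clause~2 (the latter is immediate from the definition of supported model in the preliminaries).

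Next I would establish the ensuring-pair condition, this time using $\classic$ itself as the intermediate type. By Theorem~\ref{thm:typecomplete}(1) the set $\uppropagators=\{\theunit\}$ is $\classic$-enforcing, so it remains to verify that $\thetestdlv$ is $\classic$-ensuring with respect to $\classic$: for every set $\asetofliterals$ of literals covering $\aprogram$ whose restriction $\restriction{\asetofliterals}{\atoms{\aprogram}}$ is a classical model of $\aprogram$, that restriction is a stable model of $\aprogram$ exactly when $\thetestdlv(\aprogram,\asetofliterals)$ has no classical model. The intended reading is that $\thetestdlv(\aprogram,\asetofliterals)$ is satisfiable precisely when some proper subset of $\answersetof{\restriction{\asetofliterals}{\atoms{\aprogram}}}$ still satisfies the reduct $\reduct{\aprogram}{\answersetof{\asetofliterals}}$, i.e.\ when the candidate fails the minimality test defining an answer set.

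The main obstacle is exactly this last $\classic$-ensuring claim for $\thetestdlv$, which is the genuinely solver-specific ingredient; the two enforcing properties are immediate from Theorem~\ref{thm:typecomplete}, and the approximating properties of the identity function are routine. To discharge it I would relate the clauses of~\eqref{eqn:dlvtest} to the unfounded-set characterization of Theorem~\ref{thm:leone}: a classical model of $\aprogram$ is stable iff no non-empty subset of its positive part is unfounded on it, equivalently iff no proper subset of its positive part is a model of the reduct. The clauses of $\thetestdlv$ are designed so that their classical models are precisely such strictly smaller models of the reduct, which is the content of the results of Faber and of Koch cited above. Once this correspondence is in place the ensuring property holds, the two required pairs are verified, and Theorem~\ref{STS-correctness} applies to give the corollary.
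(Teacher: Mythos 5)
Your proposal is correct and follows essentially the same route as the paper: the paper likewise instantiates Theorem~\ref{STS-correctness} with the approximating-pair $(\smdisjpropagators,\thegendlv)$ and the ensuring-pair $(\uppropagators,\thetestdlv)$ with respect to $\classic$, obtaining the enforcing properties from Theorem~\ref{thm:typecomplete} and deferring the solver-specific facts (that $\thegendlv$ is $\supported$-approximating and $\thetestdlv$ is $\classic$-ensuring with respect to $\classic$) to the cited results of Faber and of Koch et al. Your write-up merely makes the routine approximating argument for the identity function explicit, which the paper leaves implicit.
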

This corollary is an alternative proof of correctness for the \dlv algorithm previously stated
by \citeN{faber2002enhancing} and \citeN{Koch:1999:SMC:1624218.1624229} in terms
of pseudo-code.
Figure~\ref{figure:example.DL} presents an example of a path through one of
the graph describing abstract {\dlv}.

\paragraph{Designing new Graphs and Comparing Graphs.}
The two-layer graph template can be conveniently used to define new abstract
solvers.
For instance, one may choose to combine
$(\uppropagators,\thegencmodels)$ with $(\smpropagators,\thetestgnt)$
to obtain a solver captured by the graph template
$\simpletemplate{\uppropagators}{\smpropagators}{\thegencmodels}{\thetestgnt}(\aprogram)$.
Theorem~\ref{STS-correctness} provides a proof of correctness for the
procedure summarized by this family of graphs.
More generally, to obtain a new solver
one can combine any approximating-pair on the left side
of the graphs with any ensuring-pair on the right side with respect to the same
type. For instance, for any pair $(\asetofpropagators,\genrighttest)$ that is ensuring with
respect to $\classic$, the family of graphs
$\simpletemplate{\uppropagators}{\asetofpropagators}{\thecnfcomp}{\genrighttest}(\aprogram)$
captures a correct procedure for a disjunctive answer set solver.

In the following, we illustrate how abstract solvers can serve also as a convenient tool
 for comparing search procedures from an abstract point of view, by means of
comparison to the related graphs. In this respect we now state the result that
illustrates  a strong relation between  {\cmodels} and  {\dlv}. Indeed, their generate layer:

\begin{theorem}~\label{BigComp}
For any $(\asetofpropagators,\genrighttest)$
ensuring-pair with respect to $\classic$, and any program $\aprogram$,
the graphs
$\simpletemplate{\uppropagators}{\asetofpropagators}{\thecnfcomp}{\genrighttest}(\aprogram)$ and
$\simpletemplate{\smdisjpropagators}{\asetofpropagators}{\identity}{\genrighttest}(\aprogram)$
are identical graphs.
\end{theorem}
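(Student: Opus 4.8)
The plan is to show that the two graphs have identical node sets and identical edge sets by comparing them rule-by-rule. Since both graphs are instances of the two-layer template $\simpletemplate{\cdot}{\asetofpropagators}{\cdot}{\genrighttest}(\aprogram)$ sharing the same right-side parameters $(\asetofpropagators,\genrighttest)$ and the same program $\aprogram$, everything on the right-hand (test) layer is automatically identical once I establish that the two generating functions produce the ``same'' generate layer. The only parameters that differ are the left-side p-conditions ($\uppropagators$ versus $\smdisjpropagators$) and the generating functions ($\thecnfcomp$ versus $\identity$). So the entire theorem reduces to a single claim about the \emph{left-rules}: for every record $\astringofliterals$, the $\propagateleft$ transitions available under $(\uppropagators,\thecnfcomp)$ coincide with those available under $(\smdisjpropagators,\identity)$.

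\textbf{Matching the node sets.} First I would check that the two graphs have the same nodes. Nodes are states relative to $\atoms{\themainprogram(\aprogram)}$ and $\atoms{\thedependingprogram,\aprogram,\atoms{\mainprogram{\aprogram}}}$. For the right component the witness function $\genrighttest$ and program $\aprogram$ agree, so the second atom set agrees once the first does. For the first component I must verify $\atoms{\thecnfcomp(\aprogram)}=\atoms{\identity(\aprogram)}=\atoms{\aprogram}$: this holds because $\thecnfcomp$ only rewrites $\aprogram$ together with the completion formulas~\eqref{eq:comp-second}, all of which are built over $\atoms{\aprogram}$ and introduce no fresh atoms (unlike $\thegencmodels$, which introduces auxiliary atoms $\auxvariable{\abody}$). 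This is the crucial reason the theorem is stated for $\thecnfcomp$ rather than $\thegencmodels$.

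\textbf{Matching the left-layer edges.} The heart of the proof is to show, for any record $\astringofliterals$ over $\atoms{\aprogram}$,
$$\outprop{\uppropagators}{\thecnfcomp(\aprogram)}{\astringofliterals}=\outprop{\smdisjpropagators}{\identity(\aprogram)}{\astringofliterals},$$
since the rules $\failleft$, $\decideleft$, $\backtrackleft$, $\crossrule$ already coincide (they do not reference the generating function or p-conditions). Recall $\uppropagators=\{\theunit\}$ and $\smdisjpropagators=\{\theunit,\theallcancel,\thebacktrue\}$. I would prove containment both ways. For the inclusion from right to left, I take a literal produced by one of $\theunit$, $\theallcancel$, or $\thebacktrue$ applied to $\aprogram$ and exhibit a clause of $\thecnfcomp(\aprogram)$ that unit-propagates the same literal under $\theunit$. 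A positive unit propagation on $\aprogram$ uses a rule $\ahead\vee\opposite{\abody}$, which is literally the first kind of clause in $\thecnfcomp(\aprogram)$, so $\theunit$ carries over directly. For $\theallcancel$ (deriving $\logicalnot\anatom$ when no rule supports $\anatom$) and $\thebacktrue$ (deriving a supporting-rule consequence when exactly one support remains), I would argue that the completion clauses of the second kind in~\eqref{eq:comp-second}, namely $\logicalnot \anatom\logicalor\bigvee_{\ahead\vee\anatom\aspimplication\abody}(\abody\logicaland\opposite{\ahead})$ converted to CNF, encode exactly the ``supportedness'' reasoning that $\theallcancel$ and $\thebacktrue$ perform; a standard analysis of which disjunct survives in the clausified completion yields precisely the same unit consequences. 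The footnote warning that repetitions are \emph{not} removed during clausification is the technical hook ensuring this correspondence is exact even in degenerate cases such as $a\aspimplication\aspnot a$. For the reverse inclusion I show every unit propagation on $\thecnfcomp(\aprogram)$ is reproduced by some p-condition in $\smdisjpropagators$ on $\aprogram$, splitting on whether the propagating clause is of the first or second kind.

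\textbf{Expected main obstacle.} The genuinely delicate step is the clausified-completion correspondence for $\theallcancel$ and $\thebacktrue$: I must verify that unit propagation over the CNF obtained by distributing $\bigvee(\abody\wedge\opposite{\ahead})$ into conjunctive normal form yields exactly the literals that the supported-model p-conditions derive, neither more nor fewer, for \emph{every} intermediate record $\astringofliterals$ (not only complete ones). The subtlety is that $\thecnfcomp$'s distribution can blow up into many clauses, and I need a careful case analysis matching each surviving disjunct to the ``unique remaining support'' condition of $\thebacktrue$ and the ``no remaining support'' condition of $\theallcancel$. Once this equivalence of left-layer propagation is in hand, identity of the two full graphs follows immediately because all remaining rules are shared verbatim.
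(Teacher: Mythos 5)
Your proposal is correct and takes essentially the same route as the paper's proof: reduce the theorem to the coincidence of the $\propagateleft$ edges, then match $\theunit$ over $\thecnfcomp(\aprogram)$ against $\{\theunit,\theallcancel,\thebacktrue\}$ over $\aprogram$ by a case split on whether the propagating clause is a rule clause or comes from a clausified completion formula (the paper packages your ``delicate step'' as Lemma~\ref{lemma:dnfcnf}, the DNF-to-CNF correspondence, with exactly your Case 2.1 matching $\theallcancel$ and Case 2.2 matching $\thebacktrue$). One nuance: $\crossrule$ does not coincide ``for free'' as you claim, since its guard ``no left-rule applies'' does reference $\propagateleft$; this is repaired precisely by the propagation equivalence you prove, and the paper makes this dependency explicit.
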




\section{Proofs}\label{sec:proofs}
\subsection{Proof of Theorem~\ref{thm:typecomplete}}
We start by stating several lemmas that will be instrumental in constructing
arguments for Theorem~\ref{thm:typecomplete}. Recall that
$\uppropagators=\{\theunit\}$.
 
\begin{lemma}\label{lem:classicmodelup}
The set $\uppropagators$ is $\classic$-complete.
\end{lemma}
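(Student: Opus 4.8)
The plan is to unfold the definition of $\classic$-completeness for the singleton $\uppropagators=\{\theunit\}$ and prove the two resulting implications, each by contraposition. Fix a program $\aprogram$ and a consistent and complete set $\asetofliterals$ of literals over $\atoms{\aprogram}$. Since $\outprop{\uppropagators}{\aprogram}{\asetofliterals}$ is by definition exactly the set of literals $\aliteral$ for which $\propunitpropagate{\asetofliterals}{\aliteral}{\aprogram}$ holds, the goal reduces to showing that $\asetofliterals$ is a $\classic$-model of $\aprogram$ if and only if no literal $\aliteral$ satisfies the three clauses of the $\theunit$ condition in Figure~\ref{fig:prop0}. Throughout I would use that, because $\asetofliterals$ is consistent and complete over $\atoms{\aprogram}$, for every literal $\aliteral$ over $\atoms{\aprogram}$ exactly one of $\aliteral,\opposite{\aliteral}$ belongs to $\asetofliterals$; in particular ``$\aliteral$ does not occur in $\asetofliterals$'' is equivalent to $\opposite{\aliteral}\in\asetofliterals$, i.e.\ to $\aliteral$ being false under $\asetofliterals$.

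For the direction from model to emptiness, I would assume $\asetofliterals$ is a $\classic$-model and suppose for contradiction that some $\aliteral\in\outprop{\uppropagators}{\aprogram}{\asetofliterals}$. Then $\aprogram$ contains a rule equivalent to a clause $\aclause\logicalor\aliteral$ with every literal of $\opposite{\aclause}$ occurring in $\asetofliterals$ while $\aliteral$ does not. The condition on $\opposite{\aclause}$ says every disjunct of $\aclause$ is false under $\asetofliterals$, and the non-occurrence of $\aliteral$ says $\aliteral$ is false as well, so the clause $\aclause\logicalor\aliteral$ is falsified by $\asetofliterals$. This contradicts $\asetofliterals$ satisfying every rule of $\aprogram$, hence the output is empty.

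For the converse I would assume $\outprop{\uppropagators}{\aprogram}{\asetofliterals}=\emptyset$ and suppose for contradiction that $\asetofliterals$ is not a $\classic$-model, so some rule of $\aprogram$, viewed as a clause $\aclause'$, is falsified by $\asetofliterals$. By completeness every literal of $\aclause'$ is false, i.e.\ its complement lies in $\asetofliterals$. Picking any literal $\aliteral\in\aclause'$ and writing $\aclause'=\aclause\logicalor\aliteral$, all three conditions defining $\theunit$ are met: $\aliteral$ does not occur in $\asetofliterals$ (its complement does, by consistency), the rule is equivalent to $\aclause\logicalor\aliteral$, and every literal of $\opposite{\aclause}$ occurs in $\asetofliterals$. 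Hence $\aliteral\in\outprop{\uppropagators}{\aprogram}{\asetofliterals}$, contradicting emptiness.

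The argument is short, and the real care lies in the converse: it is precisely the \emph{completeness} of $\asetofliterals$ that converts a falsified clause into a fireable propagation, since \emph{all} of its literals (not merely all but one) are driven false. The one bookkeeping subtlety I would double-check is the degenerate clause. When $\aclause$ is empty it is identified with $\logicalfalse$, so $\opposite{\aclause}$ is the empty conjunction and the containment condition holds vacuously while $\aclause'=\aclause\logicalor\aliteral$ reduces to the unit $\aliteral$; this sub-case causes no trouble. The genuinely empty clause $\logicalfalse$ (a rule with empty head and empty body) carries no literal $\aliteral$ to extract, so I would confirm that the framework's treatment of the symbol $\logicalfalse$ in records covers it before finalizing the converse direction.
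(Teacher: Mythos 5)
Your proof is correct and follows essentially the same route as the paper's: both directions are unfolded from the definition of $\classic$-completeness and argued by contradiction, with consistency ruling out $\aliteral\in\asetofliterals$ in the first direction and completeness turning a falsified clause into a fireable instance of $\theunit$ in the second. The empty-clause caveat you flag at the end is real but is equally unaddressed in the paper's own proof, so it does not separate your argument from theirs.
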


In other words, for any program
$\aprogram$ and any complete and consistent set 
$\asetofliterals$ of literals over $\atoms{\aprogram}$, the set
$\asetofliterals$ is a $\classic$-model of $\aprogram$ iff 
$\setofuniformpcondition{\theunit}{\aprogram}{\asetofliterals}=\emptyset$. 

\begin{proof}
Left-to-right: Let $\asetofliterals$ be a $\classic$-model of $\aprogram$.
Our proof is by contradiction. Assume that
$\setofuniformpcondition{\theunit}{\aprogram}{\asetofliterals}\neq\emptyset$.
Take any literal $\aliteral$ from this set.
Then, the literal~$\aliteral$ is such that it does not belong to~$\asetofliterals$.
Also, there is a rule in~$\aprogram$ that is equivalent to a clause
$\aclause\logicalor\aliteral$ so that
all the literals of~$\opposite{\aclause}$
occur in $\asetofliterals$.
Since $\asetofliterals$ is a $\classic$-model of~$\aprogram$, we conclude that
$\aliteral\in\asetofliterals$. We derive a contradiction.

Right-to-left: Let $\setofuniformpcondition{\theunit}{\aprogram}{\asetofliterals}=\emptyset$.
By contradiction. Assume that $\asetofliterals$ is not a $\classic$-model of~$\aprogram$.
Then there is a rule in $\aprogram$ that is equivalent to a clause
$\aclause\logicalor\aliteral$ so that all the literals of $\opposite{\aclause}$
as well as $\opposite{\aliteral}$ occur in~$\asetofliterals$ (indeed,
$\asetofliterals$ is a complete set of literals over $\atoms{\aprogram}$ that
does not satisfy some rule in $\aprogram$). Since $\asetofliterals$ is
consistent, $\aliteral\not\in\asetofliterals$. It follows that
$\aliteral\in\setofuniformpcondition{\theunit}{\aprogram}{\asetofliterals}$.
We derive a contradiction.
\end{proof}

\begin{lemma}\label{lem:supporting}
For any program
$\aprogram$, any atom $\anatom$, and any sets $\asetofliterals$ and
$\asetofliterals'$ of literals such that
$\asetofliterals\subseteq\asetofliterals'$, if a rule in~$\aprogram$ is not a
supporting rule for $\anatom$ with respect to $\asetofliterals$ then this rule
is also not a supporting rule for $\anatom$ with respect to~$\asetofliterals'$.
\end{lemma}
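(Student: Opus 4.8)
The plan is to unwind the definition of a supporting rule and observe that the condition characterizing ``not supporting'' is monotone in the set of literals. First I would recall that, by definition, a rule $\ahead\logicalor\anatom\aspimplication\abody$ of $\aprogram$ is a supporting rule for $\anatom$ with respect to a set $\asetofliterals$ of literals exactly when $\asetofliterals\setintersection(\opposite{\abody}\setunion\ahead)=\emptyset$; hence such a rule fails to be supporting for $\anatom$ with respect to $\asetofliterals$ precisely when $\asetofliterals\setintersection(\opposite{\abody}\setunion\ahead)\neq\emptyset$.

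Next I would fix a rule $\ahead\logicalor\anatom\aspimplication\abody$ in $\aprogram$ that is not a supporting rule for $\anatom$ with respect to $\asetofliterals$, so that there is some literal $\aliteral\in\asetofliterals\setintersection(\opposite{\abody}\setunion\ahead)$. Because $\asetofliterals\subseteq\asetofliterals'$, this same literal $\aliteral$ lies in $\asetofliterals'$, and therefore $\aliteral\in\asetofliterals'\setintersection(\opposite{\abody}\setunion\ahead)$. Thus the intersection is nonempty for $\asetofliterals'$ as well, which is exactly the statement that the rule is not a supporting rule for $\anatom$ with respect to $\asetofliterals'$. This completes the argument.

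There is no real obstacle here: the proof reduces to the monotonicity of the map $\asetofliterals\mapsto\asetofliterals\setintersection(\opposite{\abody}\setunion\ahead)$ under set inclusion, so once the definition is written out the conclusion is immediate. The only point worth stating carefully is the role of rules whose head does not contain $\anatom$; these are irrelevant, since the notion of supporting rule for $\anatom$ is defined only for rules of the form $\ahead\logicalor\anatom\aspimplication\abody$, and the rule fixed above is of this form throughout. I would also remark, for use in the later theorems, that the lemma is the contrapositive of the intuitive monotonicity principle that shrinking the committed set of literals can only make it easier for a rule to support an atom.
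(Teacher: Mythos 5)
Your proof is correct and is essentially the same argument as the paper's: both unwind the definition of a supporting rule into the condition $\asetofliterals\setintersection(\opposite{\abody}\setunion\ahead)=\emptyset$ and use monotonicity of intersection under $\asetofliterals\subseteq\asetofliterals'$. The only cosmetic difference is that you argue directly while the paper phrases the same step as a proof by contradiction.
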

\begin{proof}
By contradiction. Assume that there is a rule $\ahead\vee\anatom
\aspimplication\abody$ in $\aprogram$ such that it is not a supporting rule for
$\anatom$ with respect to $\asetofliterals$ but it is a supporting rule for
$\anatom$ with respect to $\asetofliterals'$.
It follows that
$\asetofliterals\cap(\opposite{\abody}\cup{\ahead})\neq\emptyset$, while
$\asetofliterals'\cap(\opposite{\abody}\cup{\ahead})=\emptyset$.
This contradicts the fact that $\asetofliterals\subseteq\asetofliterals'$. 
\end{proof}

We now generalize Lemma~4 from~\citeN{lier08} to the case of disjunctive
programs.
\begin{lemma}\label{lem:unf}
For any unfounded set $U$ on a consistent set $\astringofliterals$ of literals w.r.t. a
program~$\aprogram$ and any consistent and complete set $\asetofliterals$ of literals over $\atoms{\aprogram}$, 
if $\astringofliterals\subseteq\asetofliterals$ and $\asetofliterals\cap
U\neq\emptyset$, then $\asetofliterals$ is not a stable model of $\aprogram$.
\end{lemma}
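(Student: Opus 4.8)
The plan is to produce a non-empty subset of $\answersetof{\asetofliterals}$ that is unfounded on $\asetofliterals$ with respect to $\aprogram$, and then to invoke Theorem~\ref{thm:leone}. The candidate I would use is $U' = U \setintersection \asetofliterals$, the set of atoms of $U$ that $\asetofliterals$ assigns true. It is non-empty by the hypothesis $\asetofliterals \setintersection U \neq \emptyset$, and $U' \subseteq \answersetof{\asetofliterals}$ by construction. Once $U'$ is shown to be unfounded on $\asetofliterals$, the right-hand side of the equivalence in Theorem~\ref{thm:leone} fails, so $\asetofliterals$ is not a stable model; note this single step also covers the case in which $\asetofliterals$ is not even a classical model, so no separate bookkeeping is needed there.

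The core of the argument is the verification that $U'$ is unfounded on $\asetofliterals$. I would fix an atom $\anatom \in U'$ and a rule $\ahead \aspimplication \abody \in \aprogram$ with $\anatom \in \ahead$. Since $\anatom \in U' \subseteq U$ and $U$ is unfounded on $\astringofliterals$, one of the three defining conditions holds for this rule relative to $\astringofliterals$ and $U$, and I would transfer each to the corresponding condition relative to $\asetofliterals$ and $U'$. If $\astringofliterals \setintersection \opposite{\abody} \neq \emptyset$, then $\asetofliterals \setintersection \opposite{\abody} \neq \emptyset$ follows at once from $\astringofliterals \subseteq \asetofliterals$. If $(\ahead \setminus U) \setintersection \astringofliterals \neq \emptyset$, a witnessing atom lies in $\astringofliterals \subseteq \asetofliterals$ and, being outside $U$, is also outside $U' \subseteq U$, hence it witnesses $(\ahead \setminus U') \setintersection \asetofliterals \neq \emptyset$.

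I expect the only delicate case to be $U \setintersection \abody \neq \emptyset$, because a witness $\anotheratom \in U \setintersection \abody$ need not belong to $U'$. Here I would use that $\asetofliterals$ is complete to decide the truth value of $\anotheratom$. If $\anotheratom \in \asetofliterals$, then $\anotheratom \in U'$ and $\anotheratom \in \abody$, giving $U' \setintersection \abody \neq \emptyset$. If $\anotheratom \notin \asetofliterals$, then $\logicalnot \anotheratom \in \asetofliterals$ by completeness and consistency; since $\anotheratom$ occurs in the positive part of $\abody$ we have $\logicalnot \anotheratom \in \opposite{\abody}$, so $\asetofliterals \setintersection \opposite{\abody} \neq \emptyset$. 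Either subcase satisfies one of the unfoundedness conditions for $U'$, completing the verification and, together with Theorem~\ref{thm:leone}, the proof.
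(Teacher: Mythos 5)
Your proof is correct and is essentially the paper's own argument read contrapositively: the paper assumes $\asetofliterals$ is stable and uses Theorem~\ref{thm:leone} to extract a single rule for which all three unfoundedness conditions fail for $\asetofliterals\cap U$ on $\asetofliterals$, then pushes that failure back to $U$ on $\astringofliterals$, whereas you push the conditions forward from $(U,\astringofliterals)$ to $(U\cap\asetofliterals,\asetofliterals)$ and then invoke Theorem~\ref{thm:leone}. The witness set $U\cap\asetofliterals$, the use of $\astringofliterals\subseteq\asetofliterals$, and the appeal to completeness of $\asetofliterals$ to settle the body condition coincide exactly with the paper's steps.
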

\begin{proof}
By contradiction.
Assume that $\asetofliterals$ is a stable model of $\aprogram$.
Then, $\asetofliterals$ is a classic model of $\aprogram$ also.
By Theorem~\ref{thm:leone}, $\asetofliterals$ is such that there is no
non-empty subset of $\answersetof{\asetofliterals}$ such that it is an
unfounded set on $\asetofliterals$ w.r.t. $\aprogram$. Since
$\asetofliterals\cap U\neq\emptyset$, it follows that
$\asetofliterals\cap U$ is not an unfounded set on $\asetofliterals$ w.r.t.
$\aprogram$. It follows that for some rule
$\anatom\vee\ahead\aspimplication\abody\in\aprogram$ such that
$\anatom\in\asetofliterals\cap U$ all of the following conditions hold
\setlength\leftmargin{25pt}
\begin{enumerate}\itemsep0pt\parskip0pt\parsep0pt
  \item $\asetofliterals\cap\opposite{\abody}=\emptyset$,
  \item $\asetofliterals\cap U\cap\abody=\emptyset$, and
  \item $(\ahead\setminus (\asetofliterals\cap U)) \setintersection \asetofliterals=\emptyset$.
\end{enumerate}
Since $\asetofliterals\cap\opposite{\abody}=\emptyset$ and
$\astringofliterals\subseteq\asetofliterals$ it follows that
$\astringofliterals\cap\opposite{\abody}=\emptyset$.
Since $\asetofliterals\cap\opposite{\abody}=\emptyset$
and the fact that $\asetofliterals$ is consistent and complete set of literals over $\atoms{\aprogram}$,
$\answersetof{\abody}\subseteq\asetofliterals$. Consequently
$U\cap\answersetof{\abody}=\asetofliterals\cap U\cap \answersetof{\abody}=\emptyset$.
Since
$\astringofliterals\subseteq\asetofliterals$ and $(\ahead\setminus (\asetofliterals\cap U)) \setintersection \asetofliterals=\emptyset$, it follows that 
$(\ahead\setminus U) \setintersection \astringofliterals =\emptyset$.
Consequently, 
the set $U$ is not an unfounded set on~$\astringofliterals$.
\end{proof}

We are now ready to introduce the proof of Theorem~\ref{thm:typecomplete}.
\begin{proof}[Proof of Theorem~\ref{thm:typecomplete}]
{\em Statement 1.} We have to show that the set $\uppropagators$ is $\classic$-enforcing.
Lemma~\ref{lem:classicmodelup} states that the set $\uppropagators$ is $\classic$-complete.
Thus, we only ought to illustrate that $\uppropagators$ is $\classic$-sound.
Let
$\aprogram$ be any program, 
$\asetofliterals$ be any set of literals,
$\asetofliterals'$ be any
$\classic$-model of $\aprogram$ such that
$\asetofliterals\subseteq\asetofliterals'$.
We have to show that
$\outprop{\uppropagators}{\aprogram}{\asetofliterals}\subseteq\asetofliterals'$.
Let
$\aliteral$ be any literal in
$\outprop{\uppropagators}{\aprogram}{\asetofliterals}$.
We now show that $\aliteral\in\asetofliterals'$.
The p-condition $\theunit$ is the only member of the set $\uppropagators$. Thus,
$\outprop{\uppropagators}{\aprogram}{\asetofliterals}=\setofuniformpcondition{\theunit}{\aprogram}{\asetofliterals}$.
It follows that $l\in \setofuniformpcondition{\theunit}{\aprogram}{\asetofliterals}$.
By the conditions of $\theunit$ definition, 
there is a rule in $\aprogram$ that is equivalent to a clause
$\aclause\logicalor\aliteral$ so that
all the literals of $\opposite{\aclause}$
occur in $\asetofliterals$.
Since $\asetofliterals\subseteq\asetofliterals'$, it follows that 
all the literals of $\opposite{\aclause}$
occur in $\asetofliterals'$.
From the fact that $\asetofliterals'$ is $\classic$-model of $\aprogram$ it follows that 
$\asetofliterals'\models \aclause\logicalor\aliteral$. Consequently, $\aliteral\in \asetofliterals'$.

\medskip
{\em Statement 2.} We have to show that the subsets of
$\smdisjpropagators$ containing $\{\theunit,\linebreak[1]\theallcancel\}$
are $\supported$-enforcing. We first illustrate this property for the set
$\{\theunit,\linebreak[1]\theallcancel\}$. We call this set $ua$.
We start by showing that the set $ua$ is $\supported$-sound.
Let $\aprogram$ be any program, 
$\asetofliterals$ be any set of literals,
$\asetofliterals'$ be any
$\supported$-model of $\aprogram$ such that
$\asetofliterals\subseteq\asetofliterals'$.
We have to illustrate that the set $\outprop{ua}{\aprogram}{\asetofliterals}$
is a subset of $\asetofliterals'$. 
Consider any literal 
$\aliteral$ in the set $\outprop{ua}{\aprogram}{\asetofliterals}$. We now show
that $\aliteral$ is also in $\asetofliterals'$.

Case 1. $\aliteral\in \outprop{\theunit}{\aprogram}{\asetofliterals}$. Since
$\asetofliterals'$ is a $\supported$-model, $\asetofliterals'$ is also a
$\classic$-model. The rest of the argument follows the lines of proof in
{\em Statement 1}, which illustrates that $\uppropagators$ is $\classic$-sound.

Case 2. $\aliteral\in \outprop{\theallcancel}{\aprogram}{\asetofliterals}$. 
$\aliteral$ has the form $\neg\anatom$.
By the conditions of $\theallcancel$ definition, it follows that
there is no rule in $\aprogram$ supporting $\anatom$ with respect to $\asetofliterals$.
By Lemma~\ref{lem:supporting}, we derive that there is no rule in $\aprogram$
supporting $\anatom$ with respect to $\asetofliterals'$.
From the fact that $\asetofliterals'$ is $\supported$-model of $\aprogram$ it
follows that $\neg \anatom\in \asetofliterals'$. (Indeed, $\anatom$ may not be
a member of $\asetofliterals'$, while $\asetofliterals'$ is a complete set of
literals over $\atoms{\aprogram}$.)

\medskip
Second, we show that the set $ua$ is $\supported$-complete. Let $\aprogram$ be any
program, $\asetofliterals$ be any complete and consistent set of literals over
$\atoms{\aprogram}$.
We now show that $\asetofliterals$ is $\supported$-model of $\aprogram$ iff
$\outprop{ua}{\aprogram}{\asetofliterals}=\emptyset$.

Left-to-right: Let $\asetofliterals$ be a $\supported$-model of $\aprogram$.
By contradiction. Assume that the set $\outprop{ua}{\aprogram}{\asetofliterals}$
is not empty.
Then there is a literal $\aliteral$ in this set.

Case 1. $\aliteral\in \setofuniformpcondition{\theunit}{\aprogram}{\asetofliterals}$.
Since $\asetofliterals$ is also $\classic$-model of $\aprogram$, by
Lemma~\ref{lem:classicmodelup} we derive a contradiction.

Case 2. $\aliteral\in \setofuniformpcondition{\theallcancel}{\aprogram}{\asetofliterals}$.
$\aliteral$ has the form $\neg \anatom$.
By the conditions of $\theallcancel$ definition, it follows that
(i)  literal $\neg \anatom$ is such that it does not belong to $\asetofliterals$, and
(ii) there is no supporting rule in~$\aprogram$ for $\anatom$ with respect to $\asetofliterals$. 
     Since $\asetofliterals$ is a $\supported$-model of $\aprogram$, we conclude
     from (ii) that $\neg\anatom\in\asetofliterals$.
     This contradicts~(i).

Right-to-left: Assume $\outprop{ua}{\aprogram}{\asetofliterals}=\emptyset$.
By contradiction. Assume that $\asetofliterals$ is not a $\supported$-model of~$\aprogram$.
Then either $\asetofliterals$ is not a $\classic$-model of $\aprogram$ or there is an atom 
$\anatom\in\answersetof{\asetofliterals}$ such that
there is no supporting rule in $\aprogram$ for $\anatom$ with respect to $\asetofliterals$.
In the former case, when $\asetofliterals$ is not a $\classic$-model of
$\aprogram$, by Lemma~\ref{lem:classicmodelup} we derive a contradiction.
In the latter case, it follows that
$\neg\anatom\in\setofuniformpcondition{\theallcancel}{\aprogram}{\asetofliterals}$
by the conditions of the $\theallcancel$ definition.
We derive a contradiction.

\medskip
We now show that the set $\smdisjpropagators$ is $\supported$-enforcing.
We start by claiming that the set $\smdisjpropagators$ is $\supported$-sound.
Let
$\aprogram$ be any program, 
$\asetofliterals$ be any set of literals,
$\asetofliterals'$ be any
$\supported$-model of $\aprogram$ such that
$\asetofliterals\subseteq\asetofliterals'$.
We have to illustrate that the set
$\outprop{\smdisjpropagators}{\aprogram}{\asetofliterals}$
is a subset of $\asetofliterals'$. Consider any literal $\aliteral$ in the set
$\outprop{\smdisjpropagators}{\aprogram}{\asetofliterals}$. 
We show that $\aliteral$ is also in $\asetofliterals'$.
Given a proof that $ua$ is $\supported$-sound, it is only left to be proved that
for any literal $\aliteral$ that is in
 $\outprop{\thebacktrue}{\aprogram}{\asetofliterals}$, it also holds that $\aliteral\in \asetofliterals'$.
Consider literal $\aliteral\in \outprop{\thebacktrue}{\aprogram}{\asetofliterals}$.
By the definition of $\thebacktrue$ it follows that
there is a rule
$r=\ahead\logicalor\anatom\aspimplication\abody$ in $\aprogram$
so that (i) $\anatom\in\asetofliterals$,
and (ii) either $\opposite{\aliteral}\in\ahead$ or $\aliteral\in\abody$ and,
(iii) no other rule in $\aprogram$ is supporting $\anatom$ with respect to $\asetofliterals$.
By Lemma~\ref{lem:supporting} and (iii), we derive that
every rule other than $r$ is such that it is not a supporting
rule for $\anatom$ with respect to $\asetofliterals'$.
By (i) and the fact that $\asetofliterals\subseteq\asetofliterals'$,
$\anatom\in\asetofliterals'$. Since $\asetofliterals'$ is a $\supported$-model of 
$\aprogram$, it follows that $\asetofliterals\cap(\opposite{\abody}\cup\ahead)=\emptyset$. 
By the fact that $\asetofliterals'$ is a consistent and complete set of literals
over $\atoms{\aprogram}$ we conclude that 
$\abody\cup\opposite{\ahead}\subseteq\asetofliterals$. By (ii),
$\aliteral\in \asetofliterals'$.

\medskip
{\em Statement 3.} 
We have to show that the subsets of $\smpropagators$ containing
$\{\theunit,\linebreak[1]\theunfounded\}$ are $\stable$-enforcing.
We only illustrate this for the set $\{\theunit,\linebreak[1]\theunfounded\}$. 
We call this set $uu$.
The proof for other sets (i) relies on the fact that any $\stable$-model is
also a $\classic$ and $\supported$-model and (ii) follows the ideas presented in the proof of Statement~2.

We start by showing that the set $uu$ is $\stable$-sound.
Let
$\aprogram$ be any program, 
$\asetofliterals$ be any set of literals,
$\asetofliterals'$ be any
$\stable$-model of $\aprogram$ such that
$\asetofliterals\subseteq\asetofliterals'$.
We have to illustrate that the set
$\outprop{uu}{\aprogram}{\asetofliterals}$
is a subset of 
$\asetofliterals'$. 
Consider any literal 
$\aliteral$ in the set $\outprop{uu}{\aprogram}{\asetofliterals}$. We now show
that $\aliteral$ is also in $\asetofliterals'$.

Case 1. $\aliteral\in \outprop{\theunit}{\aprogram}{\asetofliterals}$. Since
$\asetofliterals'$ is a $\stable$-model, $\asetofliterals'$ is also a
$\classic$-model. The rest of the argument follows the lines of proof in
{\em Statement 1}, which illustrates that $\uppropagators$ is $\classic$-sound.

Case 2. $\aliteral\in \outprop{\theunfounded}{\aprogram}{\asetofliterals}$. 
Literal $\aliteral$ has the form $\neg\anatom$.
By the conditions of $\theunfounded$ definition, it follows that
there is a set $\asetofatoms$ containing $\anatom$ such that $\asetofatoms$ is
unfounded with respect to $\aprogram$.
By Lemma~\ref{lem:unf} and the fact that 
$\asetofliterals'$ is $\stable$-model of $\aprogram$ it follows that $\neg
\anatom\in \asetofliterals'$. (Indeed, consider a simple argument by contradiction.)

\medskip
Second, we show that the set $uu$
is $\stable$-complete. Let
$\aprogram$ be any program, 
$\asetofliterals$ be any complete and consistent set of literals over $\atoms{\aprogram}$. 
We now show that $\asetofliterals$ is $\stable$-model of $\aprogram$ iff
$\outprop{uu}{\aprogram}{\asetofliterals}=\emptyset$.

Left-to-right: Let $\asetofliterals$ be a $\stable$-model of $\aprogram$.
By contradiction. Assume that the set $\outprop{uu}{\aprogram}{\asetofliterals}$
is not empty.
Then there is a literal $\aliteral$ in this set.

Case 1. $\aliteral\in \setofuniformpcondition{\theunit}{\aprogram}{\asetofliterals}$.
Since $\asetofliterals$ is also $\classic$-model of $\aprogram$, by
Lemma~\ref{lem:classicmodelup} we derive a contradiction.

Case 2. $\aliteral\in \setofuniformpcondition{\theunfounded}{\aprogram}{\asetofliterals}$.
Literal $\aliteral$ has the form $\neg \anatom$.
By the conditions of $\theunfounded$ definition, it follows that
(i)  literal $\neg \anatom$ is such that it does not belong to $\asetofliterals$, and
(ii) there is a set $\asetofatoms$ containing $\anatom$ such that
     $\asetofatoms$ is unfounded with respect to $\aprogram$.
Since $\asetofliterals$ is a $\stable$-model of $\aprogram$, we conclude
from (ii) that $\neg\anatom\in\asetofliterals$. This contradicts~(i).

Right-to-left: Assume that $\outprop{uu}{\aprogram}{\asetofliterals}=\emptyset$.
By contradiction. Assume that $\asetofliterals$ is not a $\stable$-model of~$\aprogram$.
By Theorem~\ref{thm:leone}, either $\asetofliterals$ is not a $\classic$-model
of $\aprogram$ or there is a non-empty subset of $\answersetof{\astringofliterals}$ that 
is an unfounded set on $\astringofliterals$ with respect to $\aprogram$.
In the former case, when $\asetofliterals$ is not a $\classic$-model of
$\aprogram$, by Lemma~\ref{lem:classicmodelup} we derive a contradiction.
In the latter case, it follows that there is some atom $\anatom$ in an existing unfounded set so that
$\neg\anatom\in\setofuniformpcondition{\theunfounded}{\aprogram}{\asetofliterals}$
by the conditions of the $\theunfounded$ definition.
We derive a contradiction.
\end{proof}

\subsection{Proofs of Theorems~\ref{thm:minitemplate},~\ref{STS-correctness},
Propositions~\ref{thm:firstgraph},~\ref{prop:approx-ensuring},~\ref{lemma:statement-e},~\ref{lemma:statement-d}
}
We start by the proof of Theorem~\ref{STS-correctness}.
We skip the proof of Theorem~\ref{thm:minitemplate} as 
it relies on the same proof techniques that proof of Theorem~\ref{STS-correctness} exhibits.
The proof of Theorem~\ref{STS-correctness}
relies on auxiliary lemmas as well as proofs of Propositions~\ref{prop:approx-ensuring},~\ref{lemma:statement-e},~\ref{lemma:statement-d}
that follow. We conclude this section with the proof of Proposition~\ref{thm:firstgraph}.


\begin{lemma}~\label{lemma:finite}
Let $\genleftgen$ 
be a generating function and $\genrighttest$ be a witness function.

Let $\asetofpropagators_\genleftgen$ and $\asetofpropagators_\genrighttest$
be sets of
p-conditions.

Then for any $\aprogram$, the graph
$\simpletemplate{\asetofpropagators_\genleftgen}{\asetofpropagators_\genrighttest}
                {\genleftgen}{\genrighttest}(\aprogram)$
is finite and acyclic.
\end{lemma}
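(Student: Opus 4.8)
The plan is to prove the two assertions separately, treating finiteness as routine and spending the effort on acyclicity, which I would establish via a strictly increasing ranking function.

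\textbf{Finiteness.} First I would note that the node set is determined solely by the two atom sets $\asetofatoms = \atoms{\genleftgen(\aprogram)}$ and $\asetofatoms' = \atoms{\genrighttest,\aprogram,\atoms{\genleftgen(\aprogram)}}$, and that both are finite. Indeed, $\genleftgen(\aprogram)$ is a finite program, so $\asetofatoms$ is finite; and $\asetofatoms'$ is a union of the finitely many finite sets $\atoms{\genrighttest(\aprogram,\astringofliterals)}$ taken over the $2^{\length{\asetofatoms}}$ consistent and complete sets $\astringofliterals$ of literals over $\asetofatoms$ (each such $\astringofliterals$ covers $\aprogram$ since $\atoms{\aprogram}\subseteq\atoms{\genleftgen(\aprogram)}$, so each witness program is defined and finite). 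A record over a finite set of atoms is a repetition-free string over a finite alphabet carrying finitely many decision annotations, so there are finitely many records over $\asetofatoms$ and finitely many over $\asetofatoms'$. Hence there are finitely many states, the graph has finitely many nodes, and therefore finitely many edges.

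\textbf{Acyclicity.} I would exhibit a ranking that strictly increases along every edge. For a record $\astringofliterals$, decompose it into its maximal decision-free blocks $\astringofliterals = \astringofliterals_0\,\decision{\aliteral_1}\,\astringofliterals_1\cdots\decision{\aliteral_p}\,\astringofliterals_p$ and set $s(\astringofliterals) = (\length{\astringofliterals_0},\length{\astringofliterals_1},\dots,\length{\astringofliterals_p})$, a finite sequence of naturals compared lexicographically with the convention that a proper prefix precedes its extensions. To a pair state I would assign the triple $\mu((\astringofliterals,\anotherstringofliterals)_\aside) = (s(\astringofliterals),\,\aside,\,s(\anotherstringofliterals))$, where the middle component is ordered by $\leftstate < \rightstate$ and triples are compared lexicographically; the terminal states $\failstate$ and $\terminalstate{\astringofliterals}$ have no outgoing edges and need no rank. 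Note that this ranking is purely syntactic: it refers to neither the p-conditions nor the functions $\genleftgen,\genrighttest$, which is exactly why the lemma holds without any assumption on these parameters.

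Then I would verify, rule by rule, that every transition strictly increases $\mu$. The rules $\propagateleft$ and $\decideleft$ either lengthen the last block of $\astringofliterals$ or open a new empty block, so $s(\astringofliterals)$ and hence $\mu$ strictly increase; $\propagateright$ and $\decideright$ do the same for $s(\anotherstringofliterals)$ while fixing $s(\astringofliterals)$ and $\aside=\rightstate$; and $\crossrule$ fixes $\astringofliterals$ but raises the middle component from $\leftstate$ to $\rightstate$. The delicate cases are the backtracking rules, where the affected record \emph{shrinks}. For $\backtrackleft$ and $\backtrackright$, replacing a suffix $\decision{\aliteral}\,\astringofliterals'$ (with $\astringofliterals'$ decision-free) by $\opposite{\aliteral}$ deletes the final block and appends one literal to the preceding block, so the new sequence agrees with the old up to that preceding block and is strictly larger there ($\length{\astringofliterals_{p-1}}+1 > \length{\astringofliterals_{p-1}}$), giving a lexicographic increase of $s$. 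The step that genuinely needs the primary component is $\backtrackcross$: it carries a $\rightstate$ state to a $\leftstate$ state (lowering the middle component and resetting $\anotherstringofliterals$ to $\emptyset$), but applies exactly the $\backtrackleft$-style rewriting to the \emph{left} record, so $s(\astringofliterals)$ strictly increases and dominates the loss in the lower-order components. Since $\mu$ strictly increases along every edge and ranges over a finite set, no state can lie on a cycle, so the graph is acyclic.

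I expect the main obstacle to be fixing the lexicographic convention so that all three backtracking rules count as increases despite physically removing literals, and in particular confirming that $\backtrackcross$ raises the \emph{primary} (left-record) component even as it discards the entire right-record computation; once the block-decomposition comparison for backtracking is in place, the remaining cases are immediate.
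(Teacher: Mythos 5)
Your proposal is correct and follows essentially the same route as the paper's own proof: finiteness from repetition-free records over finite atom sets, and acyclicity via the lexicographic comparison of the sequences of lengths of decision-free blocks, lifted to states with the left record as the dominant component (the paper orders the triple as left record, right record, label, while you order it as left record, label, right record — an immaterial difference, since in both cases $\backtrackcross$ is handled by the strict increase of the left component). The rule-by-rule verification, including the treatment of the backtracking rules and of $\crossrule$, matches the paper's argument.
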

\begin{proof}
Consider the states of the graph
$\simpletemplate{\asetofpropagators_\genleftgen}{\asetofpropagators_\genrighttest}
                {\genleftgen}{\genrighttest}(\aprogram)$.
The string $\astringofliterals$ of states of the form
$(\astringofliterals,
\anotherstringofliterals)_\aside$
or of the type $\terminalstate{\astringofliterals}$
is built over a set of atoms which is bounded by the size of $\aprogram$. Also,
$\astringofliterals$ does not allow repetitions. Thus, there is a finite
number of possible strings $\astringofliterals$ in the states
$(\astringofliterals,
\anotherstringofliterals)_\aside$
or $\terminalstate{\astringofliterals}$.
It immediately follows that there is a finite number of states $\terminalstate{\astringofliterals}$
in
$\simpletemplate{\asetofpropagators_\genleftgen}{\asetofpropagators_\genrighttest}
                {\genleftgen}{\genrighttest}(\aprogram)$.

Consider the right side of a state of the form
$(\astringofliterals,
\anotherstringofliterals)_\aside$.
Since $\genrighttest(\aprogram,\astringofliterals)$ has a finite number of
atoms and there is a finite number of possible $\astringofliterals$, 
the set of atoms over which $\anotherstringofliterals$
is built is finite.
Consequently, there is a finite number of possible $\anotherstringofliterals$.
We conclude that there is 
 a finite number of possible states
$(\astringofliterals,
\anotherstringofliterals)_\aside$.
Thus the set of states is finite in 
$\simpletemplate{\asetofpropagators_\genleftgen}{\asetofpropagators_\genrighttest}
{\genleftgen}{\genrighttest}(\aprogram)$.

For any string $\astringofliterals$ of literals, by
$\length{\astringofliterals}$ we denote the length of this string.
Any string $\astringofliterals$ of literals can be written
$\astringofliterals_0\decision{\aliteral_1}\astringofliterals_1
\ldots\decision{\aliteral_\athirdindex}\astringofliterals_\athirdindex$,
where $(\decision{\aliteral_\anindex})_{1\leq\anindex\leq\athirdindex}$
contains all the decision literals of $\astringofliterals$.
Let us call $\depth{\astringofliterals}$ the sequence
$\length{\astringofliterals_0},\length{\astringofliterals_1}
\ldots\length{\astringofliterals_\athirdindex}$.
We then write $\astringofliterals\leq\astringofliterals'$ iff
$\depth{\astringofliterals}\lexlower\depth{\astringofliterals'}$
where $\lexlower$ is the lexicographic order. Since the length of the sequence
$\depth{\astringofliterals}$ is bounded by the finite number of possible
decision literals, this is a well-founded order.
Finally, we say that
$(\astringofliterals,
\anotherstringofliterals)_\aside\leq
(\astringofliterals',
\anotherstringofliterals')_{\aside'}$
iff
$(\astringofliterals,\anotherstringofliterals,\aside)\lexlower
(\astringofliterals',\anotherstringofliterals',\aside')$
where $\lexlower$ is the lexicographic order and $\leftstate<\rightstate$.
This is clearly well-founded as it is the lexicographic composition of
well-founded orders.

If there is a transition from
$(\astringofliterals,
\anotherstringofliterals)_\aside$ to
$(\astringofliterals',
\anotherstringofliterals')_{\aside'}$ then
$(\astringofliterals,
\anotherstringofliterals)_\aside\leq
(\astringofliterals',
\anotherstringofliterals')_{\aside'}$ and
$(\astringofliterals,
\anotherstringofliterals)_\aside\neq
(\astringofliterals',
\anotherstringofliterals')_{\aside'}$.
This can be checked
simply for each of the rules.
Since the order is well-founded, there is no infinite path in the
graph. Consequently, the graph is acyclic.
\end{proof}

\medskip
\begin{proof}[Proof of Proposition~\ref{lemma:statement-e}]
We first show that
$\aliteral_1.\cdots.\aliteral_{\athirdindex_1}$ is
consistent.
By contradiction.
Assume that $\aliteral_1.\cdots.\aliteral_{\athirdindex_1}$ is inconsistent.
Then since $\failleft$
is not applicable
$\aliteral_1.\cdots.\aliteral_{\athirdindex_1}$ contains at
least one decision literal.
We now define $\anindex$ as
$\aliteral_1.\cdots.\aliteral_{\athirdindex_1}=$
${\aliteral_1}.\cdots.{\aliteral_{\anindex-1}}.\decision{\aliteral_\anindex}.
{\aliteral_{\anindex+1}}.\cdots.{\aliteral_{\athirdindex_1}}$
where $\decision{\aliteral_\anindex}$ is the rightmost
decision literal.
Since $\backtrackleft$ is not applicable
$\aliteral_1.\cdots.\aliteral_{\athirdindex_1}$ contains
no decision literal. We derive a contradiction.

Since $\decideleft$ is not applicable and
$\aliteral_1.\cdots.\aliteral_{\athirdindex_1}$
is consistent,
$\aliteral_1.\cdots.\aliteral_{\athirdindex_1}$
assigns all the atoms of
$\atoms{\genleftgen(\aprogram)}$. As a consequence
$\aliteral_1.\cdots.\aliteral_{\athirdindex_1}$ is a consistent and complete
set of literals that covers
$\atoms{\genleftgen(\aprogram)}$.
Finally, $\propagateleft$ is not applicable. So
$\asetofpropagators_\genleftgen(\aprogram,{\aliteral_1}.\cdots.{\aliteral_{\athirdindex_1}})$
is the empty set. Since $\asetofpropagators_\genleftgen$ is
{\atypeofmodel}-enforcing and hence {\atypeofmodel}-complete,
${\aliteral_1}.\cdots.{\aliteral_{\athirdindex_1}}$ is a
{\atypeofmodel}-model of $\genleftgen(\aprogram)$.
\end{proof}

\medskip
\begin{proof}[Proof of Proposition~\ref{lemma:statement-d}]
{\em Statements $(a-c)$}
We prove these statements by induction on the length of a path in
the graph
$\simpletemplate{\asetofpropagators_\genleftgen}{\asetofpropagators_\genrighttest}
                {\genleftgen}{\genrighttest}(\aprogram)$ from the initial state.
Since the statements trivially hold in the initial state of the graph, we only
have to prove that all transition rules of
$\simpletemplate{\asetofpropagators_\genleftgen}{\asetofpropagators_\genrighttest}
                {\genleftgen}{\genrighttest}(\aprogram)$
preserve the properties.

Statement $(c)$ trivially holds for all transitions but Crossing-rules
$\rightstate\leftstate$.
Statements $(a)$ and $(b)$ trivially hold for transitions due to Left rules,
Crossing-rules $\rightstate\leftstate$, $\failright$, $\failcross$.

Consider an edge due to one of the Right rules or Crossing-rules $\leftstate\rightstate$
from state $S=(\aliteral_1^0.\cdots.\aliteral_{\athirdindex_1^0}^0,
\anotherliteral_1^0.\cdots.\anotherliteral_{\athirdindex_2^0}^0)_{\aside^0}$
to state $S'=(\aliteral_1.\cdots.\aliteral_{\athirdindex_1},
\anotherliteral_1.\cdots.\anotherliteral_{\athirdindex_2})_\rightstate$
so that the statements $(a)$ and $(b)$ hold on $S$ (an inductive hypothesis). 
For the Right rules (excluding $\failright$), the left side of the state
remains unchanged. Thus, by induction hypothesis $(a)$ immediately follows.
Similarly, it is easy to see from the conditions of these rules that they also
preserve property $(b)$.
We now illustrate that 
the $\crossrule$ preserves $(a-c)$.

\medskip\emph{Case $\crossrule$:}
It follows that
(i)   $\aside_0=\leftstate$,
(ii)  $\aliteral_1.\cdots.\aliteral_{\athirdindex_1}=
       \aliteral_1^0.\cdots.\aliteral_{\athirdindex_1^0}^0$,
(iii) no left rule applies to $S$,
(iv)  $\anotherliteral_1.\cdots.\anotherliteral_{\athirdindex_1}=
       \anotherliteral_1^0.\cdots.\anotherliteral_{\athirdindex_1^0}^0=\emptyset$.
By Proposition~\ref{lemma:statement-e}, (i), and (iii) we conclude that
$\aliteral_1^0.\cdots.\aliteral_{\athirdindex_1^0}^0$ is a
{\atypeofmodel$_1$}-model of $\genleftgen(\aprogram)$.
By (ii), it follows that $\aliteral_1.\cdots.\aliteral_{\athirdindex_1}$ is
also a {\atypeofmodel$_1$}-model of $\genleftgen(\aprogram)$. Thus, $(c)$
holds. From the definition of $\genleftgen(\aprogram)$ it follows that the set
$\{\aliteral_1.\cdots.\aliteral_{\athirdindex_1}\}$ of literals covers $\Pi$.
It follows that 
$\genrighttest(\aprogram,\aliteral_1.\cdots.\aliteral_{\athirdindex_1})$ is defined. Thus $(a)$ holds.
From (iv), $(b)$ trivially follows as the right side of the state is empty.

\medskip
{\em Statement $(d)$}
We first show that
$\anotherliteral_1.\cdots.\anotherliteral_{\athirdindex_2}$ is
consistent.
By contradiction.
Assume that $\anotherliteral_1.\cdots.\anotherliteral_{\athirdindex_2}$ is inconsistent.
Then since $\failright$
is not applicable,
$\anotherliteral_1.\cdots.\anotherliteral_{\athirdindex_2}$ contains at
least one decision literal.
We now define $\anindex$ as
$\anotherliteral_1.\cdots.\anotherliteral_{\athirdindex_2}=$
${\anotherliteral_1}.\cdots.{\anotherliteral_{\anindex-1}}.\decision{\anotherliteral_\anindex}.
{\anotherliteral_{\anindex+1}}.\cdots.{\anotherliteral_{\athirdindex_2}}$
where $\decision{\anotherliteral_\anindex}$ is the rightmost
decision literal.
Since the rule $\backtrackright$ is not applicable
$\anotherliteral_1.\cdots.\anotherliteral_{\athirdindex_2}$ contains
no decision literal. We derive a contradiction.

Since $\decideright$ is not applicable and
$\anotherliteral_1.\cdots.\anotherliteral_{\athirdindex_2}$
is consistent, by $(b)$
$\anotherliteral_1.\cdots.\anotherliteral_{\athirdindex_2}$
assigns all the atoms of
$\atoms{\genrighttest(\aprogram,\aliteral_1.\cdots.\aliteral_{\athirdindex_1})}$. Thus,
$\anotherliteral_1.\cdots.\anotherliteral_{\athirdindex_2}$ is a consistent and complete
set of literals over $\atoms{\genrighttest(\aprogram,\aliteral_1.\cdots.\aliteral_{\athirdindex_1})}$.
Finally, $\propagateright$ is not applicable. So
$\asetofpropagators_\genrighttest(\genrighttest(\aprogram,\aliteral_1.\cdots.\aliteral_{\athirdindex_1}),
{\anotherliteral_1}.\cdots.{\anotherliteral_{\athirdindex_2}})$
is the empty set. Since $\asetofpropagators_\genrighttest$ is
{\atypeofmodel$_2$}-enforcing and hence {\atypeofmodel$_2$}-complete,
${\anotherliteral_1}.\cdots.{\anotherliteral_{\athirdindex_2}}$ is a
{\atypeofmodel$_2$}-model of $\genrighttest(\aprogram,\aliteral_1.\cdots.\aliteral_{\athirdindex_1})$.
\end{proof}

\begin{lemma}~\label{lemma:general}
Let {\atypeofmodel$_1$} and {\atypeofmodel$_2$} be some types in $\{\classic,\supported,\stable\}$.

Let $\genleftgen$ 
be a generating function and $\genrighttest$ be a witness function.

Let $\asetofpropagators_\genleftgen$ be a {\atypeofmodel$_1$}-enforcing set of
p-conditions and
 $\asetofpropagators_\genrighttest$ be a {\atypeofmodel$_2$}-enforcing set of
p-conditions. 

Let $\aprogram$ be a program.

Let
$(\aliteral_1.\cdots.\aliteral_{\athirdindex_1},
\anotherliteral_1.\cdots.\anotherliteral_{\athirdindex_2})_\aside$
be a state of
$\simpletemplate{\asetofpropagators_\genleftgen}{\asetofpropagators_\genrighttest}
                {\genleftgen}{\genrighttest}(\aprogram)$
reachable from the initial state.

Then:
\setlength\leftmargin{25pt}
\begin{enumerate}\itemsep0pt\parskip0pt\parsep0pt
  \item[(a)] 
        any {\atypeofmodel$_2$}-model of
        $\genrighttest(\aprogram,\aliteral_1.\cdots.\aliteral_{\athirdindex_1})$
        satisfies $\anotherliteral_\anindex$ if it satisfies
        all decision literals $\decision{(\anotherliteral_{\anotherindex})}$
        with $\anotherindex\leq\anindex$.
  \item[(b)] Any {\atypeofmodel$_1$}-model $\astringofliterals$ of $\genleftgen(\aprogram)$
        such that $\genrighttest(\aprogram,\astringofliterals)$ has no
        {\atypeofmodel$_2$}-model satisfies $\aliteral_\anindex$ if it satisfies
        all decision literals $\decision{\aliteral_\anotherindex}$ with
        $\anotherindex\leq\anindex$.
\end{enumerate}
\end{lemma}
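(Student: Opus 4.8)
The plan is to establish (a) and (b) simultaneously by induction on the length of a path from the initial state $(\emptyset,\emptyset)_\leftstate$ to the given state, doing a case analysis on the transition rule labelling the last edge. In the base case both records are empty and both claims hold vacuously. For the inductive step I would assume (a) and (b) at the source state $S$ of the last transition and check that each rule preserves them at the target $S'$. Two structural facts organize the case analysis: whenever the label is $\leftstate$ the right record is $\emptyset$ (so (a) is vacuous in any $\leftstate$-state), and transitions leading to $\terminalstate{\cdot}$ or $\failstate$ produce states not mentioned by the lemma. Thus for (a) only $\decideright$, $\propagateright$, $\backtrackright$ matter (the other rules leave the right record untouched, reset it to $\emptyset$, or exit), and for (b) only $\decideleft$, $\propagateleft$, $\backtrackleft$, and $\backtrackcross$ matter.

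For (a), in the $\decideright$ case the appended literal is a decision literal, so the implication holds for it trivially and by the induction hypothesis for the earlier ones (the left record, hence the witness program $\genrighttest(\aprogram,\aliteral_1.\cdots.\aliteral_{\athirdindex_1})$, is unchanged). The $\propagateright$ case is where soundness is used: if the last literal $\anotherliteral_{\athirdindex_2}$ lies in $\outprop{\asetofpropagators_\genrighttest}{\genrighttest(\aprogram,\aliteral_1.\cdots.\aliteral_{\athirdindex_1})}{\anotherliteral_1.\cdots.\anotherliteral_{\athirdindex_2-1}}$, then any $\atypeofmodel_2$-model $M$ of $\genrighttest(\aprogram,\aliteral_1.\cdots.\aliteral_{\athirdindex_1})$ satisfying the relevant decision literals satisfies every $\anotherliteral_{\anotherindex}$ with $\anotherindex<\athirdindex_2$ by the induction hypothesis, so $\anotherliteral_1.\cdots.\anotherliteral_{\athirdindex_2-1}\subseteq M$, and $\atypeofmodel_2$-soundness of $\asetofpropagators_\genrighttest$ forces $\anotherliteral_{\athirdindex_2}\in M$. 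The delicate case is $\backtrackright$, where $S$ has inconsistent right record $\anotherstringofliterals\decision{\aliteral}\anotherstringofliterals'$ with $\anotherstringofliterals'$ free of decision literals and $S'$ has right record $\anotherstringofliterals\opposite{\aliteral}$; the prefix $\anotherstringofliterals$ is covered by the induction hypothesis, and for the newly exposed $\opposite{\aliteral}$ I argue by contradiction: a model $M$ satisfying $\aliteral$ would satisfy all decision literals of $\anotherstringofliterals\decision{\aliteral}\anotherstringofliterals'$, hence by the induction hypothesis on $S$ satisfy that entire inconsistent record, which is impossible for consistent $M$, so $\opposite{\aliteral}\in M$.

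Statement (b) is handled symmetrically against $\genleftgen(\aprogram)$ and $\atypeofmodel_1$-soundness of $\asetofpropagators_\genleftgen$: $\propagateleft$ uses soundness exactly as above, and $\backtrackleft$ uses the same ``satisfying the flipped decision forces satisfying the whole inconsistent left record'' contradiction. The genuine obstacle is the crossing rule $\backtrackcross$, which passes from $S=(\astringofliterals\decision{\aliteral}\astringofliterals',\anotherstringofliterals)_\rightstate$ (no right-rule applicable, $\astringofliterals'$ decision-free) to $S'=(\astringofliterals\opposite{\aliteral},\emptyset)_\leftstate$, since here the left record changes in an $\rightstate$-to-$\leftstate$ step and the restriction in (b) to models $M$ with $\genrighttest(\aprogram,M)$ having no $\atypeofmodel_2$-model becomes essential. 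To show (b) for the exposed $\opposite{\aliteral}$, I suppose a relevant $\atypeofmodel_1$-model $M$ of $\genleftgen(\aprogram)$ satisfies the decisions of $\astringofliterals$ and also $\aliteral$. Then $M$ satisfies all decision literals of the full left record $L=\astringofliterals\decision{\aliteral}\astringofliterals'$ of $S$, so by the induction hypothesis $M$ satisfies every literal of $L$. By Proposition~\ref{lemma:statement-d}(c), $L$ is a $\atypeofmodel_1$-model of $\genleftgen(\aprogram)$, hence complete and consistent over $\atoms{\genleftgen(\aprogram)}$; since $M$ is likewise complete and consistent over the same atoms and contains $L$, we get $M=L$. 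But Proposition~\ref{lemma:statement-d}(d), applicable because no right-rule fires at $S$, says $\anotherstringofliterals$ is a $\atypeofmodel_2$-model of $\genrighttest(\aprogram,L)=\genrighttest(\aprogram,M)$, contradicting the relevance of $M$; hence $\opposite{\aliteral}\in M$, and the prefix $\astringofliterals$ is covered by the induction hypothesis on $S$.

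I expect the $\backtrackcross$ case of (b) to be the main obstacle: it is the only step in which the two layers interact, and it is exactly there that Proposition~\ref{lemma:statement-d}, the completeness of the left record, and the ``no $\atypeofmodel_2$-model'' restriction must be combined to collapse the candidate model $M$ onto the current left record $L$.
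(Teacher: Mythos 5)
Your proposal is correct and follows essentially the same route as the paper's proof: induction on path length with a case analysis on the last transition, soundness of the enforcing p-conditions for the propagate rules, the inconsistency-of-the-old-record contradiction for the backtrack rules, and Proposition~\ref{lemma:statement-d}(c),(d) combined with the induction hypothesis for $\backtrackcross$. If anything, your treatment of $\backtrackcross$ is slightly more explicit than the paper's, since you spell out the collapse $M=L$ (via completeness and consistency over $\atoms{\genleftgen(\aprogram)}$) that the paper leaves implicit when it concludes that $M$ cannot satisfy all literals of the old left record.
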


\begin{proof}

We prove statements $(a)$ and $(b)$ by induction on the length of a path in
the graph
$\simpletemplate{\asetofpropagators_\genleftgen}{\asetofpropagators_\genrighttest}
                {\genleftgen}{\genrighttest}(\aprogram)$
from the initial state.
Since the statements trivially hold in the initial state of the graph, we only
have to prove that all transition rules of
$\simpletemplate{\asetofpropagators_\genleftgen}{\asetofpropagators_\genrighttest}
                {\genleftgen}{\genrighttest}(\aprogram)$
preserve the properties.

Consider an edge from the state 
$S=(\aliteral_1^0.\cdots.\aliteral_{\athirdindex_1^0}^0, \anotherliteral_1^0.\cdots.\anotherliteral_{\athirdindex_2^0}^0)_{\aside^0}$
to the state $S'=(\aliteral_1.\cdots.\aliteral_{\athirdindex_1},
\anotherliteral_1.\cdots.\anotherliteral_{\athirdindex_2})_\aside$
so that the statements $(a)$ and $(b)$ hold on $S$ (an inductive hypothesis). 

The statements $(a)$ and $(b)$ trivially hold for the case of transitions due to
$\failleft$, $\failright$, $\failcross$.

For the case of transition rules $\crossrule$,
$\backtrackright$, $\decideright$, $\propagateright$
it holds that
$\aliteral_1.\cdots.\aliteral_{\athirdindex_1} =
\aliteral_1^0.\cdots.\aliteral_{\athirdindex_1^0}^0$.
So by the induction hypothesis, $(b)$ trivially holds on
$(\aliteral_1.\cdots.\aliteral_{\athirdindex_1},
\anotherliteral_1.\cdots.\anotherliteral_{\athirdindex_2})_\aside$.
For these rules, we are left to show that $(a)$ holds on
$(\aliteral_1.\cdots.\aliteral_{\athirdindex_1},
\anotherliteral_1.\cdots.\anotherliteral_{\athirdindex_2})_\aside$.
Note that for the case of 
$\backtrackright$, $\decideright$, $\propagateright$, by
Proposition~\ref{lemma:statement-d} (a) it follows that
$\genrighttest(\aprogram,\aliteral_1.\cdots.\aliteral_{\athirdindex_1})$ is defined.

\medskip\emph{Case $\crossrule$:}
It follows that $\anotherliteral_1.\cdots.\anotherliteral_{\athirdindex_1}=
\anotherliteral_1^0.\cdots.\anotherliteral_{\athirdindex_1^0}^0=\emptyset$. 
Consequently, $(a)$ holds as right side of the state is empty.

\medskip\emph{Case $\backtrackright$.}
In this case, there is an index $\anindex$ such that
$\anotherliteral_1^0.\cdots.\anotherliteral_{\athirdindex_2^0}^0=
\anotherliteral_1^0.\cdots.\anotherliteral_{\anindex-1}^0.\decision{(\anotherliteral_\anindex^0)}.
\anotherliteral_{\anindex+1}^0.\cdots.\anotherliteral_{\athirdindex_2^0}^0$
and $\anotherliteral_1.\cdots.\anotherliteral_{\athirdindex_2-1}=
\anotherliteral_1^0.\cdots.\anotherliteral_{\anindex-1}^0$.
Also, by the conditions of $\backtrackright$, the string of literals
$\anotherliteral_1^0.\cdots.\anotherliteral_{\athirdindex_2^0}^0$ is
inconsistent.
Let $\asetofliterals$ be a {\atypeofmodel$_2$}-model of
$\genrighttest(\aprogram,\aliteral_1.\cdots.\aliteral_{\athirdindex_1})$.
Let $\anotherliteral_\anotherindex$ be a literal of
$\anotherliteral_1.\cdots.\anotherliteral_{\athirdindex_1}$.
Assume $\asetofliterals$
satisfies all decision literals $\decision{(\anotherliteral_{\anotherindex'})}$
with $\anotherindex'\leq\anotherindex$.
By the induction hypothesis, if $\anotherindex\neq\athirdindex_2$ then
$\asetofliterals$ satisfies $\anotherliteral_\anotherindex$.
It remains to prove that this is also true when
$\anotherindex = \athirdindex_2$.
Assume $\asetofliterals$ satisfies all the decision literals of
$\anotherliteral_1.\cdots.\anotherliteral_{\athirdindex_2}$. They include all
the decision literals of
$\anotherliteral_1.\cdots.\anotherliteral_{\athirdindex_2-1}$.
Then $\asetofliterals$ satisfies all the literals of
$\anotherliteral_1.\cdots.\anotherliteral_{\athirdindex_2-1}$ by the induction
hypothesis.
We now show that $\asetofliterals$ also satisfies
$\anotherliteral_{\athirdindex_2}$.

None of the literals
$\anotherliteral_{\anindex+1}^0\cdots\anotherliteral_{\athirdindex_2^0}^0$
is a decision literal.
Additionally,
$\anotherliteral_1.\cdots.\anotherliteral_{\athirdindex_2}=
\anotherliteral_1^0.\cdots.\anotherliteral_{\anindex-1}^0.\opposite{\anotherliteral_\anindex^0}$.
Since $\asetofliterals$
satisfies all the literals of
$\anotherliteral_1.\cdots.\anotherliteral_{\athirdindex_2-1}$, it satisfies all
the literals of $\anotherliteral_1^0.\cdots.\anotherliteral_{\anindex-1}^0$.
Since $\anotherliteral_1^0.\cdots.\anotherliteral_{\athirdindex_2^0}^0$ is
inconsistent,~$\asetofliterals$ cannot satisfy all of its literals, so
$\asetofliterals$ does not satisfy at least one literal of
$\anotherliteral_{\anindex}^0\cdots\anotherliteral_{\athirdindex_2^0}^0$.
By the contraposition of the induction hypothesis $(a)$, and since none of the
literals $\anotherliteral_{\anindex+1}^0\cdots\anotherliteral_{\athirdindex_2^0}^0$
is a decision literal, one of the literals not satisfied by
$\asetofliterals$ has to be $\anotherliteral_\anindex^0$.
So $\asetofliterals$ must
satisfy $\opposite{\anotherliteral_\anindex^0}$, that is
$\anotherliteral_{\athirdindex_2}$.

\medskip\emph{Case $\decideright$.} Obvious.

\medskip\emph{Case $\propagateright$.}
Let $\asetofliterals$ be a
{\atypeofmodel$_2$}-model of
$\genrighttest(\aprogram,\aliteral_1.\cdots.\aliteral_{\athirdindex_1})$.
Assume $\asetofliterals$ satisfies all the
decision literals of $\anotherliteral_1.\cdots.\anotherliteral_{\athirdindex_2}$.
Since for any propagator condition $\anotherliteral_{\athirdindex_2}$ is not a
decision literal, they are the decision literals of
$\anotherliteral_1.\cdots.\anotherliteral_{\athirdindex_2-1}$.
So $\asetofliterals$ satisfies all the literals of
$\anotherliteral_1.\cdots.\anotherliteral_{\athirdindex_2-1}$ by the induction hypothesis.
In other words
$\{\anotherliteral_1.\cdots.\anotherliteral_{\athirdindex_2-1}\}\subseteq\asetofliterals$.
Proving that $\asetofliterals$ satisfies
$\anotherliteral_{\athirdindex_2}$ will complete the proof.
We are given that
$\asetofpropagators_\genrighttest$ is {\atypeofmodel$_2$}-enforcing and hence
{\atypeofmodel$_2$}-sound. By definition of {\atypeofmodel$_2$}-soundness and the fact that 
$\{\anotherliteral_1.\cdots.\anotherliteral_{\athirdindex_2-1}\}\subseteq\asetofliterals$,
it follows that 
$\asetofpropagators_\genrighttest
        (\aprogram,
        \{\anotherliteral_1.\cdots.\anotherliteral_{\athirdindex_2-1}\})
\subseteq\asetofliterals$.
Since $\anotherliteral_{\athirdindex_2}\in\asetofpropagators_\genrighttest
        (\aprogram,
        \{\anotherliteral_1.\cdots.\anotherliteral_{\athirdindex_2-1}\})$,
also $\anotherliteral_{\athirdindex_2}\in\asetofliterals$.
In other words, $\asetofliterals$ satisfies
$\anotherliteral_{\athirdindex_2}$.

We are left to illustrate that transition rules $\backtrackcross$, $\backtrackleft$, $\decideleft$, $\propagateleft$
preserve properties $(a)$ and $(b)$. 
Since all of these rules are such that the right side of the
resulting state is $\emptyset$, clearly
 $(a)$ is preserved. 
 We will only illustrate that $\backtrackcross$ preserves $(b)$
 as the remaining cases for $(b)$ are similar to the 
arguments constructed above for the respective right rules and property $(a)$.

\medskip\emph{Case $\backtrackcross$.}
There is an index $\anindex$ such that
$\aliteral_1.\cdots.\aliteral_{\athirdindex_1-1}=
\anotherliteral_1^0.\cdots.\anotherliteral_{\anindex-1}^0$
and $\aliteral_1^0.\cdots.\aliteral_{\athirdindex_1^0}^0=
\aliteral_1^0.\cdots.\aliteral_{\anindex-1}^0.\decision{(\aliteral_\anindex^0)}.
\aliteral_{\anindex+1}^0.\cdots.\aliteral_{\athirdindex_1^0}^0$.
Let $\asetofliterals$ be a {\atypeofmodel$_1$}-model of $\genleftgen(\aprogram)$ such that
$\genrighttest(\aprogram,\asetofliterals)$ has no {\atypeofmodel$_2$}-model.
Assume that $\asetofliterals$ satisfies all the
decision literals of $\aliteral_1.\cdots.\aliteral_{\athirdindex_1}$.
Since $\aliteral_{\athirdindex_1}$ is not a
decision literal, they are the decision literals of
$\aliteral_1.\cdots.\aliteral_{\athirdindex_1-1}$.
So~$\asetofliterals$ satisfies all the literals of
$\aliteral_1.\cdots.\aliteral_{\athirdindex_1-1}$ by the induction
hypothesis.
Showing that $\asetofliterals$ satisfies
$\aliteral_{\athirdindex_1}$ will complete the proof.

Since the transition is justified by $\backtrackcross$, by Proposition~\ref{lemma:statement-d} (c),
$\aliteral_1^0.\cdots.\aliteral_{\athirdindex_1^0}^0$ is a
{\atypeofmodel$_1$}-model of $\genleftgen(\aprogram)$.
By Proposition~\ref{lemma:statement-d} (d)  and the fact that no right-rule applies,
$\anotherliteral_1^0.\cdots.\anotherliteral_{\athirdindex_2^0}^0$
is a {\atypeofmodel$_2$}-model of
$\genrighttest(\aprogram,\aliteral_1.\cdots.\aliteral_{\athirdindex_1})$.
So
$\genrighttest(\aprogram,\aliteral_1.\cdots.\aliteral_{\athirdindex_1})$
has a {\atypeofmodel$_2$}-model, hence $\asetofliterals$ does not satisfy all the literals
of $\aliteral_1^0.\cdots.\aliteral_{\athirdindex_1^0}^0$. Consequently,
since $\asetofliterals$ satisfies all the literals of
$\aliteral_1.\cdots.\aliteral_{\athirdindex_1-1}$, at least one literal
from $\decision{({\aliteral_\anindex}^0)}.
\aliteral_{\anindex+1}^0.\cdots.\aliteral_{\athirdindex_1^0}^0$
is not satisfied by $\asetofliterals$, which by the
contraposition of the induction hypothesis $(b)$ proves that
$\aliteral_\anindex^0 = \opposite{\aliteral_{\athirdindex_1}}$ is not satisfied by
$\asetofliterals$. This means that $\asetofliterals$ satisfies
$\aliteral_{\athirdindex_1}$.
\end{proof}

\begin{lemma}~\label{lemma:structure}
Let {\atypeofmodel$_1$} and {\atypeofmodel$_2$} be some types in $\{\classic,\supported,\stable\}$.

Let $\asetofpropagators_\genleftgen$ be a {\atypeofmodel$_1$}-enforcing set of
p-conditions. Let $\genleftgen$ be a generating function.

Let $\asetofpropagators_\genrighttest$ be a {\atypeofmodel$_2$}-enforcing set of
p-conditions. Let $\genrighttest$ be a witness function.

Let $\aprogram$ be a program. Then:
\setlength\leftmargin{25pt}
\begin{enumerate}\itemsep0pt\parskip0pt\parsep0pt
  \item any terminal state of
        $\simpletemplate{\asetofpropagators_\genleftgen}{\asetofpropagators_\genrighttest}
        {\genleftgen}{\genrighttest}(\aprogram)$
        reachable from the initial state and other than $\failstate$ is
        $\terminalstate{\astringofliterals}$, with
        $\seenasamodel{\astringofliterals}$ being a {\atypeofmodel$_1$}-model of
        $\genleftgen(\aprogram)$ such that
        $\genrighttest(\aprogram,\astringofliterals)$ has no {\atypeofmodel$_2$}-model,
  \item $\failstate$ is reachable from the initial state iff
        $\genleftgen(\aprogram)$ has no {\atypeofmodel$_1$}-model $\astringofliterals$ such that
        $\genrighttest(\aprogram,\astringofliterals)$ has no {\atypeofmodel$_2$}-model.
\end{enumerate}
\end{lemma}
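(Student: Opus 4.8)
The plan is to first pin down which states are terminal, then prove the two statements in turn, letting statement (2) draw on statement (1) together with the finiteness of the graph.

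\textbf{Terminal states.} First I would show that the only terminal states are those of the form $\terminalstate{\astringofliterals}$ and the state $\failstate$. No transition rule originates from $\terminalstate{\astringofliterals}$ or $\failstate$, so these are terminal; conversely, no pair-state $(\astringofliterals,\anotherstringofliterals)_\aside$ can be terminal. Indeed, if $\aside=\leftstate$ then either some left-rule applies, or, if none does, $\crossrule$ applies (its sole side condition being ``no left-rule applies''); and if $\aside=\rightstate$ then either some right-rule applies, or, if none does, exactly one of $\failcross$ and $\backtrackcross$ applies, according to whether the left component $\astringofliterals$ contains a decision literal. Hence every pair-state has an outgoing edge.

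\textbf{Statement (1).} Next I would take a reachable terminal state other than $\failstate$; by the above it equals $\terminalstate{\astringofliterals}$, and the only rule producing such a state is $\failright$, so its immediate predecessor is a reachable state $(\astringofliterals,\anotherstringofliterals)_\rightstate$ with $\anotherstringofliterals$ inconsistent and free of decision literals. Proposition~\ref{lemma:statement-d}(c) gives directly that $\astringofliterals$ is a $\atypeofmodel_1$-model of $\genleftgen(\aprogram)$. For the rest I would invoke Lemma~\ref{lemma:general}(a): since $\anotherstringofliterals$ has no decision literals, the antecedent of that statement is vacuous, so every $\atypeofmodel_2$-model of $\genrighttest(\aprogram,\astringofliterals)$ would have to satisfy all literals of the inconsistent record $\anotherstringofliterals$, which is impossible; hence $\genrighttest(\aprogram,\astringofliterals)$ has no $\atypeofmodel_2$-model.

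\textbf{Statement (2), forward direction.} Assuming $\failstate$ reachable, I would argue by contradiction that no $\atypeofmodel_1$-model $\astringofliterals^*$ of $\genleftgen(\aprogram)$ with $\genrighttest(\aprogram,\astringofliterals^*)$ lacking a $\atypeofmodel_2$-model exists. The state $\failstate$ arises only via $\failleft$ or $\failcross$, and in both the relevant left record contains no decision literal, so by Lemma~\ref{lemma:general}(b) the hypothetical $\astringofliterals^*$ satisfies every literal of that record. In the $\failleft$ case the record is inconsistent, an immediate contradiction. In the $\failcross$ case Proposition~\ref{lemma:statement-d}(c) makes the left record $\astringofliterals$ a consistent and complete $\atypeofmodel_1$-model; as $\astringofliterals^*$ is likewise consistent and complete over the same atoms and satisfies all of $\astringofliterals$, we get $\astringofliterals^*=\astringofliterals$, while Proposition~\ref{lemma:statement-d}(d) (no right-rule applies) shows $\genrighttest(\aprogram,\astringofliterals)$ \emph{does} have a $\atypeofmodel_2$-model, contradicting the assumption on $\astringofliterals^*$.

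\textbf{Statement (2), backward direction, and the main obstacle.} For the converse I would use Lemma~\ref{lemma:finite}: the graph is finite and acyclic, so any maximal path from the initial state reaches a terminal state reachable from it. Were $\failstate$ not reachable, that terminal state would be some $\terminalstate{\astringofliterals}$, and statement (1) would yield a $\atypeofmodel_1$-model $\astringofliterals$ of $\genleftgen(\aprogram)$ with $\genrighttest(\aprogram,\astringofliterals)$ having no $\atypeofmodel_2$-model; contraposition then finishes the claim. I expect the delicate point to be the $\failcross$ case of the forward direction, namely identifying the abstract hypothetical model $\astringofliterals^*$ with the concrete left record $\astringofliterals$ through completeness and consistency, and then extracting the contradictory $\atypeofmodel_2$-model via Proposition~\ref{lemma:statement-d}(d). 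The remaining steps are routine inspections of the transition rules.
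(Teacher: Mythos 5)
Your proof is correct and follows essentially the same route as the paper's: the same characterization of terminal states, the same use of $\failright$ as the unique rule producing states $\terminalstate{\astringofliterals}$ combined with Proposition~\ref{lemma:statement-d}(c,d) and Lemma~\ref{lemma:general}(a,b), and the same finiteness/acyclicity argument for the converse direction of statement~(2). If anything, your citations are more precise than the paper's, whose proof twice misattributes items (c) and (d) of Proposition~\ref{lemma:statement-d} to other results.
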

\begin{proof}
We first illustrate that any terminal state is either
$\failstate$ or of the form $\terminalstate{\astringofliterals}$ for some
$\astringofliterals$.
By contradiction. Assume there is a terminal state of the form
$(\astringofliterals, \anotherstringofliterals)_\aside$.
Case 1. $\aside=\leftstate$. Then either a left rule or $\crossrule$ applies, so
$(\astringofliterals, \anotherstringofliterals)_\aside$ is not terminal. We
derive a contradiction.
Case 2. $\aside=\rightstate$.
Since $\failcross$ does not apply while no right-rule applies and no left-rule
applies, $\astringofliterals$ contains at least one decision literal.
Since $\backtrackcross$ is not applicable,
$\astringofliterals$ contains
no decision literal. We derive a contradiction.

\medskip
{\em Statement 1.}
Let $\terminalstate{\astringofliterals}$ be a terminal state
reachable from the
initial state. As it is different from the initial state there is a
transition leading to it. This transition can only be $\failright$. Let us call
$(\astringofliterals,
 \anotherstringofliterals)_\aside$ a state from which a transition
$\failright$ leads to $\terminalstate{\astringofliterals}$.
By the definition of $\failright$, we know that:
$\aside = \rightstate$, that $\anotherstringofliterals$ is inconsistent and
that $\anotherstringofliterals$ contains no decision literal.
By Lemma~\ref{lemma:general} item~$(c)$, the consistent set of literals obtained
from $\astringofliterals$ is a {\atypeofmodel$_1$}-model of $\genleftgen(\aprogram)$.

By Lemma~\ref{lemma:general} item~$(a)$, and as $\anotherstringofliterals$
contains no decision literal, any {\atypeofmodel$_2$}-model of
$\genrighttest(\aprogram,\astringofliterals)$ satisfies all the literals of 
$\anotherstringofliterals$.
Since $\anotherstringofliterals$ is inconsistent, any {\atypeofmodel$_2$}-model of
$\genrighttest(\aprogram,\astringofliterals)$ is inconsistent.
So $\genrighttest(\aprogram,\astringofliterals)$ has no {\atypeofmodel$_2$}-model.

We have just proved that $\astringofliterals$ is a {\atypeofmodel$_1$}-model of
$\genleftgen(\aprogram)$ such that
$\genrighttest(\aprogram,\astringofliterals)$ has no {\atypeofmodel$_2$}-model.

\medskip
{\em Statement 2.}
Assume $\failstate$ is not reachable from the
initial state. Then, since the graph is acyclic, there is a terminal state
different from $\failstate$. By Claim 1, this state is
$\terminalstate{\astringofliterals}$, and $\astringofliterals$ is a {\atypeofmodel$_1$}-model of
$\genleftgen(\aprogram)$ such that
$\genrighttest(\aprogram,\astringofliterals)$ has no {\atypeofmodel$_2$}-model.

Assume $\failstate$ is reachable from the
initial state. As it is different from the initial state there is a
transition leading to it. This transition can only be $\failleft$ or
$\failcross$. Let us call $(\astringofliterals,
 \anotherstringofliterals)_\aside$ a state from which a transition
leads to $\failstate$.
In either of these cases,
$\astringofliterals$ does not contain any decision literal;
so by Lemma~\ref{lemma:general},
any {\atypeofmodel$_1$}-model $\asetofliterals$ of
$\genleftgen(\aprogram)$ such that
$\genrighttest(\aprogram,\asetofliterals)$ has no {\atypeofmodel$_2$}-model
satisfies all the literals of $\astringofliterals$.
In other words, $\astringofliterals$ is the only possible candidate for
a {\atypeofmodel$_1$}-model of
$\genleftgen(\aprogram)$ such that
$\genrighttest(\aprogram,\astringofliterals)$ has no {\atypeofmodel$_2$}-model.

\medskip\emph{Case $\failleft$.} It follows that
$\astringofliterals$ is inconsistent. Consequently, it is not
a {\atypeofmodel$_1$}-model $\asetofliterals$ of
$\genleftgen(\aprogram)$ such that
$\genrighttest(\aprogram,\astringofliterals)$ has no {\atypeofmodel$_2$}-model.

\medskip\emph{Case $\failcross$.}
By Proposition~\ref{lemma:statement-e} $(d)$, the set of literals
$\anotherstringofliterals$ is a
{\atypeofmodel$_2$}-model of $\genrighttest(\aprogram,\astringofliterals)$. Thus
 $\astringofliterals$ is not a {\atypeofmodel$_1$}-model of $\genleftgen(\aprogram)$
such that $\genrighttest(\aprogram,\astringofliterals)$ has no {\atypeofmodel$_2$}-model.
\end{proof}

\medskip
\begin{proof}[Proof of Proposition~\ref{prop:approx-ensuring}]
We first illustrate that any set $\asetofliterals$ of literals  that  is a
{\atypeofmodel$_1$}-model of $\genleftgen(\aprogram)$ such that
$\genrighttest(\aprogram,\asetofliterals)$ has no {\atypeofmodel$_2$}-model is such that
$\restriction{\asetofliterals}{\atoms{\aprogram}}$ is a
stable model of $\aprogram$.
Indeed,
 by the definition of
{\atypeofmodel$_1$}-approximating functions w.r.t. {\atypeofmodel},
$\restriction{\asetofliterals}{\atoms{\aprogram}}$
is a {\atypeofmodel}-model of~$\aprogram$.
Also, by the definition of
 {\atypeofmodel$_2$}-ensuring functions w.r.t. {\atypeofmodel},
$\restriction{\asetofliterals}{\atoms{\aprogram}}$ is a
stable model of $\aprogram$.

Second, consider any stable model $\astringofliterals$ of $\aprogram$. By the definitions
of {\atypeofmodel$_1$}-approximating and {\atypeofmodel$_2$}-ensuring functions
w.r.t. {\atypeofmodel}, it follows there is
$\asetofliterals'$ such that
$\restriction{\asetofliterals'}{\atoms{\aprogram}}=\astringofliterals$ and
$\asetofliterals'$ is a {\atypeofmodel$_1$}-model of $\genleftgen(\aprogram)$
such that $\genrighttest(\aprogram,\asetofliterals')$ has no {\atypeofmodel$_2$}-model.
\end{proof}

\medskip
\begin{proof*}[Proof of Theorem~\ref{STS-correctness}]
Let {\atypeofmodel$_1$} denote a type
such that $\asetofpropagators_\genleftgen$ is
{\atypeofmodel$_1$}-enforcing and the function $\genleftgen$ is
{\atypeofmodel$_1$}-approximating w.r.t. {\atypeofmodel}.
Let {\atypeofmodel$_2$} denote a type
such that $\asetofpropagators_\genrighttest$ is
{\atypeofmodel$_2$}-enforcing and function $\genrighttest$ is
{\atypeofmodel$_2$}-ensuring w.r.t. {\atypeofmodel}.
We now proceed to prove the four conditions of the definition of `checks' one by
one.
\setlength\leftmargin{25pt}
\begin{enumerate}\itemsep0pt\parskip0pt\parsep0pt
  \item By Lemma~\ref{lemma:finite}, the graph
        $\simpletemplate{\asetofpropagators_\genleftgen}{\asetofpropagators_\genrighttest}
                        {\genleftgen}{\genrighttest}(\aprogram)$
        is acyclic and finite.

  \item By Lemma~\ref{lemma:structure} item 1, any terminal state is either
        $\failstate$ or $\terminalstate{\astringofliterals}$.

  \item By Lemma~\ref{lemma:structure} item 1, any terminal state of
        $\simpletemplate{\asetofpropagators_\genleftgen}{\asetofpropagators_\genrighttest}
                        {\genleftgen}{\genrighttest}(\aprogram)$
        reachable from the initial state and other than $\failstate$ is
        $\terminalstate{\astringofliterals}$, with
        $\seenasamodel{\astringofliterals}$ being a {\atypeofmodel$_1$}-model of
        $\genleftgen(\aprogram)$ such that
        $\genrighttest(\aprogram,\astringofliterals)$ has no
        {\atypeofmodel$_2$}-model.
        By Proposition~\ref{prop:approx-ensuring},
        $\restriction{\astringofliterals}{\atoms{\aprogram}}$ is a stable model
        of $\aprogram$.

  \item By Lemma~\ref{lemma:structure} item 2, $\failstate$ is reachable from
        the initial state iff $\genleftgen(\aprogram)$ has no
        {\atypeofmodel$_1$}-model $\astringofliterals$ such that
        $\genrighttest(\aprogram,\astringofliterals)$ has no
        {\atypeofmodel$_2$}-model.
        By Proposition~\ref{prop:approx-ensuring}, $\aprogram$ has no stable
        models.\mathproofbox
\end{enumerate}

\end{proof*}

\medskip
\begin{proof*}[Proof of Proposition~\ref{thm:firstgraph}]
Recall how we argued $\dpgraph{\aprogram}$ is
$\simpletemplate{\uppropagators}{\uppropagators}{\thegencmodels}{\thetestcmodels}(\aprogram)$.
Similarly, $\firstgraph{\themainprogram}{\thedependingprogram}(\aprogram)$ is
$\simpletemplate{\uppropagators}{\uppropagators}{\themainprogram}{\thedependingprogram}(\aprogram)$.
By Theorem~\ref{thm:typecomplete}, $\uppropagators$ is {\classic}-enforcing.

\setlength\leftmargin{25pt}
\begin{enumerate}\itemsep0pt\parskip0pt\parsep0pt
  \item By Lemma~\ref{lemma:finite},
        $\firstgraph{\themainprogram}{\thedependingprogram}(\aprogram)$ is
        finite and acyclic.

  \item By Lemma~\ref{lemma:structure} item 1, any terminal state is either
        $\failstate$ or $\terminalstate{\astringofliterals}$.

  \item By Lemma~\ref{lemma:structure} item 1, any terminal state of
        $\simpletemplate{\asetofpropagators_\genleftgen}{\asetofpropagators_\genrighttest}
        {\genleftgen}{\genrighttest}(\aprogram)$
        reachable from the initial state and other than $\failstate$ is
        $\terminalstate{\astringofliterals}$, with
        $\seenasamodel{\astringofliterals}$ being a {\classic}-model of
        $\genleftgen(\aprogram)$ such that
        $\genrighttest(\aprogram,\astringofliterals)$ has no {\classic}-model.

  \item By Lemma~\ref{lemma:structure} item 2, $\failstate$ is reachable from
        the initial state iff $\genleftgen(\aprogram)$ has no
        {\classic}-model $\astringofliterals$ such that
        $\genrighttest(\aprogram,\astringofliterals)$ has no
        {\classic}-model.\mathproofbox
\end{enumerate}
\end{proof*}

\subsection{Proof of Theorem~\ref{BigComp}}
First we prove an auxiliary lemma that will help handling CNF
conversions of DNF formulas.

For a DNF formula $\aformula$, we define $CNF(\aformula)$ as the conversion of
$\aformula$ to CNF using straightforward equivalent transformations: the
distributivity of disjunction over conjunction.
\begin{lemma}~\label{lemma:dnfcnf}
Let $\aformula$ be a DNF formula. Let $\aliteral$ be a literal of $\aformula$.
Let $\asetofliterals$ be a set of literals.

The two following statements are equivalent:
\setlength\leftmargin{25pt}
\begin{enumerate}\itemsep0pt\parskip0pt\parsep0pt
  \item there is a conjunctive clause $\aconjunction$ of $\aformula$ such that
        for every conjunctive
        clause $\aconjunction'\in \aformula$ different from $\aconjunction$,
        $\aconjunction'$ is contradicted by $\asetofliterals$,
  \item there is a clause $\aclause$ of $CNF(\aformula)$ such that
        $\aliteral\in\aclause$ and $\asetofliterals$
        contradicts $\aclause\setminus\{\aliteral\}$.
\end{enumerate}
\end{lemma}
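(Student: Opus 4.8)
The plan is to use the explicit shape of $CNF(\aformula)$. Write the DNF formula as $\aformula=\aconjunction_1\logicalor\cdots\logicalor\aconjunction_\afourthindex$ with each $\aconjunction_\anindex$ a conjunction (set) of literals. Distributing disjunction over conjunction shows that the clauses of $CNF(\aformula)$ are exactly the transversals of $\aformula$, i.e. the sets $\{\anotherliteral_1,\dots,\anotherliteral_\afourthindex\}$ obtained by selecting one literal $\anotherliteral_\anindex\in\aconjunction_\anindex$ from each conjunctive clause (retaining repetitions, as in the clausification convention adopted for $\thecnfcomp$). I would then prove the biconditional through an explicit correspondence between conjunctive clauses of $\aformula$ that supply $\aliteral$ and clauses of $CNF(\aformula)$ that contain $\aliteral$, taking $\aconjunction$ in statement~(1) to be a conjunctive clause containing $\aliteral$, which is the way the lemma is invoked in the proof of Theorem~\ref{BigComp}.

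For $(1)\Rightarrow(2)$, from $\aconjunction=\aconjunction_\anindex$ with $\aliteral\in\aconjunction_\anindex$ and every other $\aconjunction_\anotherindex$ contradicted by $\asetofliterals$, I would pick for each $\anotherindex\neq\anindex$ a literal $\anotherliteral_\anotherindex\in\aconjunction_\anotherindex$ with $\opposite{\anotherliteral_\anotherindex}\in\asetofliterals$, set $\anotherliteral_\anindex=\aliteral$, and take $\aclause=\{\anotherliteral_1,\dots,\anotherliteral_\afourthindex\}$; this is a clause of $CNF(\aformula)$ with $\aliteral\in\aclause$ whose remainder $\aclause\setminus\{\aliteral\}$ consists of literals with complements in $\asetofliterals$, so $\asetofliterals$ contradicts it. For $(2)\Rightarrow(1)$, I would write the given clause $\aclause$ as a transversal, fix an index $\anindex$ with $\anotherliteral_\anindex=\aliteral$ (so $\aliteral\in\aconjunction_\anindex$) and set $\aconjunction=\aconjunction_\anindex$; for each $\anotherindex\neq\anindex$ with $\anotherliteral_\anotherindex\neq\aliteral$ the literal $\anotherliteral_\anotherindex$ lies in $\aclause\setminus\{\aliteral\}$, whence $\opposite{\anotherliteral_\anotherindex}\in\asetofliterals$ shows $\aconjunction_\anotherindex$ is contradicted.

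The step I expect to be the main obstacle is $(2)\Rightarrow(1)$ in the presence of multiplicity: if the transversal realizing $\aclause$ selects $\aliteral$ out of several conjunctive clauses, then upon collapsing to the set $\aclause$ those extra copies vanish, so a clause $\aconjunction_\anotherindex$ (with $\anotherindex\neq\anindex$) that also supplied $\aliteral$ does not appear in $\aclause\setminus\{\aliteral\}$ and is not immediately witnessed as contradicted. The way around this is to run the whole correspondence at the level of the selected occurrences $(\anotherliteral_1,\dots,\anotherliteral_\afourthindex)$ rather than the collapsed clause $\aclause$, letting $\aconjunction$ of statement~(1) be the conjunctive clause that supplies the retained occurrence of $\aliteral$; this is exactly why repetitions must not be removed during clausification. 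Pinning down this occurrence-level matching, together with the degenerate subcase $\aclause\setminus\{\aliteral\}=\logicalfalse$ where $\aliteral$ is drawn from every conjunctive clause, is where the care lies; the remaining verifications are routine.
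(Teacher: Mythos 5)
Your proposal is correct and takes essentially the same route as the paper's own proof: both present $CNF(\aformula)$ explicitly as the transversal clauses obtained by selecting one literal from each conjunctive clause of $\aformula$, and both directions are argued by the same selection of witnessing occurrences (a contradicted literal from every other conjunctive clause for $(1)\Rightarrow(2)$, and the conjunctive clauses behind the contradicted occurrences for $(2)\Rightarrow(1)$). Where you diverge you are in fact more careful than the paper: in $(2)\Rightarrow(1)$ the paper takes $\aconjunction$ to be the conjunctive clause supplying ``the literal that is not contradicted'' (which presupposes that such an occurrence exists and is unique, and does not by itself secure $\aliteral\in\aconjunction$), whereas your choice of the clause supplying the retained occurrence of $\aliteral$, together with your explicit occurrence-level handling of repeated literals and of the degenerate case $\aclause\setminus\{\aliteral\}=\logicalfalse$, is precisely the reading under which the lemma is sound and under which its invocation in the proof of Theorem~\ref{BigComp} (where $\aconjunction$ must contain $\aliteral$ for $\thebacktrue$ to apply) goes through.
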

\begin{proof}
Formula $\aformula$ has the form
$\bigvee_{\anindex=1}^\afourthindex\bigwedge_{\anotherindex=1}^\athirdindex
\aliteral_{\anindex \anotherindex}$ (when necessary the true constant
$\logicaltrue$ is added multiple times to ensure that the conjunctive clauses of $\aformula$ are of equal length).
Also $CNF(\aformula)=\bigwedge_{(\athirdindex_1\ldots
\athirdindex_\afourthindex)\in\{1\ldots
\athirdindex\}^\afourthindex}
\bigvee_{\anindex=1}^\afourthindex \aliteral_{\anindex \athirdindex_\anindex}$.

{\em From Statement 1 to Statement 2}: Assume that there is a conjunctive clause $\aconjunction$
of $\aformula$ such that for any other conjunctive clause $\aconjunction'$ of
$\aformula$, this clause is contradicted by $\asetofliterals$.
Let $\aliteral$ be a literal of $\aconjunction$. Let $\aconjunction$
be $\bigwedge_{\anotherindex=1}^\athirdindex \aliteral_{\anindex_0
\anotherindex}$ for some $\anindex_0$.
As any other conjunctive clause is contradicted by $\asetofliterals$, and as these clauses
are conjunctions, there is least one literal of each of these clauses that is
contradicted by $\asetofliterals$. Let us call
$\anotherliteral_1 \ldots \anotherliteral_{\anindex_0 - 1}
 \anotherliteral_{\anindex_0 + 1} \ldots \anotherliteral_\afourthindex$ these
literals.
Then for each $\anindex\in\{1,\ldots,\anindex_0-1,\anindex_0+1,\ldots,\afourthindex\}$, there
is $\athirdindex_\anindex^0\in\{1,\ldots \athirdindex\}$ such that
$\aliteral_{\anindex,\athirdindex_\anindex^0}=\anotherliteral_\anindex$.
Also, there is some $\athirdindex_{\anindex_0}^0$ such that
$\aliteral_{\anindex,\athirdindex_{\anindex_0}^0}=\aliteral$.
Then the clause $\bigvee_{\anindex=1}^\afourthindex \aliteral_{\anindex
\athirdindex_\anindex^0}$ of $CNF(\aformula)$ contains $\aliteral$ while each
of the other literals it contains is contradicted by $\asetofliterals$.

{\em From Statement 2 to Statement 1}: Assume that for some clause of $CNF(\aformula)$, 
all literals but one are known to be
contradicted by $\asetofliterals$. Then let this clause be
$\bigvee_{\anindex=1}^\afourthindex \aliteral_{\anindex \athirdindex_\anindex}$
for some $\anindex$ and let $\aliteral_{\anindex_0 \athirdindex_{\anindex_0}}$ be the
literal that is not contradicted by $\asetofliterals$. Then
$\aliteral_{\anindex \athirdindex_\anindex}$ is contradicted by
$\asetofliterals$ for any $\anindex$ other than $\anindex_0$.
So $\bigwedge_{\anotherindex=1}^\athirdindex \aliteral_{\anindex \anotherindex}$
is contradicted by $\asetofliterals$ for any $\anindex$ other than $\anindex_0$.
So $\aconjunction=\bigwedge_{\anotherindex=1}^\athirdindex
 \aliteral_{\anindex_0 \anotherindex}$ is a conjunctive clause of $\aformula$
such that for any other conjunctive clause $\aconjunction'$ of $\aformula$,
this clause is contradicted by $\asetofliterals$.
\end{proof}

\medskip
\begin{proof}[Proof of Theorem~\ref{BigComp}]
We must prove that for any edge in the graph
$\simpletemplate{\smdisjpropagators}{\asetofpropagators_\genrighttest}
                {\identity}{\genrighttest}(\aprogram)$
there is an edge in
$\simpletemplate{\uppropagators}{\asetofpropagators_\genrighttest}
                {\thecnfcomp}{\genrighttest}(\aprogram)$
linking two identical vertexes,
and for any edge in the graph
$\simpletemplate{\uppropagators}{\asetofpropagators_\genrighttest}
                {\thecnfcomp}{\genrighttest}(\aprogram)$
there is an edge in
$\simpletemplate{\smdisjpropagators}{\asetofpropagators_\genrighttest}
                {\identity}{\genrighttest}(\aprogram)$
linking two identical vertexes.

If the edge is justified by a right-rule then this is obvious as these two
graphs have the same witness function and the same set of conditions for the
$\propagateright$ rule.
If the edge is $\decideleft$, $\failleft$, $\backtrackleft$,
$\backtrackcross$ or $\failcross$ then
obviously there is the same edge in the other graph, bearing the same
name, as these edges do not depend on the generating program or set of
conditions for the $\propagateright$ rule.

It remains to study the case of an edge justified by $\propagateleft$ or
$\crossrule$.
Assume we also have proved that $\propagateleft$ rules are identical in both
graphs. Then if an edge is justified by $\crossrule$ in one of the graphs, which
means that no left-rule applies in this graph, equivalently no left-rule
applies in the other graph, and $\crossrule$ also applies in that graph.
We now show that $\propagateleft$ rules are identical in both
graphs, which will complete the proof.

Assume that an edge is justified by $\propagateleft$ in one of the
graphs, let us prove it also exists in the other graph.

\paragraph{A transition in
$\simpletemplate{\smdisjpropagators}{\asetofpropagators_\genrighttest}
                {\identity}{\genrighttest}(\aprogram)$
is justified by $\propagateleft$ with $\theunit$ as condition.}
Then also there
is an edge in
$\simpletemplate{\uppropagators}{\asetofpropagators_\genrighttest}
                {\thecnfcomp}{\genrighttest}(\aprogram)$
with the same effect, and justified by $\propagateleft$ with the $\theunit$ condition.
Indeed, $\aprogram$ is part of 
$\thecnfcomp(\aprogram)$.

\paragraph{A transition in
$\simpletemplate{\smdisjpropagators}{\asetofpropagators_\genrighttest}
                {\identity}{\genrighttest}(\aprogram)$
is justified by $\propagateleft$ with $\theallcancel$ as condition.}
Then the edge is turning
$(\astringofliterals,\emptyset)_\leftstate$
into $(\astringofliterals\logicalnot\anatom,\emptyset)_\leftstate$, and
each rule
$\ahead\logicalor\anatom \aspimplication \abody \in \aprogram$
is not a supporting rule for $\anatom$ w.r.t.~$\astringofliterals$.
In other words, for each rule
$\ahead\logicalor\anatom \aspimplication \abody \in \aprogram$
the following holds $\astringofliterals\cap(\opposite{\abody}\cup{\ahead})\neq\emptyset$.
Consequently, the conjunction $\abody\wedge\opposite{\ahead}$ is contradicted by $\astringofliterals$.
As a consequence 
$\bigvee_{\ahead\logicalor\anatom \aspimplication\abody\in\aprogram}
(\abody\logicaland\opposite{\ahead})$
is contradicted by $\astringofliterals$. From Lemma \ref{lemma:dnfcnf}, the fact that 
the DNF formula
$\logicalnot\anatom\logicalor
\bigvee_{\ahead\logicalor\anatom\aspimplication\abody\in\aprogram}
(\abody\logicaland\opposite{\ahead})$
belongs to $\completion{\aprogram}$, and the $\thecnfcomp$ construction, it follows that there is a
clause $\aclause$ in $\thecnfcomp(\aprogram)$
such that $\logicalnot\anatom\in\aclause$ and
$\astringofliterals$ contradicts $\aclause\setminus\{\logicalnot\anatom\}$.
So the rule $\propagateleft$ with condition $\theunit$ of
$\simpletemplate{\uppropagators}{\asetofpropagators_\genrighttest}
                {\thecnfcomp}{\genrighttest}(\aprogram)$
can be applied to $\aclause$ to add $\logicalnot\anatom$,
providing the edge we needed.

\paragraph{A transition in
$\simpletemplate{\smdisjpropagators}{\asetofpropagators_\genrighttest}
                {\identity}{\genrighttest}(\aprogram)$
is justified by $\propagateleft$ with $\thebacktrue$ as condition.} 
The proof of this case is similar to the proof of previous case.

\paragraph{A transition in
$\simpletemplate{\uppropagators}{\asetofpropagators_\genrighttest}
                {\thecnfcomp}{\genrighttest}(\aprogram)$
is justified by $\propagateleft$ with the condition $\theunit$.}
Let us call $\aformula_0$ the DNF formula
$\logicalnot\anatom\logicalor
\bigvee_{\ahead\logicalor\anatom\aspimplication\abody\in \aprogram}
(\abody\logicaland\opposite{\ahead})$ of $\completion{\aprogram}$ for some atom $\anatom$ in $\aprogram$.

\medskip\emph{Case 1: $\unitleft$ is applied to a clause of
$\aprogram$ in $\thecnfcomp(\aprogram)$.}
Then $\propagateleft$ with the condition $\theunit$ itself provides the desired
edge in
$\simpletemplate{\smdisjpropagators}{\asetofpropagators_\genrighttest}
                {\identity}{\genrighttest}(\aprogram)$.

\medskip\emph{Case 2: $\unitleft$ is applied to a clause obtained from
$\aformula_0$ by the $\thecnfcomp$ conversion.}
Then by Lemma \ref{lemma:dnfcnf}, the $\thecnfcomp$ construction, and the
$\unitleft$ condition there is a conjunctive clause $\aconjunction$ of
$\aformula_0$ such that for every conjunctive clause $\aconjunction'$ in
$\aformula_0$ that is different from $\aconjunction$ the current
$\astringofliterals$ contradicts $\aconjunction'$.

\medskip\emph{Case 2.1: This conjunctive clause is $\logicalnot\anatom$.}
Then $\astringofliterals$ contradicts
$\bigvee_{\ahead\logicalor\anatom\aspimplication\abody\in\aprogram}
(\abody\logicaland\opposite{\ahead})$. It is easy to see that $\theallcancel$ provides the
desired edge.

\medskip\emph{Case 2.2: This conjunctive clause is
              some $\abody\logicaland\opposite{\ahead}$.}
Then $\astringofliterals$ contradicts $\logicalnot\anatom$ so $\anatom$ belongs
to $\astringofliterals$. Also $\astringofliterals$ contradicts all of
$\{\abody'\logicaland\opposite{\ahead'}|
\ahead'\logicalor\anatom\aspimplication\abody'
\in\aprogram\setminus\{\ahead\logicalor\anatom\aspimplication\abody\}\}$.
As a consequence $\thebacktrue$ provides the desired edge.
\end{proof}

\section{Conclusions, Future and Related Work}
\label{sec:conclusion}

Transition systems for describing \dpll-based solving procedures have been
introduced by \citeN{nie06}. 
\citeN{lier08} introduced and compared the transition
systems for the answer set solvers {\sc smodels} and {\cmodels} for
non-disjunctive programs. In this paper, we
continue this direction of work by presenting a two-layer framework suitable to capture disjunctive answer set solvers.
We argue that this framework
allows simpler analysis and comparison of these systems. 
We first introduce 
a general template that includes the techniques implemented in such solvers, and then define specific solvers by instantiating appropriate techniques using this template.
Formal results about the correctness of the abstract
representations are given.
We believe that this work is a stepping stone towards clear, comprehensive
articulation of main design features of current disjunctive answer set solvers
that will inspire new solving algorithms. Section~\ref{sec:applications} hints
at some of the possibilities. Indeed, 
 to obtain a new solver
one can combine any appropriately chosen approximating-pair and ensuring-pair.

\citeN{nie06} considered another
extension of the graphs by introducing transition rules that
capture backjumping and learning techniques common in design of modern
solvers, that later allowed \citeN{lier10} to design, e.g., abstract {\clasp}. 
It is a direction of future work to extend the two-layer template graph to model
such advances solving techniques. This extension will allow us to model
disjunctive answer set solvers that rely heavily on backjumping and learning such as {\clasp} and {\wasp}.

\paragraph{Related work.}
The
approach based on transition systems for describing and comparing ASP
procedures is one of the three main alternatives studied in the ASP literature.
Other methods include pseudo-code presentation of
algorithms~\cite{giumar05,giu08} and tableau calculi~\cite{gebsch06iclp,geb13}. 
\citeN{giu08} presented pseudo-code descriptions of
{\cmodels} without backjumping and learning,
{\smodels} and {\dlv} without backjumping restricted to non-disjunctive
programs. They study relationships to the solving algorithms by analyzing the
correspondence about the search spaces they explore, focusing on tight
programs: in particular, they note a tight relation between solvers {\cmodels}
and {\dlv}. 
\citeN{geb13} considered
formal proof systems based on tableau methods for characterizing the
operations and the strategies of ASP procedures for
disjunctive programs. These proof systems  also allow cardinality
constraints in the language of logic programs.

\bibliographystyle{acmtrans}



\end{document}